\def\pi{\uppi}
\newcommand{\rrvert}{\vert}
\newcommand{\rrVert}{\Vert}
\newcommand{\llvert}{\vert}
\newcommand{\llVert}{\Vert}
\def\mathbbm{\mathbh}
\newcommand{\argmin}{\mathop{\arg\min}}
\newcommand{\calP}{\mathcal{P}}
\newcommand{\calId}{\mathcal{I}_d}
\newcommand{\sgn}{\operatorname{sgn}}
\newcommand{\tr}{\operatorname{tr}}
\def\wh{\widehat}
\def\wt{\widetilde}
\def\T{\bar T}
\def\Tp{{\bar T}^{\perp}}
\def\RR{\mathbb R}
\def\EE{\mathbb{E}}
\def\PP{\mathbb{P}}
\def\wh{\widehat}
\def\sumsum{\mathop{\sum\sum}}
\def\calF{\mathcal{F}}
\def\calS{\mathcal{S}}
\newtheorem{theorem}{Theorem}[section]
\newtheorem{lemma}[theorem]{Lemma}
\newtheorem{proposition}[theorem]{Proposition}
\newtheorem{corollary}[theorem]{Corollary}
\begin{document}
\begin{frontmatter}

\title{Adaptive estimation of the copula correlation matrix for
semiparametric elliptical copulas}
\runtitle{Adaptive estimation of elliptical copula correlation matrix}

\begin{aug}
\author[A]{\inits{M.}\fnms{Marten} \snm{Wegkamp}\thanksref{A}\ead[label=e1]{marten.wegkamp@cornell.edu}}
\and
\author[B]{\inits{Y.}\fnms{Yue} \snm{Zhao}\corref{}\thanksref{B}\ead[label=e2]{yz453@cornell.edu}}
\runauthor{M. Wegkamp and Y. Zhao}
\address[A]{Department of Mathematics and Department of Statistical Science,
Cornell University, Ithaca, NY 14853, USA. \printead{e1}}
\address[B]{Department of Statistical Science, Cornell University, Ithaca, NY 14853, USA.\\
\printead{e2}}
\end{aug}

%
\received{\smonth{1} \syear{2014}}
%
\revised{\smonth{9} \syear{2014}}

%
\begin{abstract}
We study the adaptive estimation of copula correlation matrix $\Sigma$
for the semi-parametric elliptical copula model. In this context, the
correlations are connected to Kendall's tau through a sine function
transformation. Hence, a natural estimate for $\Sigma$ is the plug-in
estimator $\widehat\Sigma$ with Kendall's tau statistic. We first
obtain a sharp bound on the operator norm of $\widehat\Sigma- \Sigma
$. Then we study a factor model of $\Sigma$, for which we propose a
refined estimator $\widetilde\Sigma$ by fitting a low-rank matrix plus
a diagonal matrix to $\widehat\Sigma$ using least squares with a
nuclear norm penalty on the low-rank matrix. The bound on the operator
norm of $\widehat\Sigma- \Sigma$ serves to scale the penalty term,
and we obtain finite sample oracle inequalities for $\widetilde\Sigma$.
We also consider an elementary factor copula model of $\Sigma$, for
which we propose closed-form estimators. All of our estimation
procedures are entirely data-driven.
\end{abstract}

%
\begin{keyword}
\kwd{correlation matrix}
\kwd{elliptical copula}
\kwd{factor model}
\kwd{Kendall's tau}
\kwd{nuclear norm regularization}
\kwd{oracle inequality}
\kwd{primal-dual certificate}
\end{keyword}
%
\end{frontmatter}

\section{Introduction}\label{sec1}

\subsection{Background}
\label{sec:Background}

A popular model for high dimensional data is the \emph{semi-parametric
elliptical copula model} \cite
{Embrechts03,Kluppelberg09,Kluppelberg08,NIPS2012_0380}, the family of
distributions whose dependence structures are specified by parametric
elliptical copulas but whose marginal distributions are left
unspecified. The elliptical copula of a $d$-variate distribution from
the semi-parametric elliptical copula model is uniquely characterized
by a \textit{characteristic generator} $\phi$ and a \textit{copula
correlation matrix} $\Sigma\in\RR^{d\times d}$. We refer the readers to
Appendix \ref{sec:def_elliptical_distribution} for a more detailed
discussion about these concepts. For simplicity of presentation, we
will make the blanket assumption that all random vectors we consider
have continuous marginals.

The semi-parametric elliptical copula model includes numerous families
of distributions of popular interest. For instance, we recover from
this model distributions with Gaussian copulas, sometimes referred to
in recent literature as the nonparanormal model \cite{Liu09}, by
choosing the particular characteristic generator $\phi(t)=\exp(-t/2)$.

Throughout the paper, we assume that the random vector $X\in\RR^d$
follows a distribution from the semi-parametric elliptical copula
model, and in particular we let $X$ have copula correlation matrix
$\Sigma$. We let $X^1,\ldots,X^n\in\RR^d$, with $X^i=(X^i_1,\ldots,X^i_d)^T$, be a sequence of independent copies of $X$. We recall the
formulas for (the population version of) Kendall's tau between the
$k$th and $\ell$th coordinates,
%
\begin{equation}
\label{tau} \tau_{k\ell} = \EE \bigl[\sgn\bigl(X^1_k-X^2_k
\bigr)\sgn\bigl(X^1_{\ell
}-X^2_{\ell
}
\bigr) \bigr],
\end{equation}
and the corresponding Kendall's tau statistic,
%
\begin{equation}
\label{tauhat} \wh\tau_{k\ell} = \frac{2}{n(n-1)} \sumsum_{1\le i<j\le n}
\bigl[ \sgn \bigl(X^i_{k}-X^j_{k}
\bigr)\sgn\bigl(X^i_{\ell}-X^j_{\ell}
\bigr) \bigr].
\end{equation}
We let (the population version of) the Kendall's tau matrix $T$ have entries
\[
[T]_{k\ell} = \tau_{k\ell}\qquad \mbox{for all } 1\le k,\ell\le d,
\]
and estimate $T$ using the empirical Kendall's tau matrix $\wh T$ with entries
%
\begin{equation}
[\wh T]_{k\ell} = \wh\tau_{k\ell} \qquad\mbox{for all } 1\le k,\ell \le
d. \label{eq:Kendall_tau_matrix_est}
\end{equation}
We note that $\wh T$ is a matrix $U$-statistic because it can be written as
\[
\wh T = \frac{2}{n(n-1)} \sumsum_{1\le i<j\le n} \bigl[ \sgn
\bigl(X^i-X^j\bigr) \sgn\bigl(X^i-X^j
\bigr)^T \bigr].
\]
In addition, we note the basic facts that $T$ is the correlation matrix
of the centered random vector $\operatorname{sgn}(X^1-X^2)$ and so in
particular is positive semidefinite, that $\wh T$, as a scaled sum of
rank-one positive semidefinite matrices $\sgn(X^i-X^j)\sgn(X^i-X^j)^T$
for $1\le i<j\le n$, is also positive semidefinite, and that $\EE[\wh T]=T$.

For the semi-parametric elliptical copula model, we can relate the
elements of the copula correlation matrix $\Sigma$ to the elements of
the Kendall's tau matrix $T$ independently of the characteristic
generator via the formula
%
\begin{equation}
\label{R_tau} \Sigma= \sin \biggl( \frac{\pi}{2} T \biggr);
\end{equation}
see \cite{Fang02,Hult02,Kendall90,Kruskal58,Lindskog03}. Here and
throughout the paper, we use the convention that the sign, sine and
cosine functions act component-wise when supplied with a vector or a
matrix as their argument; hence equation (\ref{R_tau}) specifies that
\[
[\Sigma]_{k\ell} = \sin \biggl( \frac{\pi}{2} \tau_{k\ell}
\biggr) \qquad\mbox{for all } 1\le k,\ell\le d.
\]
This simple and elegant relationship has contributed to the popularity
of elliptical distributions and the semi-parametric elliptical copula
model, and has led to the widespread application of the \textit{plug-in
estimator} $\wh\Sigma$ of $\Sigma$ given by
%
\begin{equation}
\label{R_hattau} \wh\Sigma= \sin \biggl( \frac{\pi}{2} \wh T \biggr);
\end{equation}
see, for instance, \cite
{Demarta05,Embrechts03,Kluppelberg09,Kluppelberg08,Liu12,Yuan12}. Here,
we briefly review some recent advances involving the plug-in estimator.
\cite{Kluppelberg09} studies the property of $\wh\Sigma$ as an
estimator of $\Sigma$ in the \textit{asymptotic} setting with the
dimension $d$ fixed under the assumption of an \textit{elliptical
copula correlation factor model}, whose precise definition will be
introduced later in Section~\ref{sec:proposed_research}. For
distributions with Gaussian copulas, \cite{Liu12} employs $\wh\Sigma$
to study the estimation of precision matrix, that is, $\Sigma^{-1}$,
under a sparsity assumption on $\Sigma^{-1}$, and a sharp bound on the
element-wise $\ell_{\infty}$ norm of $\wh\Sigma- \Sigma$ is central
to their analysis.\footnote{We note that, under the setting of
distributions with Gaussian copulas, analogous to equation (\ref
{R_tau}), we also have $\Sigma=2\sin((\pi/6)R)$ for $R$ the matrix of
(the population version of) Spearman's rho. Inspired by this
observation, both \cite{Liu12} and \cite{Xue12b} employ $\wh\Sigma
^{\rho
}$, a variant of $\wh\Sigma$ using Spearman's rho statistic, to study
the estimation of precision matrix under this setting. In contrast to
Kendall's tau, however, once we generalize from distributions with
Gaussian copulas to the semi-parametric elliptical copula model,
Spearman's rho is no longer invariant within the family of
distributions with the same copula correlation matrix \cite{Hult02},
that is, a simple relationship analogous to equation (\ref{R_tau})
ceases to exist for Spearman's rho in this wider context. Hence, we do
not pursue an estimation procedure using Spearman's rho.}

\subsection{Proposed research}
\label{sec:proposed_research}

We aim to present in this paper precise estimators of the copula
correlation matrix $\Sigma$.

In Section~\ref{sec:operator_norm}, we focus on the plug-in estimator
$\wh\Sigma$, and present a sharp (upper) bound on the operator norm of
$\wh\Sigma-\Sigma$, which we denote by $\|\wh\Sigma-\Sigma\|_2$.
To the
best of our knowledge, our bound on $\|\wh\Sigma-\Sigma\|_2$ is new,
even for distributions with Gaussian copulas. Here, we list some of the
potential applications of this bound. First, it has often been observed
that the plug-in estimator $\wh\Sigma$ is not always positive
semidefinite \cite{Demarta05,Kluppelberg09}. This not only is a
discomforting problem by itself but also limits the potential
application of the plug-in estimator; for example, certain Graphical
Lasso algorithms \cite{Friedman08} may fail on input that is not
positive semidefinite. We refer the readers to \cite{Xue12a} for a more
detailed discussion and another example involving the Markowitz
portfolio optimization problem. Our bound on $\|\wh\Sigma-\Sigma\|_2$
will precisely quantify the extent to which the nonpositive
semidefinite problem may happen; for instance, if the smallest
eigenvalue of $\Sigma$ exceeds the bound on $\|\wh\Sigma-\Sigma\|_2$,
then $\wh\Sigma$ will be positive definite.

As we were completing this manuscript, we became aware of a result by
Fang Han and Han Liu in \cite{HCL13} that is similar to (our)
inequality (\ref{eq:Sigma_operator_norm}) in Theorem~\ref
{thmm:main_operator_norm_bound}. In deriving their result, they also
employed matrix concentration inequalities to arrive at a version of
inequality (\ref{eq:T_hat_operator_norm_1}); then they invoked
different proof techniques to arrive at a version of Lemma~\ref
{lemma:first_order}, which led to their version of inequality (\ref
{Lipschitz_Kendall}). Our work is independent.

A second application of the bound on $\|\wh\Sigma-\Sigma\|_2$ appears
in Section~\ref{sec:factor_model}. Here, we study the elliptical copula
correlation factor model, which postulates that the copula correlation
matrix $\Sigma$ of $X$ admits the decomposition
%
\begin{equation}
\label{factor_general} \Sigma= \Theta^* + V^*
\end{equation}
for some low-rank or nearly low-rank, positive semidefinite matrix
$\Theta^*\in\RR^{d\times d}$ and some diagonal matrix $V^*\in\RR
^{d\times d}$ with nonnegative diagonal entries. In this case, if
$\Theta^*$ admits the decomposition $\Theta^*=L L^T$ for some $L\in
\RR
^{d\times r}$, then there exists elliptically distributed $\xi\in
\mathcal{E}_{r+d}(0,I_{r+d},\phi)$ (here we invoke the notation of
Definition~\ref{def:elliptical_distribution}) for the $(r+d)\times
(r+d)$ identity matrix $I_{r+d}$ and some characteristic generator
$\phi
$ such that $X$ and $(L,V^{*1/2})\xi$ have the same copula. Here, we
note that the components of $\xi$ are merely uncorrelated, instead of
independent as in the case for standard factor analysis where normality
is assumed. Consideration of the potential dimension reduction offered
by the factor model and the fact that the diagonal elements of the
target copula correlation matrix $\Sigma$ are all equal to one leads us
to propose a refined estimator $\widetilde{\Sigma}$ of $\Sigma$. In
short, we fit the off-diagonal elements of a low-rank matrix to the
off-diagonal elements of $\wh\Sigma$ using least squares with a
nuclear norm penalty on the low-rank matrix; then we obtain the refined
estimator $\wt\Sigma$ from the low-rank matrix by setting the diagonal
elements of the latter to one. The bound on $\|\wh\Sigma-\Sigma\|_2$
will serve to scale the penalty term. As we will discuss in detail in
Section~\ref{sec:refined_estimator}, our problem is a variant of the
matrix completion problem, but in contrast to the existing literature,
the special diagonal structure of $V^*$ enables us to perform much more
precise analysis. In the end, our oracle inequality for $\wt\Sigma$
holds under a single, very mild condition on the low-rank component
$\Theta^*$, and balances the approximation error with the estimation
error, with the latter roughly proportional to the number of parameters
in the model divided by the sample size.

As a warm-up to the general setting above, we will also consider the
\textit{elementary factor copula model}, a special instance of the
elliptical copula correlation factor model in which $V^*$ is
proportional to the $d\times d$ identity matrix $I_d$. For this model,
we will propose and study closed-form estimators.

Throughout our studies, we will provide entirely data-driven estimation
procedures involving explicit constants and measurable quantities. In
addition, we will establish positive semidefinite versions of the
plug-in estimator, the closed-form estimator and the refined estimator
of the copula correlation matrix, with minimal loss in performance.

\subsection{Notation}

For any matrix $A$, we will use $[A]_{k\ell}$ to denote the $k,\ell$th
element of $A$ (i.e., the entry on the $k$th row and $\ell$th column of
$A$). For a vector $x\in\RR^{m}$, we denote by $\operatorname{diag}^{\star
}(x)\in\RR^{m\times m}$ the diagonal matrix with $[\operatorname
{diag}^{\star
}(x)]_{ii}=x_i$ for $i=1,\ldots,m$. We let the constant $\alpha$ with
$0<\alpha<1$ be arbitrary, but typically small; we will normally bound
stochastic events with probability at least $1-\mathcal{O}(\alpha)$. We
let $I_d$ denote the identity matrix in $\RR^{d\times d}$. In this
paper, the majority of the vectors will belong to $\RR^{d}$, and the
majority of the matrices will be symmetric and belong to $\RR^{d\times
d}$; notable exceptions to the latter rule include some matrices of
left or right singular vectors. For notational brevity, we will not
always explicitly specify the dimension of a matrix when such
information could be inferred from the context. The Frobenius inner
product $ \langle\cdot,\cdot \rangle$ on the space of matrices is defined
as $ \langle A,B \rangle=\tr(A^T B)$ for commensurate matrices $A,B$. For
norms on matrices, we use $\|\cdot\|_2$ to denote the operator norm,
$\|
\cdot\|_*$ the nuclear norm (i.e., the sum of singular values), $\|
\cdot
\|_F$ the Frobenius norm resulting from the Frobenius inner product, $\|
\cdot\|_{\infty}$ the element-wise $\ell_{\infty}$ norm (i.e., $\|
A\|
_{\infty}=\max_{k,\ell}|[A]_{k\ell}|$), and $\|\cdot\|_{1}$ the
element-wise $\ell_1$ norm. The effective rank of a positive
semidefinite matrix $A$ is defined as $r_e(A) = \tr(A)/\|A\|_2$. We let
$\lambda_{\max}(\cdot)$ and $\lambda_{\min}(\cdot)$ denote the largest
and the smallest eigenvalues, respectively, and let $\calS^d_+$ be the
set of $d\times d$ correlation matrices, that is, positive semidefinite
matrices with all diagonal elements equal to one. We use $\circ$ to
denote the Hadamard (or Schur) product. For notational brevity when
studying the factor model, for an arbitrary matrix $A\in\RR^{d\times
d}$, we let $A_o\in\RR^{d\times d}$ be the matrix with the same
off-diagonal elements as $A$, but with all diagonal elements equal to
zero, that is,
%
\begin{equation}
A_o = A - I_d\circ A. \label{eq:M_o}
\end{equation}
Again for notational brevity, this time when establishing probability
bounds involving Kendall's tau statistics, we will assume throughout
that the number of samples, $n$, is even, and denote
\[
f(n,d,\alpha) = \sqrt{\frac{16}{3}\cdot\frac{d\cdot\log
(2\alpha^{-1}d)}{n}}.
\]
\textit{Remark}. When $n$ is odd, the appropriate $f$ to use is
\[
f(n,d,\alpha) = \sqrt{\frac{16}{3}\cdot\frac{d\cdot\log
(2\alpha
^{-1}d)}{2 \lfloor n/2\rfloor}}.
\]
This is due to the fact that when $n$ is odd, we can group $X^1,\ldots
,X^n$ into at most $\lfloor n/2\rfloor$ \textit{pairs} of $(X^i,X^j)$'s
such that the different pairs are independent.


\section{Plug-in estimation of the copula correlation matrix}
\label{sec:operator_norm}

In this section, we focus on the plug-in estimator $\wh\Sigma$ of the
copula correlation matrix $\Sigma$ and in particular provide a bound on
$\|\wh\Sigma-\Sigma\|_2$. We recall that $\Sigma$ is related to the
Kendall's tau matrix $T$ via a sine function transformation as in
equation (\ref{R_tau}), and $\wh\Sigma$ is related to the empirical
Kendall's tau matrix $\wh T$ via the same transformation as in
equation (\ref{R_hattau}). We note that a typical proof for a bound on
$\|\wh\Sigma-\Sigma\|_{\infty}$ in the existing literature first
establishes a bound on $\|\wh T-T\|_{\infty}$ through a combination of
Hoeffding's classical bound for the (scalar) $U$-statistic applied to
each element of $\wh T-T$ and a union bound argument, and then
establishes the bound on $\|\wh\Sigma-\Sigma\|_{\infty}$ through the
Lipschitz property of the sine function transformation \cite{Liu12}.
Our proof for the bound on $\|\wh\Sigma-\Sigma\|_2$ is similarly
divided into two essentially independent stages:
\begin{longlist}[1.]
\item[1.]
First, in Section~\ref{sec:T_hat}, we establish a bound on $\|\wh T-T\|
_2$. This stage can be considered as the matrix counterpart in terms of
the operator norm to Hoeffding's classical bound for the (scalar) $U$-statistic;
\item[2.]
Next, in Section~\ref{sec:T_hat_to_Sigma_hat}, we bound $\|\wh\Sigma
-\Sigma\|_2$ by a constant times $\|\wh T-T\|_2$ up to an additive
quadratic term in $f(n,d,\alpha)$. This stage can be considered as the
matrix counterpart in terms of the operator norm to the Lipschitz
property of the sine function transformation. Then, combined with the
bound on $\|\wh T-T\|_2$, we establish the bound on $\|\wh\Sigma
-\Sigma
\|_2$.
\end{longlist}

\subsection{Bounding \texorpdfstring{$\|\widehat{T}-T\|_2$}{||widehat{T}-T||2}}
\label{sec:T_hat}
In this section, we bound $\|\wh T-T\|_2$, establishing both
data-driven and data-independent versions. We rely on the results from
\cite{Tropp14} out of the vast literature on matrix concentration
inequalities (see \cite{Bunea12,Vershynin12} for a glimpse of the literature).

\begin{theorem}
\label{thmm:T_hat_operator_norm}
We have, with probability at least $1-\alpha$,
\begin{subequations}
%
\begin{eqnarray}
\|\wh T-T\|_2 & <& \max \bigl\{ \sqrt{\|T\|_2}f(n,d,
\alpha), f^2(n,d,\alpha) \bigr\} \label{eq:T_hat_operator_norm_1}
\\
&\le&\sqrt{\|\widehat{T}\|_2 f^2(n,d,\alpha) +
\tfrac
{1}{4}f^4(n,d,\alpha) } + \tfrac{1}{2}f^2(n,d,
\alpha) \label
{eq:T_hat_operator_norm_2}
\\
&<& \max \bigl\{ \sqrt{\|{T}\|_2} f(n,d,\alpha), f^2(n,d,
\alpha) \bigr\} + f^2(n,d,\alpha). \label{eq:T_hat_operator_norm_3}
\end{eqnarray}
\end{subequations}
\end{theorem}

\textit{Remark}.
By decoupling the matrix $U$-statistic $\widehat{T}-T$ using (\ref
{eq:T_hat_minus_T_decomposition}), and \cite{Tropp14}, inequality (6.1.3) in
Theorem~6.1.1, we can also obtain a bound on $\EE [ \|
\wh
T-T\|_2  ]$. We omit the details.

\begin{pf*}{Proof of Theorem~\ref{thmm:T_hat_operator_norm}}
The proof can be found in Section~\ref{sec:proof_operator_norm}.
\end{pf*}

We elaborate the results presented in Theorem~\ref
{thmm:T_hat_operator_norm}. First, we note that the bound offered by
inequality (\ref{eq:T_hat_operator_norm_1}) is the tightest, but
contains the possibly unknown population quantity $\|T\|_{2}$. Hence,
we also derive a data-driven bound (\ref{eq:T_hat_operator_norm_2}),
whose performance is in turn guaranteed by~(\ref
{eq:T_hat_operator_norm_3}) in terms of the deterministic $\|T\|_2$.
Theorem~\ref{thmm:T_hat_operator_norm} also shows that the right-hand
side of (\ref{eq:T_hat_operator_norm_2}) is no more than
$f^2(n,d,\alpha
)$ away from the right-hand side of (\ref{eq:T_hat_operator_norm_1}).
This is because the former is sandwiched between the right-hand sides
of (\ref{eq:T_hat_operator_norm_1}) and (\ref
{eq:T_hat_operator_norm_3}), and the latter two terms differ by
$f^2(n,d,\alpha)$.

Next, for latter convenience, we note that when $n$ is large enough
such that
%
\begin{equation}
\|T\|_2\ge f^2(n,d,\alpha) = \frac{16 }{3} \cdot
\frac{d \cdot\log
(2\alpha^{-1}d)}{n}, \label{eq:elementary_n_condition}
\end{equation}
the first term dominates the second term in the curly bracket on the
right-hand side of (\ref{eq:T_hat_operator_norm_1}), that is,
%
\begin{equation}
\max \bigl\{\sqrt{\|T\|_2}f(n,d,\alpha), f^2(n,d,\alpha)
\bigr\} = \sqrt {\|T\|_2}f(n,d,\alpha). \label{eq:T_hat_bound_first_term_dominate}
\end{equation}

Finally, we discuss the optimality of Theorem~\ref
{thmm:T_hat_operator_norm}, specifically inequality (\ref
{eq:T_hat_operator_norm_1}). First, we compare our result to some
recent upper bounds established by other authors under conditions
related to but more restrictive than the semi-parametric elliptical
copula model. Under the same model but with the additional ``sign
sub-Gaussian condition,'' \cite{HCL13} establishes in their
Theorem~4.10 that
%
\begin{equation}
\|\wh T-T\|_2 = \mathcal{O} \biggl( \|T\|_2 \sqrt{
\frac{d + \log
(\alpha
^{-1})}{n}} \biggr) \label{eq:HCL13_bound}
\end{equation}
with probability at least $1-2\alpha$. Meanwhile, for distributions
with Gaussian copulas, \cite{Mitra14} establishes in their Corollary~3
a more complicated bound which, in the regime $n\ge d$, $\|T\|_2\ge
\max
\{\log(d),\log(\alpha^{-1})\}$ and $\|\Sigma\|_{2,\max} \le\|
\Sigma\|
_2^{1/2}$, reduces to that inequality (\ref{eq:HCL13_bound}) holds with
probability at least $1-\alpha$. Here $\|\Sigma\|_{2,\max} = \max_{\|u\|
=1}\|\Sigma u\|_{\max}$ with $\|\cdot\|$ and $\|\cdot\|_{\max}$ being
the Euclidean norm and the element-wise $\ell_{\infty}$ norm for
vectors, respectively.

Such bounds, which are based on Gaussian concentration inequalities,
are of a different flavor. Nevertheless, here we will attempt a very
crude comparison. We set $\alpha=1/d$ so that both our
inequality (\ref
{eq:T_hat_operator_norm_1}) and inequality (\ref{eq:HCL13_bound}) hold
with probability at least $1-\mathcal{O}(1/d)$. We also assume that $n$
is large enough such that inequality (\ref{eq:elementary_n_condition})
holds. Then the right-hand sides of (\ref{eq:T_hat_operator_norm_1})
and (\ref{eq:HCL13_bound}) are $\mathcal{O} ( \sqrt{\|T\|_2 d
\log
(d)/n}  )$ and $\mathcal{O} ( \|T\|_2 \sqrt{d/n}  )$,
respectively. Hence, the bound provided by our inequality (\ref
{eq:T_hat_operator_norm_1}) sheds an operator norm factor $\sqrt{\|T\|
_2}$ at the expense of an extra log factor $\sqrt{\log(d)}$.

From another angle, we contrast our upper bound (\ref
{eq:T_hat_operator_norm_1}) to the corresponding lower bound implied by
the argument presented in the proof of \cite{Lounici14}, Theorem~2, in
the context of covariance matrix estimation. Such a comparison reveals
that our bound (\ref{eq:T_hat_operator_norm_1}) is optimal up to the
(aforementioned) operator norm factor $\sqrt{\|T\|_2}$ and the log
factor $\sqrt{\log(d)}$ in $f(n,d,\alpha)$. The study of if and when
these factors can be removed is beyond the scope of this
paper.\footnote{By our proof of Theorem~\ref
{thmm:T_hat_operator_norm}, inequality (\ref
{eq:T_hat_operator_norm_1}) also holds with the replacement of $\wh T$
by its decoupled version $\wt T$ defined in (\ref{eq:T_tilde}). Then,
by the argument of \cite{Tropp14}, Section~6.1.2, we can show that the
operator norm factor $\sqrt{\|T\|_2}$ is in fact necessary in this
variant of (\ref{eq:T_hat_operator_norm_1}) in terms of $\wt T$ at
least in certain scenarios. Unfortunately, the same argument does not
apply directly to (\ref{eq:T_hat_operator_norm_1}) in terms of the
matrix $U$-statistic $\wh T$.} We also note that, by \cite{Tropp14},
Chapter~7, in inequality (\ref{eq:T_hat_operator_norm_1}),
we could replace the ambient dimension $d$ inside the log function in
$f(n,d,\alpha)$ by $\widetilde{d}=4d/\|T\|_2$. Here, $\widetilde{d}$ is
the effective rank of a semidefinite upper bound of $\EE[(\widetilde
{T}-T)^2]$ with $\widetilde{T}$ defined in equation (\ref{eq:T_tilde}).
Hence, if $\|T\|_2$ is comparable to~$d$, then the log factor is
effectively removed. In large sample size or large dimension setting,
it is customary to set $\alpha$ to be $1/\max\{n,d\}$ so that the
exclusion probability $\alpha$ tends to zero as $n$ or $d$ increases.
For such a setting of $\alpha$, we shed at most a constant
multiplicative factor in the bound on $\|\wh T-T\|_2$ by setting $d$ to
$\widetilde{d}$ inside the log function. Thus, for brevity of
presentation in later sections, we have avoided invoking the effective rank.


\subsection{Bounding \texorpdfstring{$\|\wh\Sigma-\Sigma\|_2$}{||widehatSigma-Sigma||2} in terms of \texorpdfstring{$\|\widehat{T}-T\|_2$}{||widehat{T}-T||2}}
\label{sec:T_hat_to_Sigma_hat}

In this section, we establish in Theorem~\ref
{thmm:main_operator_norm_bound} the promised link between $\|\wh\Sigma
-\Sigma\|_2$ and $\|\wh T-T\|_2$. Based on this result, we establish
bounds on $\|\wh\Sigma-\Sigma\|_2$ in the same theorem.

We also establish in Theorem~\ref{thmm:main_operator_norm_bound} a link
between $\|\wh T'-T\|_2$ and $\|\wh\Sigma'-\Sigma\|_2$, for $\wh T'$
that is any generic estimator of $T$ (i.e., $\wh T'$ is not necessarily
the empirical Kendall's tau matrix $\wh T$), and $\wh\Sigma'$ the
resulting generic plug-in estimator, that is,
\[
\wh\Sigma'=\sin \biggl(\frac{\pi}{2}\wh T'
\biggr).
\]
Possibilities of generic estimators $\wh T'$ of $T$ include regularized
estimators such as thresholding \cite{Bickel08,Cai12} or tapering
\cite
{Cai10} estimator. Such generic estimators $\wh T'$ of $T$ and the
resulting generic plug-in estimators $\wh\Sigma'$ of $\Sigma$ have the
potential to provide faster convergence rate than the empirical
Kendall's tau matrix $\wh T$ and the plug-in estimator $\wh\Sigma$
\textit{if} appropriate structure of $T$ is known in advance so a
regularized estimator $\wh T'$ could be used. Hence, we briefly include
the consideration of generic estimators in Theorem~\ref
{thmm:main_operator_norm_bound}.

An auxiliary result relating $\|T\|_2$ to $\|\Sigma\|_2$ is provided by
Theorem~\ref{thmm:T_Sigma}.

\begin{theorem}
\label{thmm:main_operator_norm_bound}
Let $\wh T'$ be a generic estimator of $T$, and $\wh\Sigma'$ the
resulting generic plug-in estimator of $\Sigma$. We have, for some
absolute constants $C_1',C_2'$ (we may take $C_1' = \pi$ and $C_2' =
\pi
^2/8<1.24$),
%
\begin{equation}
\bigl\| \wh\Sigma' - \Sigma\bigr\|_2 \le C_1'
\bigl\| \widehat{T}'-T \bigr\|_2 + C_2'
\bigl\| \widehat{T}'-T \bigr\|_2^2. \label{Lipschitz_generic}
\end{equation}

Recall $\wh T$ as defined in equation (\ref{eq:Kendall_tau_matrix_est})
and the resulting plug-in estimator $\wh\Sigma$ as defined in
equation (\ref{R_hattau}). We have, for some absolute constants
$C_1,C_2$ (we may take $C_1 = \pi$ and $C_2 = 3 \pi^2/16<1.86$), with
probability at least $1-\frac{1}{4}\alpha^2$,
%
\begin{equation}
\| \wh\Sigma- \Sigma\|_2 \le C_1 \| \widehat{T}-T
\|_2 + C_2 f^2(n,d,\alpha). \label{Lipschitz_Kendall}
\end{equation}

Recall that Theorem~\ref{thmm:T_hat_operator_norm} bounds $\| \widehat
{T}-T \|_2$. Hence, starting from inequality (\ref{Lipschitz_Kendall}),
we have, with probability at least $1-\alpha-\frac{1}{4}\alpha^2$,
\begin{subequations}
%
\begin{eqnarray}
\| \wh\Sigma- \Sigma\|_2 &<& C_1 \max \bigl\{ \sqrt{\|T\|
_2}f(n,d,\alpha), f^2(n,d,\alpha) \bigr\} +
C_2 f^2(n,d,\alpha) \label
{eq:Sigma_operator_norm}
\\
&\le& C_1 \sqrt{\|\widehat{T}\|_2 f^2(n,d,
\alpha) + \tfrac
{1}{4}f^4(n,d,\alpha) } + \bigl(
\tfrac{1}{2}C_1+C_2 \bigr) f^2(n,d,
\alpha) \label{eq:Sigma_operator_norm_2}
\\
&<& C_1 \max \bigl\{ \sqrt{\|{T}\|_2} f(n,d,\alpha),
f^2(n,d,\alpha) \bigr\} + (C_1+C_2
)f^2(n,d,\alpha). \label
{eq:Sigma_operator_norm_3}
\end{eqnarray}
\end{subequations}
\end{theorem}

\begin{pf}
The proof can be found in Section~\ref{sec:proof_operator_norm}.
\end{pf}

We elaborate the results presented in Theorem~\ref
{thmm:main_operator_norm_bound}. First, the relationship between the
bounds~(\ref{eq:Sigma_operator_norm}), (\ref{eq:Sigma_operator_norm_2})
and (\ref{eq:Sigma_operator_norm_3}) is analogous to the relationship
between the bounds (\ref{eq:T_hat_operator_norm_1}), (\ref
{eq:T_hat_operator_norm_2}) and (\ref{eq:T_hat_operator_norm_3}) as has
been discussed following Theorem~\ref{thmm:T_hat_operator_norm}. Next,
we discuss the relative merits of inequalities (\ref
{Lipschitz_generic}) and (\ref{Lipschitz_Kendall}). We note that
\begin{longlist}[1.]
\item[1.]
For the plug-in estimator $\wh\Sigma$, instead of starting from
inequality (\ref{Lipschitz_Kendall}), we can also start from
inequality (\ref{Lipschitz_generic}), take the particular choices $\wh
T'=\wh T$ and $\wh\Sigma'=\wh\Sigma$, and establish a bound on $\|
\wh
\Sigma-\Sigma\|_2$ via inequality (\ref{eq:T_hat_operator_norm_1}) in
Theorem~\ref{thmm:T_hat_operator_norm} as
\[
\| \wh\Sigma- \Sigma\|_2 \le\max \bigl\{ C_1'
\sqrt{\|T\| _2}f(n,d,\alpha) + C_2'\|T
\|_2 f^2(n,d,\alpha), C_1'
f^2(n,d,\alpha) + C_2' f^4(n,d,
\alpha) \bigr\}
\]
with probability at least $1-\alpha$. However, it is obvious that this
bound is not as tight as the one presented in inequality (\ref
{eq:Sigma_operator_norm}), which we obtained via inequality (\ref
{Lipschitz_Kendall}).
\item[2.]
On the other hand, suppose that we have a generic plug-in estimator
$\wh
\Sigma'$ of $\Sigma$ based on a generic estimator $\wh T'$ of $T$ that
achieves a rate $\| \wh T'-T \|_2\ll f(n,d,\alpha)$ (a rate faster than
the one for $\| \wh T-T \|_2$). Then, inequality (\ref
{Lipschitz_generic}) would yield
\[
\bigl\| \wh\Sigma' - \Sigma\bigr\|_2 \ll C_1'
f(n,d,\alpha) + C_2' f^2(n,d,\alpha),
\]
which is tighter than the bound offered by inequality (\ref
{eq:Sigma_operator_norm}).
\end{longlist}
Therefore, whether inequality (\ref{Lipschitz_generic}) or (\ref
{Lipschitz_Kendall}) should be preferred depends on the available
estimator of $T$ and the rate of convergence of the estimator.

Inequalities \textup{(\ref{eq:Sigma_operator_norm})} and \textup{(\ref
{eq:Sigma_operator_norm_3})} in Theorem~\ref
{thmm:main_operator_norm_bound} contain the term $\|T\|_2$. Using the
result of Theorem~\ref{thmm:T_Sigma}, we could relate $\|T\|_2$ back to
$\|\Sigma\|_2$, so that we bound $\| \wh\Sigma- \Sigma\|_2$ directly
in terms of the copula correlation matrix $\Sigma$.

\begin{theorem}
\label{thmm:T_Sigma}
We have
%
\begin{equation}
\frac{2}{\pi}\|\Sigma\|_2\le\|T\|_2\le\|\Sigma
\|_2. \label{eq:T_Sigma}
\end{equation}
Hence, inequalities \textup{(\ref{eq:Sigma_operator_norm})} and \textup{(\ref
{eq:Sigma_operator_norm_3})} hold with $\|T\|_2$ replaced by $\|\Sigma\|_2$.
\end{theorem}

\textit{Remark}. The second half of inequality (\ref{eq:T_Sigma}) is
tight: $\|T\|_2=\|\Sigma\|_2$ when $T=\Sigma=I_d$.

\begin{pf*}{Proof of Theorem~\ref{thmm:T_Sigma}}
The proof can be found in Section~\ref{sec:proof_operator_norm}.
\end{pf*}

\subsection{Obtaining a positive semidefinite estimator \texorpdfstring{$\wh\Sigma^+$}{widehatSigma^+}
from the plug-in estimator \texorpdfstring{$\wh\Sigma$}{widehatSigma}}
\label{sec:Sigma_psd}

As has been mentioned in Section~\ref{sec:proposed_research}, the
plug-in estimator $\wh\Sigma$ may fail to be positive semidefinite. In
this section, we demonstrate a procedure that, in such an event,
obtains an explicitly positive semidefinite estimator $\wh\Sigma^+$ of
$\Sigma$ from $\wh\Sigma$ with minimal loss in performance. The
procedure is suggested by a referee and is inspired by \cite{Xue12a}.
Note that, when $\wh\Sigma$ is not positive semidefinite, we cannot
simply set all the negative eigenvalues of $\wh\Sigma$ to zero, because
the resulting estimator will still not be a correlation matrix,
specifically because some of the diagonal elements of the resulting
estimator will exceed one.

In order to also cover the closed-form estimator and the refined
estimator when we study a factor model for $\Sigma$, we will consider a
more general situation. We let $\|\cdot\|$ be a generic matrix norm and
$\wh\Sigma^{\mathrm{generic}}$ a generic estimator of $\Sigma$. We do
not require $\wh\Sigma^{\mathrm{generic}}$ to be a correlation matrix.
We let the feasible region $\calF\subset\mathbb{R}^{d\times d}$ be such
that $\calF$ is nonempty, closed and convex, satisfies $\calF\subset
\calS^d_+$, but is otherwise arbitrary at this stage. From $\wh\Sigma
^{\mathrm{generic}}$, we construct an estimator $\wh\Sigma^{\mathrm
{generic}+}$ as
%
\begin{equation}
\wh\Sigma^{\mathrm{generic}+} = \argmin_{\Sigma'\in\calF}\bigl\| \Sigma'-\wh
\Sigma^{\mathrm{generic}}\bigr\|. \label{eq:Sigma_psd}
\end{equation}
We note that a solution to the right-hand side of (\ref{eq:Sigma_psd})
always exists. If the norm $\|\cdot\|$ is strictly convex (which is the
case for the Frobenius norm), the solution $\wh\Sigma^{\mathrm
{generic}+}$ is uniquely determined, while if multiple solutions to the
right-hand side of (\ref{eq:Sigma_psd}) exist, we arbitrarily choose
one of the solutions to be $\wh\Sigma^{\mathrm{generic}+}$. By
construction, $\wh\Sigma^{\mathrm{generic}+}$ is a correlation matrix
and so in particular is positive semidefinite. In addition,
Theorem~\ref
{thmm:Sigma_psd} shows that, when $\Sigma\in\calF$, the performance of
$\wh\Sigma^{\mathrm{generic}+}$ is comparable to the performance of
$\wh
\Sigma^{\mathrm{generic}}$ as measured by the deviation from $\Sigma$
in the norm $\|\cdot\|$.

\begin{theorem}
\label{thmm:Sigma_psd}
Suppose that $\Sigma\in\calF$. Then the estimator $\wh\Sigma
^{\mathrm
{generic}+}$ in (\ref{eq:Sigma_psd}) satisfies
\[
\bigl\|\wh\Sigma^{\mathrm{generic}+}-\Sigma\bigr\|\le2\bigl\|\wh\Sigma^{\mathrm
{generic}}-\Sigma\bigr\|.
\]
\end{theorem}

\begin{pf}
The proof can be found in Section~\ref{sec:proof_operator_norm}.
\end{pf}

Theorem~\ref{thmm:Sigma_psd} enables us to obtain from the plug-in
estimator $\wh\Sigma$ a positive semidefinite estimator $\wh\Sigma^+$
of $\Sigma$ such that $\|\wh\Sigma^+-\Sigma\|_2$ is comparable to
$\|\wh
\Sigma-\Sigma\|_2$ and, if necessary, $\|\wh\Sigma^+-\Sigma\|
_{\infty}$
is comparable to $\|\wh\Sigma-\Sigma\|_{\infty}$, as we demonstrate in
Corollary~\ref{cor:Sigma_psd}. As we have mentioned in Section~\ref
{sec:Background}, a sharp bound on the element-wise $\ell_{\infty}$
norm is central in some existing procedures for estimating the
precision matrix $\Sigma^{-1}$.

\begin{corollary}
\label{cor:Sigma_psd}
In (\ref{eq:Sigma_psd}), we let the generic matrix norm $\|\cdot\|$ be
replaced by the operator norm $\|\cdot\|_2$,\vspace*{1pt} the generic estimator
$\wh
\Sigma^{\mathrm{generic}}$ be replaced by the plug-in estimator $\wh
\Sigma$, and the solution $\wh\Sigma^{\mathrm{generic}+}$ be replaced
by $\wh\Sigma^+$. First, we choose $\calF=\calS^d_+$. Then, $\wh
\Sigma
^+$ satisfies
%
\begin{equation}
\bigl\|\wh\Sigma^+-\Sigma\bigr\|_2 \le2\|\wh\Sigma- \Sigma\|_2.
\label{eq:Sigma_hat_p_operator_norm}
\end{equation}
Alternatively, we choose $C_3=\sqrt{3\pi^2/8}<1.93$, and
%
\begin{equation}
\calF= \bigl\{\Sigma'\dvt \Sigma'\in
\calS^d_+ \mbox{ and } \bigl\|\Sigma '-\wh \Sigma
\bigr\|_{\infty}\le C_3 d^{-1/2} f(n,d,\alpha) \bigr\}.
\label{eq:calF_Sigma_hat_infinity}
\end{equation}
Then, with probability at least $1-\frac{1}{4}\alpha^2$, $\wh\Sigma^+$
satisfies inequality (\ref{eq:Sigma_hat_p_operator_norm}) and
%
\begin{equation}
\bigl\|\wh\Sigma^+-\Sigma\bigr\|_{\infty} \le2 C_3 d^{-1/2}
f(n,d,\alpha) \label{eq:Sigma_hat_p_infinity}
\end{equation}
simultaneously. We recall that $\|\wh\Sigma-\Sigma\|_2$ is bounded as
in Theorem~\ref{thmm:main_operator_norm_bound}.
\end{corollary}

\begin{pf}
The proof can be found in Section~\ref{sec:proof_operator_norm}.
\end{pf}


\section{Estimating the copula correlation matrix in the factor model}
\label{sec:factor_model}

In this section, we assume an elliptical copula correlation factor
model for $X\in\RR^{d}$. Recall that, under this assumption, the copula
correlation matrix $\Sigma$ of $X$ can be written as
\[
\Sigma= \Theta^* + V^*
\]
as in equation (\ref{factor_general}), with $\Theta^*\in\RR
^{d\times
d}$ a low-rank or nearly low-rank positive semidefinite\footnote{The
case that $\Theta^*$ is not positive semidefinite, though unnatural
because in the factor model $\Theta^*$ should equal $L L^T$ for some
matrix $L$, can be easily accommodated. We restrict our argument to
positive semidefinite matrices only to take advantage of the notational
brevity offered by the fact that their singular value decomposition and
eigen-decomposition coincide.} matrix, and $V^*\in\RR^{d\times d}$ a
diagonal matrix with nonnegative diagonal entries. Our goal of this
section is to present estimators that take advantage of the potential
dimension reduction offered by the factor model and the special
diagonal structure of $V^*$.

As a prelude to the main result of this section, in Section~\ref
{sec:elementary_model}, we first consider the elementary factor copula
model, for which we study closed-form estimators. Sections \ref
{sec:refined_estimator_preliminary} and \ref{sec:refined_estimator}
form an integral part: in the former, we introduce additional
notation, while in the latter we present our main result of
Section~\ref{sec:factor_model}, specifically by constructing the
refined estimator $\widetilde{\Sigma}$ of $\Sigma$ based on the plug-in
estimator $\wh\Sigma$ and establishing its associated oracle inequality.

\subsection{Analysis of closed-form estimators in the elementary factor
copula model}
\label{sec:elementary_model}

The elementary factor copula model assumes that $\Theta^*\in\RR
^{d\times d}$ is a positive semidefinite matrix of unknown rank $r$
with positive eigenvalues $\lambda_1(\Theta^*)\ge\cdots\ge\lambda
_r(\Theta^*)$, and
%
\begin{equation}
V^*=\sigma^2 I_d
\end{equation}
with $\sigma^2>0$. In other words, the copula correlation matrix
$\Sigma
$ admits the decomposition
\[
\Sigma= \Theta^* + \sigma^2 I_d.
\]
Comparison of the eigen-decomposition
\[
\Theta^* + \sigma^2 I_d = U \operatorname{diag}^{\star}
\bigl(\lambda_1\bigl(\Theta^*\bigr)+ \sigma^2,\ldots,
\lambda_r\bigl(\Theta^*\bigr) +\sigma^2,
\sigma^2,\ldots ,\sigma ^2\bigr) U^T
\]
of $\Sigma$, with the eigen-decomposition $\sum_{k=1}^d\wh\lambda_k
\wh
u_k \wh u_k^T$ (with $\wh\lambda_1\ge\cdots\ge\wh\lambda_d$) of the
plug-in estimator~$\wh\Sigma$, leads us to propose the following
closed-form estimators:
%
\begin{eqnarray}\label{eq:elementary_Theta_hat}
\wh r& =& \sum_{k=1}^d \mathbbm{1} \{
\wh\lambda_k - \wh \lambda _d \ge\mu \},
\nonumber
\\
\wh\sigma^2 & = &\frac{1}{d-\wh r} \sum
_{k>\wh r} \wh\lambda_k,
\\
 \wh\Theta& = & \sum_{k=1}^{\wh r}
\bigl( \wh\lambda_k -\wh\sigma^2 \bigr) \wh
u_k \widehat u_k^T\nonumber
\end{eqnarray}
to estimate $r$, $\sigma^2$ and $\Theta^*$, respectively. Here, $\mu$
is a regularization parameter specified by (\ref{MU}) in Theorem~\ref
{thmm:elementary} below, and is based on the bounds on $\|\wh\Sigma
-\Sigma\|_2$ established earlier. Then we let
%
\begin{equation}
\wt\Sigma^e = \wh\Theta_o + I_d
\label{eq:wt_Sigma_e}
\end{equation}
be the closed-form estimator of $\Sigma$. Note that we do not require
$\wt\Sigma^e = \wh\Theta+ \wh\sigma^2 I_d$. Such a requirement could
be imposed by solving a convex program like (\ref{eq:convex_program})
with the additional constraint that the diagonal elements of $\Theta$
are all equal and are between 0 and 1, but in this section we focus on
closed-form estimators.

Note that, by the construction of $\wh\Theta$ as in (\ref
{eq:elementary_Theta_hat}), the estimated nonzero eigenvalues of $\wh
\Theta$, namely $\wh\lambda_k -\wh\sigma^2$ for $1\le k\le\wh r$,
are always positive. Thus, $\wh\Theta$ is positive semidefinite. On the
other hand, $\wh\sigma^2$ may become negative in the pathological case
when $\wh\Sigma$ is not positive semidefinite. To address this problem,
we could impose a large enough lower bound on $\sigma^2$ so that $\wh
\sigma^2>0$ with high probability. Alternatively, we could replace
$\wh
\Sigma$ by its positive semidefinite version $\wh\Sigma^+$ as
constructed in Corollary~\ref{cor:Sigma_psd} from the very beginning,
and avoid the pathological case altogether. With the bound on $\|\wh
\Sigma^+-\Sigma\|_2$ established in the same corollary, all our
analysis will follow except for some minor changes in absolute
constants. For brevity we omit the details of these changes.


The following theorem summarizes the performance of our closed-form estimators.
%
\begin{theorem}\label{thmm:elementary}
Let $0<\alpha<1/2$, $C_1=\pi$ and $C_2=3\pi^2/16 < 1.86$. We set the
regularization parameter $\mu$ as
%
\begin{equation}
\mu= 2 \bigl\{ C_1 \sqrt{ \| \wh T\|_2
f^2(n,d,\alpha) + \tfrac{1}4 f^4(n,d,\alpha) } +
\bigl(\tfrac{1}{2}C_1+ C_2 \bigr)
f^2(n,d,\alpha) \bigr\}, \label{MU}
\end{equation}
and set
%
\begin{equation}
\bar\mu= 2 \bigl\{ C_1 \sqrt{ \| T\|_2} f(n,d,\alpha) +
(C_1 + C_2 ) f^2(n,d,\alpha) \bigr\}.
\label{BARMU}
\end{equation}
Suppose that $\Theta^*$ satisfies $0<r<d$ and $\lambda_r(\Theta^*)
\ge
2 \bar\mu$, and $n$ is large enough such that inequality (\ref
{eq:elementary_n_condition}) holds. Then, on an event with probability
exceeding $1-2\alpha$,
%
\begin{eqnarray}
\label{eq:r_hat_equal_r} \wh r &=& r,
\\
\label{GRENS} \bigl\| \wt\Sigma^e-\Sigma\bigr\|_F^2
&\le&\bigl\| \wh\Theta-\Theta^*\bigr\|_F^2 \le 2r\bar
\mu^2,
\\
\label{GRENS1} \bigl| \widehat\sigma^2 - \sigma^2 \bigr| & \le&
\tfrac{1}2\bar\mu
\end{eqnarray}
hold simultaneously. If, in addition, the common value of the diagonal
elements of $\Theta^*$ is upper bounded by $1-\sqrt{2r\bar\mu^2}$,
then $\wt\Sigma^e$ is positive semidefinite on the same event.
\end{theorem}

\begin{pf}
The proof can be found in Section~\ref{sec:proof_elementary_model}.
\end{pf}

We elaborate the results presented in Theorem~\ref{thmm:elementary}.
First, the regularization parameter $\mu$, and hence our closed-form
estimators are constructed entirely with explicit constants and
measurable quantities. In addition, in the regime specified by (\ref
{eq:elementary_n_condition}), that is, (roughly) when $n {\|T\|_2}
\gtrsim d\log(2\alpha^{-1}d )$, the rate $2r \bar\mu^2 =\mathcal{O}(
\| T\|_2 \cdot rd \log( 2\alpha^{-1}d) /n)$ in (\ref{GRENS}) is, up to
the operator norm factor $\| T\|_2$ and the logarithmic factor $\log
(2\alpha^{-1}d)$, proportional to the number of parameters in the model
divided by the sample size. Hence, our estimation procedure achieves
correct rank identification for the low-rank component $\Theta^*$, and
near-optimal recovery rate in terms of Frobenius norm deviation for
both $\Theta^*$ and the copula correlation matrix $\Sigma$, in a fully
data-driven manner.

Theorem~\ref{thmm:elementary} also shows that, under appropriate
conditions, if the diagonal elements of $\Theta^*$ are sufficiently
less than one, then the estimator $\wt\Sigma^e$ is positive
semidefinite with high probability. In any case, if $\wt\Sigma^e$ is
not positive semidefinite, we can employ Theorem~\ref{thmm:Sigma_psd} to
obtain from $\wt\Sigma^e$ a positive semidefinite estimator $\wt
\Sigma
^{e+}$ of $\Sigma$ such that $\|\wt\Sigma^{e+}-\Sigma\|_F$ is
comparable to $\|\wt\Sigma^e-\Sigma\|_F$. We defer the details of this
treatment to Corollary~\ref{cor:wt_Sigma_p}.


\subsection{Analysis of the refined estimator: Preliminaries}
\label{sec:refined_estimator_preliminary}

We denote
\[
r^*=\operatorname{rank}\bigl(\Theta^*\bigr).
\]
Let $\Theta^*$ have the eigen-decomposition
\[
\Theta^*=U^* \operatorname{diag}^{\star}\bigl(\lambda_1\bigl(\Theta^*
\bigr),\ldots ,\lambda _{r^*}\bigl(\Theta^*\bigr)\bigr)
U^{*T}.
\]
Here, $\lambda_1(\Theta^*)\ge\cdots\ge\lambda_{r^*}(\Theta^*)$
are the
positive eigenvalues of $\Theta^*$ in descending order, and
\[
U^* = \bigl(u^1,\ldots,u^{r*} \bigr)
\]
is the $d\times r^*$ matrix of the orthonormal eigenvectors of $\Theta
^*$, with the eigenvector $u^i$ corresponding to the eigenvalue
$\lambda
_i(\Theta^*)$.

Furthermore, for all $r$ with $0\le r\le r^*$, we let
%
\begin{equation}
U^*_r= \bigl(u^1,\ldots,u^{r} \bigr)
\label{eq:U_star_r}
\end{equation}
be the $d\times r$ truncated matrix of orthonormal eigenvectors of
$\Theta^*$, let
%
\begin{equation}
\gamma_r = \bigl\|U^*_r U^{*T}_r
\bigr\|_{\infty}, \label{eq:gamma_r}
\end{equation}
and let $\Theta^*_r$ be the best rank-$r$ approximation to $\Theta^*$
in the Frobenius norm, that is, $\Theta^*_r=\argmin_{\Theta\in\RR
^{d\times d},\operatorname{rank}(\Theta)=r}\|\Theta-\Theta^*\|_F$. We note that
$\gamma_r$ is nondecreasing in $r$ on $0\le r\le r^*$, and $\gamma
_{r^*}\le1$. In addition, by Schmidt's approximation theorem~\cite
{Schmit07} or the Eckart--Young theorem~\cite{Eckart36}, for $0\le
r\le
r^*$, we have
%
\begin{equation}
\Theta^*_r = U^*_r \operatorname{diag}^{\star}\bigl(
\lambda_1\bigl(\Theta^*\bigr),\ldots ,\lambda _r\bigl(
\Theta^*\bigr)\bigr) U^{*T}_r, \label{eq:Theta_star_r}
\end{equation}
and $\| \Theta^*_r - \Theta^* \|_F^2 = \sum_{j\dvt r<j\le r^*} \lambda
_j^2(\Theta^*)$.

\subsection{Analysis of the refined estimator: Main result}
\label{sec:refined_estimator}

We first observe that in the elliptical copula correlation factor
model, alternative to (\ref{factor_general}), we can write the copula
correlation matrix $\Sigma$ as
\[
\Sigma= \Theta^*_o + I_d.
\]
This motivates us to set our refined estimator $\widetilde{\Sigma}$ of
$\Sigma$ to be
%
\begin{equation}
\label{eq:Sigma_refined} \wt\Sigma= \wt\Theta_o + I_d.
\end{equation}
Here, $\wt\Theta$ is our estimator of the low-rank component $\Theta
^*$, and is obtained as the solution to a convex program:
%
\begin{equation}
\wt\Theta= \argmin_{\Theta\in\RR^{d\times d}} \bigl\{ \tfrac
{1}{2}\|
\Theta_o-\wh\Sigma_o\|_F^2 +
\mu\|\Theta\|_{*} \bigr\}. \label{eq:convex_program}
\end{equation}
(By its optimality, $\wt\Theta$ must be symmetric, though this
particular property is not used in our subsequent analysis.) In (\ref
{eq:convex_program}), $\mu$ is a regularization parameter chosen
according to (\ref{MU_2}) in Theorem~\ref{thmm:recovery_bound_optimize}
below, and is based on the bounds on $\|\wh\Sigma-\Sigma\|_2$
established earlier.

We now elaborate the construction of the refined estimator. Note that:
\begin{longlist}[1.]
\item[1.]
In the factor model, the off-diagonal elements of $\Sigma$ and $\Theta
^*$ agree, so the off-diagonal elements of $\wh\Sigma$ are natural
estimators of the corresponding elements of $\Theta^*$;
\item[2.]
The plug-in estimator $\wh\Sigma$, similar to the target copula
correlation matrix $\Sigma$, has all its diagonal elements equal to one
irrespective of the low-rank component $\Theta^*$. As a consequence, we
critically lack estimators for the diagonal elements of $\Theta^*$.
\end{longlist}
Because of these observations, when constructing the estimator $\wt
\Theta$ of $\Theta^*$ through the convex program (\ref
{eq:convex_program}), we minimize the Frobenius norm for only the
off-diagonal elements of the deviation between $\wh\Sigma$ and the
estimator of $\Theta^*$ subject to a penalty. The penalty is the
nuclear norm of the estimator of $\Theta^*$ scaled by the
regularization parameter $\mu$, and is implemented to encourage the
estimator of $\Theta^*$ to be appropriately low-rank while keeping
(\ref
{eq:convex_program})\vspace*{1pt} convex \cite{Fazel02}. Then, when constructing the
refined estimator $\wt\Sigma$ of $\Sigma$ from the estimator $\wt
\Theta
$ of $\Theta^*$ through (\ref{eq:Sigma_refined}), we explicitly set all
the diagonal elements of $\wt\Sigma$ to one. It is clear that any bound
on $\widetilde{\Sigma}-\Sigma$ also acts as a bound on the off-diagonal
elements of $\wt\Theta-\Theta^*$ and vice versa. We bound the diagonal
elements of $\wt\Theta-\Theta^*$ in Appendix \ref{sec:diagonal_deviation}.

We briefly contrast our refined estimator $\wt\Sigma$, which is
tailor-made for our special setting of the elliptical copula
correlation factor model, to some of the existing estimation procedures
in related but different contexts.
\begin{longlist}[1.]
\item[1.]
Our setting is an extension of the low-rank matrix approximation
problem \cite{Lounici14,Negahban11,Rohde11}. In particular, \cite
{Lounici14} studies the estimation of $\Theta^*$ that is a covariance
matrix\footnote{For this paragraph only, we use $\Theta^*$ to denote
the covariance matrix, because in the setting of \cite{Lounici14} it is
the covariance matrix itself that has low effective rank.} with low
effective rank, with the added complication that the observations
$X^1,\ldots,X^n$ are masked at \emph{random} coordinates. \cite
{Lounici14} constructs an unbiased initial estimator $\wh\Theta$ of
$\Theta^*$, and further obtains a refined estimator $\wt\Theta$ as the
solution of a convex program that is identical to (\ref
{eq:convex_program}) but with the term $\|\Theta_o-\wh\Sigma_o\|_F^2$
replaced by $\|\Theta-\wh\Theta\|_F^2$, which is a sum over \textit
{all} entries of the matrix $\Theta-\wh\Theta$.

Contrary to the setting of \cite{Lounici14}, $\Sigma$ in the factor
model (\ref{factor_general}) typically has neither low effective rank
nor low rank: because $\tr(\Sigma)=d$, the effective rank of $\Sigma$
is $r_e(\Sigma)=d/\|\Sigma\|_2$, which is large unless $\|\Sigma\|_2$
becomes comparable to $d$; in addition, because $\Theta^*$ is positive
semidefinite, if the diagonal elements of $V^*$ are all strictly
positive, then $\Sigma=\Theta^*+V^*$ has full rank. Hence, a naive
application of the method of \cite{Lounici14} to our setting amounts to
seeking a low-rank approximation to a matrix that is in fact not
low-rank. In contrast, our program (\ref{eq:convex_program}) seeks to
estimate the genuine low-rank or nearly low-rank component $\Theta^*$
of $\Sigma$, even though this choice leads to technical challenges in
our proof as compared to \cite{Lounici14}.
\item[2.]
By the observations we made earlier, our problem can be rephrased as
follows: Estimate the off-diagonal elements of $\Theta^*$ given only
their noisy observations, taking advantage of the fact that $\Theta^*$
is low-rank or nearly low-rank. Hence, as mentioned in Section~\ref
{sec:proposed_research}, our problem is a variant of the matrix
completion problem, in particular the version in which a matrix $\Sigma
$ (not necessarily a correlation matrix) admits a decomposition into
the sum of a low-rank component $\Theta^*$ and a sparse component $S^*$
with a general sparsity pattern (i.e., the locations of the nonzero
entries of the sparse component are unknown but fixed), and the goal is
to estimate $\Sigma$ based on its noisy observation $\wh\Sigma$
\cite
{Agarwal12,Chandrasekaran11,Chandrasekaran12a,Hsu11,Luo11,Zhang12}. In
particular, \cite{Chandrasekaran12a,Hsu11} let $\wt\Theta$, the
estimator of $\Theta^*$, and $\wt S$, the estimator of $S^*$, be the
solution of
%
\begin{equation}
(\wt\Theta,\wt S) = \argmin_{\Theta,S\in\RR^{d\times d}} \bigl\{ \tfrac
{1}{2}\|
\Theta+S-\wh\Sigma\|_F^2 + \mu\|\Theta\|_{*}
+ \lambda\| S\| _{1} \bigr\}. \label{eq:convex_program_generic_sparsity}
\end{equation}
This scenario is the closest to our setting. However, even though $V^*$
in the factor model is indeed a sparse matrix, and thus one could apply
(\ref{eq:convex_program_generic_sparsity}) to our setting, such an
approach would not be optimal because it obviously takes no advantage
of our knowledge of the sparsity pattern of $V^*$, namely the diagonal
pattern. For instance, \cite{Chandrasekaran12a,Hsu11} require
nontrivial specification of an additional regularization parameter
$\lambda=\lambda(\mu)$ for the element-wise $\ell_1$ penalty of the
sparse component. Because (\ref{eq:convex_program}) and (\ref
{eq:convex_program_generic_sparsity}) are distinct programs, it is also
not possible to infer the properties of our refined estimator $\wt
\Sigma
$ directly from the results of \cite{Chandrasekaran12a,Hsu11}.
\item[3.]
Finally, the low-rank and diagonal matrix decomposition problem in the
\textit{noiseless} setting is treated in \cite{Saunderson12}. These
authors employ a semidefinite program, the minimum trace factor
analysis (MTFA), to minimize the trace of the low-rank component
(subject to the constraint that the sum of the low-rank component and
the diagonal component agrees with the given matrix to be decomposed).
The optimality condition from semidefinite programming then gives
fairly simple conditions for the MTFA to exactly recover the decomposition.
\end{longlist}

We adopt the \textit{primal-dual certificate} approach advocated by
\cite{Hsu11,Zhang12}\footnote{Through delicate analysis, \cite
{Chandrasekaran12a} (which builds upon their earlier work \cite
{Chandrasekaran11} in the noiseless setting) guarantees optimal
convergence rate in terms of the operator norm, as well as consistent
rank recovery, for the estimator $\wt\Theta$ of the low-rank component
$\Theta^*$. On the other hand, their analysis requires that the minimum
nonzero singular value of the low-rank component $\Theta^*$ satisfies a
nontrivial lower bound, and hence at this stage is not particularly
well suited to study the case where the low-rank requirement only holds
approximately.} to analyze (\ref{eq:convex_program}). Our oracle
inequality for the refined estimator $\wt\Sigma$ is collected in the
following theorem.

\begin{theorem}
\label{thmm:recovery_bound_optimize}
Let $0<\alpha<1/2$, $C_1=\pi$, $C_2=3\pi^2/16 < 1.86$, and $C=6$. We
set the regularization parameter $\mu$ as
%
\begin{equation}
\mu= C \bigl\{ \label{MU_2}C_1 \sqrt{ \| \wh T\|_2
f^2(n,d,\alpha) + \tfrac{1}4 f^4(n,d,\alpha) } +
\bigl(\tfrac{1}{2}C_1+ C_2 \bigr)
f^2(n,d,\alpha) \bigr\},
\end{equation}
and set
%
\begin{equation}
\bar\mu= C \bigl\{ C_1 \max \bigl[ \sqrt{ \| T\|_2}
f(n,d,\alpha), f^2(n,d,\alpha) \bigr] + (C_1+
C_2 ) f^2(n,d,\alpha) \bigr\}. \label{BARMU_2}
\end{equation}
Recall $\gamma_r$ as defined in (\ref{eq:gamma_r}). We set
%
\begin{equation}
R=\max \bigl\{r\dvt 0\le r\le r^*, \gamma_r \le1/9 \bigr\}.
\label{eq:gamma_R_condition}
\end{equation}
Then, with probability exceeding $1-2\alpha$, the refined estimator
$\wt
\Sigma$, as introduced in (\ref{eq:Sigma_refined}), of $\Sigma$ satisfies
%
\begin{equation}
\| \widetilde{\Sigma} - \Sigma\|_F^2 \le\min
_{0\le r\le R} \biggl\{ \sum_{j\dvt r<j\le r^*}
\lambda_j^2\bigl(\Theta^*\bigr) + 8 r\bar\mu^2
\biggr\}. \label{eq:cor:recovery_bound_master}
\end{equation}
\end{theorem}

\textit{Remark}.
Theorem~\ref{thmm:recovery_bound_optimize} is a specific instance of
Corollary~\ref{cor:recovery_bound} which is a more general result; in
particular the constant $C=6$ in (\ref{MU_2}) and (\ref{BARMU_2}) and
the upper bound $1/9$ on $\gamma_r$ in (\ref{eq:gamma_R_condition}) are
chosen for ease of presentation but are not specifically optimized. For
instance, we could specify a smaller $C$ at the expense of a more
stringent upper bound on $\gamma_r$.

\begin{pf*}{Proof of Theorem \ref{thmm:recovery_bound_optimize}}
The proof can be found in Section~\ref{sec:proof_refined_estimator}.
\end{pf*}

We elaborate the results presented in Theorem~\ref
{thmm:recovery_bound_optimize}.

The oracle inequality (\ref{eq:cor:recovery_bound_master}) in fact
represents the minimum of a collection of upper bounds, and the minimum
is taken over all $r$ that satisfies $\gamma_r=\|U^*_r U^{*T}_r\|
_{\infty}\le1/9$, a range specified by~(\ref{eq:gamma_R_condition}).
Thus, for the oracle inequality (\ref{eq:cor:recovery_bound_master}) to
be as tight as possible, we should ideally have a large range of $r$
such that $\gamma_r\le1/9$. We discuss two concrete examples in which
this condition is satisfied:
\begin{longlist}[1.]
\item[1.]
If for some given $r$, the entries of $u^i$, $1\le i\le r$ are all
bounded by $c/\sqrt{d}$ for some constant $c\ge1$, then $\gamma_r\le
c^2 r/d$;
\item[2.]
Next, we consider the \textit{random orthogonal model} as in \cite
{Candes2009}. The first result of their Lemma~2.2 shows that, if $u^i$,
$1\le i\le r$ are sampled uniformly at random among all families of $r$
orthonormal vectors independently of each other, then there exist
constants $C$ and $c$ such that $\gamma_r\le C \max\{r,\log(d)\}/d$
with probability at least $1-cd^{-3}\log d$.
\end{longlist}
In both cases, $\gamma_r \le1/9$ is satisfied for all $r$'s that are
small compared to $d$ (in the second case when $d$ is large enough and
with high probability to be precise).

The estimation procedure (\ref{eq:convex_program}) is fully
data-driven; in particular, the penalty term in (\ref
{eq:convex_program}) is scaled by a regularization parameter $\mu$
specified by (\ref{MU_2}) with explicit constants and measurable
quantities. In addition, procedure (\ref{eq:convex_program})
automatically balances the approximation error with the estimation
error as if it knows the right model in advance to arrive at the oracle
inequality (\ref{eq:cor:recovery_bound_master}) with near-optimal
recovery rate in terms of Frobenius norm deviation. Specifically,
\begin{longlist}[1.]
\item[1.]
The primal-dual certificate approach yields an approximation error
term, that is, the first term in the curly bracket on the right-hand
side of (\ref{eq:cor:recovery_bound_master}), with leading
multiplicative constant one. Such a feature has become increasingly
common with the results obtained through convex optimization with
nuclear norm penalty \cite{Koltchinskii11,Lounici14};
\item[2.]
Meanwhile, the estimation error term, that is, the second term in the
curly bracket on the right-hand side of (\ref
{eq:cor:recovery_bound_master}), achieves a rate $8 r \bar\mu^2
=\mathcal{O}( \| T\|_2 \cdot rd \log(2\alpha^{-1}d) /n)$ with
probability exceeding $1-2\alpha$ if we focus on the regime specified
by (\ref{eq:elementary_n_condition}), that is, (roughly) when $n {\|T\|
_2} \gtrsim d\log(2\alpha^{-1}d )$. Again, this rate is, up to the
operator norm factor $\| T\|_2$ and the logarithmic factor $\log
(2\alpha
^{-1}d)$, proportional to the number of parameters in the model divided
by the sample size.\footnote{Again by the lower bound argument presented
in the proof of \cite{Lounici14}, Theorem~2, the rate of the estimation
error term in (\ref{eq:cor:recovery_bound_master}) is optimal up to the
operator norm factor and the log factor. We note that the lower bounds
(and in particular the one for Frobenius norm deviation) established by
\cite{Lounici14}, Theorem~2, contain explicit dependence on the operator
norm $\|\Sigma\|_2$ of the target covariance matrix $\Sigma$ in the
form of a multiplicative factor. However, a closer inspection of the
proof of \cite{Lounici14}, Theorem~2, reveals that this particular $\|
\Sigma\|_2$ is in fact restricted to be at most two times the maximum
of the diagonal elements of $\Sigma$, and thus in our case can at most
be two because $\Sigma$ is a correlation matrix. This restriction is
not ideal because $\|\Sigma\|_2$ in general can be as large as $d$. In
our opinion, it remains to be seen how a proper dependence on operator
norm can be obtained in lower bound for Frobenius norm deviation under
our setting of correlation matrix estimation. From another angle, we
have shown in the proof of Corollary~\ref{cor:Sigma_psd} that the
plug-in estimator $\wh\Sigma$ achieves $\|\wh\Sigma-\Sigma\|
_{\infty} =
\mathcal{O}(\sqrt{\log(2\alpha^{-1}d)/n})$ (with probability at least
$1-\frac{1}{4}\alpha^2$); thus $\|\wh\Sigma-\Sigma\|_F^2 =
\mathcal
{O}(d^2\cdot\log(2\alpha^{-1}d)/n)$ (with the same probability). This
rate is slower than $r\bar\mu^2$ so long as $r\|T\|_2\lesssim d$.
Therefore, the presence of $\|T\|_2$ in (\ref{GRENS}) and (\ref
{eq:cor:recovery_bound_master}) entails an upper bound on the rank of
the low-rank component $\Theta^*$ below which the refined estimator and
the closed form estimator in their respective contexts are preferable
to the plug-in estimator $\wh\Sigma$ in terms of Frobenius norm deviation.}
\end{longlist}


Finally, if the diagonal elements of the deviation $\wt\Theta-\Theta^*$
can be appropriately bounded, for instance, through Theorem~\ref
{thmm:Theta_hat_diagonal} in Appendix \ref{sec:diagonal_deviation}, and
if the diagonal elements of $\Theta^*$ are sufficiently smaller than
one, then the estimator $\wt\Sigma$ is positive semidefinite. Because
the argument is similar to the proof of the last statement of
Theorem~\ref{thmm:elementary}, we omit its details. In any case, if
$\wt
\Sigma$ is not positive semidefinite, we can employ Theorem~\ref
{thmm:Sigma_psd} to obtain from $\wt\Sigma$ a positive semidefintie
estimator $\wt\Sigma^{+}$ of $\Sigma$ such that $\|\wt\Sigma
^+-\Sigma\|
_F$ is comparable to $\|\wt\Sigma-\Sigma\|_F$, as Corollary~\ref
{cor:wt_Sigma_p} demonstrates.

\begin{corollary}
\label{cor:wt_Sigma_p}
In (\ref{eq:Sigma_psd}), we let the generic matrix norm $\|\cdot\|$ be
replaced by the Frobenius norm $\|\cdot\|_F$, and let $\calF=\calS
^d_+$. In addition, in the context of the elementary factor copula
model, we let the generic estimator $\wh\Sigma^{\mathrm{generic}}$ be
replaced by the closed-form estimator $\wt\Sigma^{e}$, and the solution
$\wh\Sigma^{\mathrm{generic}+}$ be replaced by $\wt\Sigma^{e+}$, while
in the context of the (general) elliptical copula correlation factor
model, we let the generic estimator $\wh\Sigma^{\mathrm{generic}}$ be
replaced by the refined estimator $\wt\Sigma$, and the solution $\wh
\Sigma^{\mathrm{generic}+}$ be replaced by $\wt\Sigma^{+}$. Then
$\wt
\Sigma^{e+}$ and $\wt\Sigma^+$ satisfy
%
\begin{equation}
\bigl\|\wt\Sigma^{e+}-\Sigma\bigr\|_F \le2\bigl\|\wt\Sigma^e
- \Sigma\bigr\|_F,\qquad \bigl\| \wt \Sigma^+-\Sigma\bigr\|_F \le2\|\wt
\Sigma- \Sigma\|_F. \label{eq:wt_Sigma_p_Frobenius_norm}
\end{equation}
We recall that $\|\wt\Sigma^e - \Sigma\|_F$ and $\|\wt\Sigma-
\Sigma\|
_F$ are bounded as in Theorems \ref{thmm:elementary} and \ref
{thmm:recovery_bound_optimize}, respectively.
\end{corollary}

\textit{Remark}.
We refer the readers to \cite{Qi06aquadratically} and the references
therein for the computational aspect of (\ref{eq:Sigma_psd}) in this
context of Frobenius norm minimization.

\begin{pf*}{Proof of Corollary \ref{cor:wt_Sigma_p}}
With the choice $\calF=\calS^d_+$, we clearly have $\Sigma\in\calF$.
Then (\ref{eq:wt_Sigma_p_Frobenius_norm}) follows straightforwardly
from Theorem~\ref{thmm:Sigma_psd}.
\end{pf*}

For both Corollaries \ref{cor:Sigma_psd} and \ref{cor:wt_Sigma_p}, we
have obtained positive semidefinite, rather than strictly positive
definite, versions of the existing estimators. To obtain strictly
positive definite estimators, we could replace the existing feasible
regions $\calF$ in Corollaries \ref{cor:Sigma_psd} and \ref
{cor:wt_Sigma_p} by an intersection of $\calF$ and the convex set $\{
\Sigma'\in\mathbb{R}^{d\times d}\dvt \lambda_{\min}(\Sigma')\ge
\varepsilon\}$
for some $\varepsilon>0$. Then the resulting estimator from~(\ref
{eq:Sigma_psd}) will be positive definite, with the smallest eigenvalue
lower bounded by $\varepsilon$. If in addition the copula correlation
matrix $\Sigma$ satisfies $\lambda_{\min}(\Sigma)\ge\varepsilon$, the
conclusions of Corollaries \ref{cor:Sigma_psd} and \ref{cor:wt_Sigma_p}
will continue to hold.


\section{Proofs for Section \texorpdfstring{\protect\ref{sec:operator_norm}}{2}}
\label{sec:proof_operator_norm}

\subsection{Proof of Theorem \texorpdfstring{\protect\ref{thmm:T_hat_operator_norm}}{2.1}}
The proof of Theorem~\ref{thmm:T_hat_operator_norm} is further divided
into two stages. In Section~\ref{app:T_hat_operator_norm}, we prove
inequality~(\ref{eq:T_hat_operator_norm_1}); in Section~\ref
{app:T_hat_operator_norm_2}, we prove the data-driven bound,
inequality (\ref{eq:T_hat_operator_norm_2}), and its performance
guarantee, inequality (\ref{eq:T_hat_operator_norm_3}).

\subsubsection{Proof of inequality \texorpdfstring{\textup{(\protect\ref{eq:T_hat_operator_norm_1})}}{(2.1a)}}
\label{app:T_hat_operator_norm}
We wish to apply a Bernstein-type inequality,\vspace*{1pt} specifically \cite{Tropp14},
Theorem~6.6.1, to bound the tail probability $\PP \{ \|
\widehat{T}-T\|_2\ge t  \}$. We note that this theorem on bounding
the tail probability of the maximum eigenvalue of a sum of random
matrices requires that the summands be independent. Clearly, the matrix
$U$-statistic $\wh T-T$ does not satisfy this condition. On the other
hand, this theorem relies on the Chernoff transform technique to
convert the tail probability into an expectation of a convex function
of $\widehat{T}-T$. A technique by Hoeffding \cite{Hoeffding63} then
allows us to convert the problem of bounding $\|\widehat{T}-T\|_2$ into
a problem involving a sum of independent random matrices.

\begin{proposition}
\label{prp:trace_exponential}
We define
%
\begin{equation}
\label{eq:T_tilde} \widetilde{T} = \frac{2}{n}\sum
_{i=1}^{n/2} \widetilde{T}^i
\end{equation}
with
%
\begin{equation}
\label{eq:T_tilde_i} \widetilde{T}^i = \sgn\bigl(X^{2i-1}-X^{2i}
\bigr)\sgn\bigl(X^{2i-1}-X^{2i}\bigr)^T.
\end{equation}
Then the tail probability $\PP \{ \|\widehat{T}-T\|_2\ge t
 \}
$ satisfies
\[
\PP \bigl\{ \|\widehat{T}-T\|_2\ge t \bigr\}  \le\inf_{\theta>0}
\bigl\{ \mathrm{e}^{-\theta t}\cdot\EE \bigl[
\tr \mathrm{e}^{ \theta(\widetilde{T} -T)
} \bigr] \bigr\}+ \inf
_{\theta>0} \bigl\{ \mathrm{e}^{-\theta t}\cdot\EE \bigl[\tr
\mathrm{e}^{ \theta(T-\widetilde{T}) } \bigr] \bigr\}.
\]
\end{proposition}

\begin{pf}
First, note that, because $\wh T -T$ is symmetric, we have
\[
\| \wh T -T\|_2=\max\bigl\{\lambda_{\max}( \wh T -T),-
\lambda_{\min}( \wh T -T)\bigr\} = \max\bigl\{\lambda_{\max}( \wh
T -T),\lambda_{\max}(T- \wh T )\bigr\}.
\]
Hence,
%
\begin{eqnarray}\label{eq:P_T_hat}
\PP \{ \| \wh T -T\|_2\ge t \} &=& \PP \bigl\{ \bigl\{\lambda
_{\max
}( \wh T -T)\ge t\bigr\}\cup\bigl\{\lambda_{\max}(T- \wh T
) \ge t\bigr\} \bigr\}
\nonumber
\\[-8pt]
\\[-8pt]
\nonumber
&\le&\PP \bigl\{ \lambda_{\max}( \wh T -T)\ge t \bigr\} + \PP \bigl\{
\lambda_{\max}(T- \wh T) \ge t \bigr\}.
\end{eqnarray}
Next we bound the first term on the right-hand side of inequality (\ref
{eq:P_T_hat}), that is, $\PP \{ \lambda_{\max}( \wh T -T)\ge t
 \}$. Applying the Chernoff transform technique (e.g., \cite{Tropp14},
Proposition~3.2.1), we have
%
\begin{equation}
\PP \bigl\{ \lambda_{\max}( \wh T -T)\ge t \bigr\}  \le\inf
_{\theta
>0} \bigl\{ \mathrm{e}^{-\theta t} \cdot\EE \bigl[\tr
\mathrm{e}^{\theta( \wh T -T)} \bigr] \bigr\}. \label{eq:P_T_hat_1}
\end{equation}

Now we introduce the technique of Hoeffding. We note the following facts:
\begin{longlist}[1.]
\item[1.]
We can equivalently write $\widehat{T}$ as
%
\begin{equation}
\widehat{T} = \frac{1}{n!} \sum_{n,n} V
\bigl(X^{{i}_1},\ldots,X^{{i}_n}\bigr). \label{eq:T_hat_minus_T_decomposition}
\end{equation}
Here, the function $V$ is defined as
\[
V\bigl(X^{{i}_1},\ldots,X^{{i}_n}\bigr) = \frac{2}{n} \bigl
\{ g\bigl(X^{i_1},X^{{i}_2}\bigr) + g\bigl(X^{{i}_3},X^{{i}_4}
\bigr) + \cdots+ g\bigl(X^{{i}_{n-1}},X^{{i}_{n}}\bigr) \bigr\},
\]
the kernel $g$ is defined as
\[
g\bigl(X^i,X^j\bigr) = \sgn\bigl(X^i-X^j
\bigr) \sgn\bigl(X^i-X^j\bigr)^T,
\]
and the sum $ \sum_{n,n}$ is taken over all permutations
$i_1,i_2,\ldots,i_n$ of the integers $1,2,\ldots,n$.

\item[2.]
The trace exponential function is convex on the set of Hermitian
matrices \cite{Petz94}.
\end{longlist}
Therefore, using first (\ref{eq:T_hat_minus_T_decomposition}) and then
Jensen's inequality, we have
%
\begin{eqnarray}\label{eq:P_T_hat_2}
\tr \mathrm{e}^{\theta( \wh T -T)} &= &\tr\exp \biggl\{\sum_{n,n}
\frac{1}{n!} \theta \bigl[ V\bigl(X^{{i}_1},\ldots,X^{{i}_n}
\bigr) - T \bigr] \biggr\}
\nonumber
\\[-8pt]
\\[-8pt]
\nonumber
&\le&\sum_{n,n} \frac{1}{n!} \tr\exp \bigl\{
\theta \bigl[ V\bigl(X^{{i}_1},\ldots,X^{{i}_n}\bigr) - T \bigr]
\bigr\}.
\end{eqnarray}
Then, plugging inequality (\ref{eq:P_T_hat_2}) into inequality (\ref
{eq:P_T_hat_1}), we have
\begin{eqnarray*}
\PP \bigl\{ \lambda_{\max}( \wh T -T)\ge t \bigr\} &\le&\inf
_{\theta>0} \biggl\{ \mathrm{e}^{-\theta t}\cdot\EE \biggl[ \sum
_{n,n} \frac{1}{n!} \tr \mathrm{e}^{
\theta [ V(X^{{i}_1},\ldots,X^{{i}_n}) - T  ] } \biggr] \biggr
\}
\\
& =& \inf_{\theta>0} \bigl\{ \mathrm{e}^{-\theta t}\cdot\EE \bigl[ \tr
\mathrm{e}^{
\theta [ V(X^{1},X^{2},\ldots,X^{n}) - T  ] } \bigr] \bigr\}
\\
&= &\inf_{\theta>0} \bigl\{ \mathrm{e}^{-\theta t}\cdot\EE \bigl[ \tr
\mathrm{e}^{\theta
(\widetilde{T}-T)} \bigr] \bigr\}.
\end{eqnarray*}

The second term on the right-hand side of inequality (\ref{eq:P_T_hat})
can be similarly bounded. The conclusion of the proposition then follows.
\end{pf}

In Proposition~\ref{prp:trace_exponential}, the argument of the trace
exponential function is proportional to
\[
\widetilde{T} -T = \sum_{i=1}^{n/2}
\frac{2}{n} \bigl( \widetilde {T}^i - T \bigr),
\]
with now independent summands $2n^{-1}(\widetilde{T}^i - T)$, $1\le
i\le n/2$, which are also symmetric. Therefore, we can proceed as in
the proof of \cite{Tropp14}, Theorem~6.6.1, to bound $\EE [ \tr \mathrm{e}^{
\theta(\widetilde{T} -T) } ]$ and $\EE [ \tr \mathrm{e}^{ \theta
(T-\widetilde{T}) } ]$. We calculate the quantities necessary for
applying the proof. First, (for any $i$) we clearly have $\EE
[\widetilde{T}^i-T]=0$. Next, by the representation of $\wt T^i$ as in
(\ref{eq:T_tilde_i}), we conclude that the only nonzero eigenvalue of
$\widetilde{T}^i$ is $d$ which corresponds to the eigenvector $\sgn
(X^{2i-1}-X^{2i})$; thus, $\lambda_{\max}(\widetilde{T}^i)=d$. This,
together with Weyl's inequality and the facts that $T$ is positive
semidefinite and $\|T\|_2\le d\cdot\|T\|_{\infty}\le d$, imply that
\begin{subequations}
%
\begin{eqnarray}
\lambda_{\max}\bigl(\widetilde{T}^i-T\bigr) &\le&
\lambda_{\max}\bigl(\widetilde{T}^i\bigr) = d, \label{eq:lambda_max_wt_T-T}
\\
\lambda_{\max}\bigl(T-\widetilde{T}^i\bigr) &\le&
\lambda_{\max}(T) \le d. \label
{eq:lambda_max_T-wt_T}
\end{eqnarray}
\end{subequations}
Finally, we calculate
\[
\sigma^2 = \Biggl\llVert \sum_{i=1}^{n/2}
\EE \biggl\{ \biggl[ \frac
{2}{n}\bigl(\widetilde{T}^i-T\bigr)
\biggr]^2 \biggr\} \Biggr\rrVert _2,
\]
the matrix variance statistic of the sum as defined in \cite{Tropp14},
Theorem~6.6.1. Note that
\begin{eqnarray*}
\bigl(\widetilde{T}^i \bigr)^2 &=& \sgn
\bigl(X^{2i-1}-X^{2i}\bigr)\sgn \bigl(X^{2i-1}-X^{2i}
\bigr)^T \sgn\bigl(X^{2i-1}-X^{2i}\bigr)\sgn
\bigl(X^{2i-1}-X^{2i}\bigr)^T
\\
&=& \sgn\bigl(X^{2i-1}-X^{2i}\bigr) \bigl[ \sgn
\bigl(X^{2i-1}-X^{2i}\bigr)^T \sgn
\bigl(X^{2i-1}-X^{2i}\bigr) \bigr] \sgn\bigl(X^{2i-1}-X^{2i}
\bigr)^T
\\
&=& d \cdot\sgn\bigl(X^{2i-1}-X^{2i}\bigr) \sgn
\bigl(X^{2i-1}-X^{2i}\bigr)^T \\
&= &d \cdot
\widetilde{T}^i.
\end{eqnarray*}
Then
%
\begin{equation}
\biggl(\frac{n}{2} \biggr)^2 \sigma^2  = \Biggl
\llVert \sum_{i=1}^{n/2} \EE \bigl[d\cdot
\widetilde{T}^i-T^2 \bigr] \Biggr\rrVert _2 =
\frac{n}{2} \bigl\llVert d\cdot T-T^2 \bigr\rrVert
_2 \le\frac{n}{2} d \|T\|_2. \label
{eq:matrix_variance_statistic}
\end{equation}
Hence, by Proposition~\ref{prp:trace_exponential} and the proof of
\cite{Tropp14},
inequality (6.6.3) in Theorem~6.6.1, as well as (\ref
{eq:lambda_max_wt_T-T}), (\ref{eq:lambda_max_T-wt_T}) and (\ref
{eq:matrix_variance_statistic}), we obtain the matrix Bernstein inequality
%
\begin{eqnarray}\label{eq:Tropp_split_Bernstein}
\PP \bigl(\|\widehat{T}-T\|_2 \ge t \bigr) &\le& 2d\cdot\exp \biggl(-
\frac{nt^2}{4d\|T\|_2 + 4\,\mathrm{d}t/3} \biggr)
\nonumber
\\[-8pt]
\\[-8pt]
\nonumber
&\le& 2d\cdot\max \biggl\{ \exp \biggl(-\frac{3}{16}\frac{nt^2}{d\|
T\|
_2}
\biggr), \exp \biggl( -\frac{3}{16}\frac{nt}{d} \biggr) \biggr\}.
\end{eqnarray}
(By Proposition~\ref{prp:trace_exponential} and the proof of \cite{Tropp12a},
Theorem~6.1, we can also obtain the tighter matrix Bennett
inequality.) Finally, setting the right-hand side of inequality (\ref
{eq:Tropp_split_Bernstein}) to $\alpha$ and solving for $t$ yields that
inequality (\ref{eq:T_hat_operator_norm_1}) holds with probability at
least $1-\alpha$. 

\subsubsection{Proof of inequalities \texorpdfstring{\textup{(\protect\ref{eq:T_hat_operator_norm_2})}}{(2.1b)}
and \texorpdfstring{\textup{(\protect\ref{eq:T_hat_operator_norm_3})}}{(2.1c)}}
\label{app:T_hat_operator_norm_2}
We abbreviate $f(n,d,\alpha)$ by $f$, $\|T\|_2$ by $t$, $\|\wh T\|_2$
by $\hat t$, and $\|\widehat{T}-T\|_2$ by $\delta$. We have already
established that we have an event with probability at least $1-\alpha$
on which inequality (\ref{eq:T_hat_operator_norm_1}), that is, $\delta
<\max \{ f \sqrt{t}, f^2 \}$, holds, and we concentrate on
this event.

We proceed to prove inequality (\ref{eq:T_hat_operator_norm_2}), which states
%
\begin{equation}
\max \bigl\{ f \sqrt{t}, f^2 \bigr\} \le\sqrt{ \hat t
f^2 + \bigl(\tfrac
{1}{2}f^2 \bigr)^2
} + \tfrac{1}{2}f^2. \label{eq:bound_t_f}
\end{equation}
Now, if $f \sqrt{t} \le f^2$ and so $\max \{f \sqrt{t} ,
f^2 \}
=f^2$, then inequality (\ref{eq:bound_t_f}) clearly holds. Thus, we
focus on the case $f \sqrt{t} > f^2$. In this case, by
inequality (\ref
{eq:T_hat_operator_norm_1}), we must have
%
\begin{equation}
\delta< f \sqrt{t}. \label{eq:bound_t_f_5}
\end{equation}
By the triangle inequality,
%
\begin{equation}
f\sqrt{t} \le f\sqrt{\delta+\hat t}. \label{eq:bound_t_f_1}
\end{equation}
Then, from inequalities (\ref{eq:bound_t_f_5}) and (\ref
{eq:bound_t_f_1}) we deduce
%
\begin{equation}
f\sqrt{t} < f \sqrt{f\sqrt{t}+\hat t}\label{eq:bound_t_f_2}.
\end{equation}
Squaring both sides of inequality (\ref{eq:bound_t_f_2}) yields $t f^2
< f^3 \sqrt{t} + \hat t f^2$, or equivalently
%
\begin{equation}
\bigl(f\sqrt{t} - \tfrac{1}{2}f^2 \bigr)^2 < \hat
t f^2 + \bigl(\tfrac
{1}{2}f^2
\bigr)^2. \label{eq:bound_t_f_3}
\end{equation}
Because in the current case $f \sqrt{t} >f^2> \frac{1}{2}f^2$,
inequality (\ref{eq:bound_t_f_3}) implies
\[
f \sqrt{t} < \sqrt{ \hat t f^2 + \bigl(\tfrac{1}{2}f^2
\bigr)^2 } + \tfrac{1}{2}f^2,
\]
which, together with $f \sqrt{t} >f^2$, again implies inequality (\ref
{eq:bound_t_f}). Hence, we have proved inequality (\ref
{eq:T_hat_operator_norm_2}).

Next, we prove inequality (\ref{eq:T_hat_operator_norm_3}). By the
triangle inequality,
%
\begin{equation}
\sqrt{\hat t f^2 + \bigl(\tfrac{1}{2}f^2
\bigr)^2 } + \tfrac{1}{2}f^2 \le\sqrt{t
f^2 + \delta f^2+ \bigl(\tfrac{1}{2}f^2
\bigr)^2 } + \tfrac
{1}{2}f^2. \label{eq:bound_t_h}
\end{equation}
First, assume that $\delta< f \sqrt{t}$. Then, from inequality (\ref
{eq:bound_t_h}) we deduce
%
\begin{eqnarray}\label{eq:bound_t_h_1}
\sqrt{ \hat t f^2 + \bigl(\tfrac{1}{2}f^2
\bigr)^2 } + \tfrac{1}{2}f^2 & <& \sqrt{t
f^2 + f^3 \sqrt{t}+ \bigl(\tfrac{1}{2}f^2
\bigr)^2 } + \tfrac{1}{2}f^2
\nonumber
\\[-8pt]
\\[-8pt]
\nonumber
&=& \bigl(f \sqrt{t} +\tfrac{1}{2}f^2 \bigr)+
\tfrac{1}{2}f^2.
\end{eqnarray}
Next, suppose instead $\delta\ge f \sqrt{t}$, so by inequality (\ref
{eq:T_hat_operator_norm_1}) we must have $f \sqrt{t} \le\delta<
f^2$. Then, from inequality (\ref{eq:bound_t_h}) we deduce
%
\begin{equation}
\sqrt{\hat t f^2 + \bigl(\tfrac{1}{2}f^2
\bigr)^2 } + \tfrac
{1}{2}f^2  <
\sqrt{f^4 + f^4+ \bigl(\tfrac{1}{2}f^2
\bigr)^2 } + \tfrac
{1}{2}f^2 =
\tfrac{3}{2}f^2 + \tfrac{1}{2}f^2.
\label{eq:bound_t_h_2}
\end{equation}
Both inequalities (\ref{eq:bound_t_h_1}) and (\ref{eq:bound_t_h_2})
further imply that
\[
\sqrt{\hat t f^2 + \bigl(\tfrac{1}{2}f^2
\bigr)^2 } + \tfrac
{1}{2}f^2 < \max \bigl\{ f
\sqrt{t} ,f^2 \bigr\} + f^2,
\]
which is just inequality (\ref{eq:T_hat_operator_norm_3}). 

\subsection{Proof of Theorem \texorpdfstring{\protect\ref{thmm:main_operator_norm_bound}}{2.2}}
\label{app:main_operator_norm_bound}

The proof of Theorem~\ref{thmm:main_operator_norm_bound} will be
established through the following three lemmas. Recall that we use
$\circ$ to denote the Hadamard product.

\begin{lemma}
\label{lemma:decompose_first_second_order}
We have
\[
\bigl\| \wh\Sigma' - \Sigma\bigr\|_2 \le\frac{\pi}{2}\cdot
\biggl\llVert \cos \biggl(\frac{\pi}{2}T \biggr) \circ\bigl(
\widehat{T}'-T\bigr) \biggr\rrVert _2 +
\frac
{\pi
^2}{8}\cdot\biggl\llVert \sin \biggl(\frac{\pi}{2}\overline{T}
\biggr)\circ \bigl(\widehat{T}'-T\bigr)\circ\bigl(
\widehat{T}'-T\bigr)\biggr\rrVert _2.
\]
Here, $\overline{T}$ is a symmetric, random matrix such that each entry
$[\overline{T}]_{k\ell}$ is a random number on the closed interval
between $[T]_{k\ell}$ and $[\widehat{T}']_{k\ell}$.
\end{lemma}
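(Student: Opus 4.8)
The plan is to prove this entry-wise and then pass to the operator norm via the Hadamard-product interpretation. Fix indices $j,k$ and write $a=[T]_{jk}$, $b=[\widehat T']_{jk}$, both lying in $[-1,1]$. Define $\psi(x)=\sin(\tfrac{\pi}{2}x)$, so that $[\Sigma]_{jk}=\psi(a)$ and $[\widehat\Sigma']_{jk}=\psi(b)$. Applying Taylor's theorem with the Lagrange form of the remainder to $\psi$ about $a$, there exists $[\overline T]_{jk}$ strictly between $a$ and $b$ such that
\[
\psi(b)-\psi(a)=\psi'(a)\,(b-a)+\tfrac12\psi''\!\bigl([\overline T]_{jk}\bigr)\,(b-a)^2.
\]
Since $\psi'(x)=\tfrac{\pi}{2}\cos(\tfrac{\pi}{2}x)$ and $\psi''(x)=-\tfrac{\pi^2}{4}\sin(\tfrac{\pi}{2}x)$, this reads
\[
[\widehat\Sigma'-\Sigma]_{jk}
=\tfrac{\pi}{2}\cos\!\Bigl(\tfrac{\pi}{2}[T]_{jk}\Bigr)[\widehat T'-T]_{jk}
-\tfrac{\pi^2}{8}\sin\!\Bigl(\tfrac{\pi}{2}[\overline T]_{jk}\Bigr)[\widehat T'-T]_{jk}^2,
\]
which is exactly the entry-wise identity $\widehat\Sigma'-\Sigma = \tfrac{\pi}{2}\cos(\tfrac{\pi}{2}T)\circ(\widehat T'-T)-\tfrac{\pi^2}{8}\sin(\tfrac{\pi}{2}\overline T)\circ(\widehat T'-T)\circ(\widehat T'-T)$, with the convention that trig functions act component-wise and $[\overline T]_{jk}$ lies between $[T]_{jk}$ and $[\widehat T']_{jk}$ as claimed. (On the diagonal $a=b=1$ so both terms vanish; any choice of $[\overline T]_{jj}$ works.)

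Next I would split by the triangle inequality for the operator norm:
\[
\|\widehat\Sigma'-\Sigma\|_2
\le \tfrac{\pi}{2}\Bigl\|\cos\!\bigl(\tfrac{\pi}{2}T\bigr)\circ(\widehat T'-T)\Bigr\|_2
+\tfrac{\pi^2}{8}\Bigl\|\sin\!\bigl(\tfrac{\pi}{2}\overline T\bigr)\circ(\widehat T'-T)\circ(\widehat T'-T)\Bigr\|_2,
\]
which is the asserted bound. The only subtlety is that the decomposition must be a genuine matrix identity before splitting; since Hadamard multiplication and matrix subtraction are both entry-wise, the entry-wise Taylor expansion above assembles into the matrix identity directly, so the triangle inequality applies verbatim.

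The main obstacle I anticipate is purely bookkeeping rather than conceptual: making sure the remainder matrix $\overline T$ is well-defined as a single matrix (it is, since we choose its $(j,k)$ entry independently for each pair, and the statement only requires each entry to lie between the corresponding entries of $T$ and $\widehat T'$) and confirming that $\psi$ is smooth enough on a neighborhood of $[-1,1]$ to license the second-order Taylor expansion with Lagrange remainder — it is entire, so there is no issue. I do not expect to need any bound on the magnitude of the sine or cosine factors at this stage; those estimates (e.g.\ replacing $\cos$ by something $\le 1$) are deferred to the subsequent lemmas that control each of the two terms separately and ultimately yield the constants $C_1'=\pi$ and $C_2'=\pi^2/8$ in Theorem~\ref{thm:main_operator_norm_bound}.
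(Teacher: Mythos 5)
Your proposal is correct and is essentially identical to the paper's own proof: both apply Taylor's theorem with Lagrange remainder entry-wise to $\sin(\tfrac{\pi}{2}\cdot)$, assemble the result into a Hadamard-product matrix identity, and then apply the operator norm with the triangle inequality. Your treatment of the degenerate case $[T]_{jk}=[\widehat T']_{jk}$ (where the closed-interval phrasing matters) is a minor point the paper leaves implicit, but nothing of substance differs.
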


\begin{pf}
By Taylor's theorem, we have
%
\begin{eqnarray}\label{eq:T_hat_expansion}
\wh\Sigma' - \Sigma&=& \sin \biggl(\frac{\pi}{2}\widehat
{T}' \biggr) - \sin \biggl(\frac{\pi}{2}T \biggr)
\nonumber
\\[-8pt]
\\[-8pt]
\nonumber
&=& \cos \biggl(\frac{\pi}{2}T \biggr) \circ\frac{\pi
}{2}\bigl(\widehat
{T}'-T\bigr) - \frac{1}{2}\sin \biggl(\frac{\pi}{2}
\overline{T} \biggr)\circ \frac{\pi}{2}\bigl(\widehat{T}'-T
\bigr)\circ\frac{\pi}{2}\bigl(\widehat{T}'-T\bigr),
\end{eqnarray}
for some matrix $\overline{T}$ as specified in the theorem. Next,
applying the operator norm on both sides of equation (\ref
{eq:T_hat_expansion}) and then using the triangle inequality on the
right-hand side yields the lemma.
\end{pf}

Hence, it suffices to establish appropriate bounds separately for a
first-order term,
$\llVert \cos (\frac{\pi}{2}T ) \circ(\widehat
{T}'-T)\rrVert
_2$, and a second-order term, $\llVert \sin (\frac{\pi
}{2}\overline
{T} )\circ(\widehat{T}'-T)\circ(\widehat{T}'-T)\rrVert _2$.

\begin{lemma}
\label{lemma:first_order}
For the first-order term, we have
\[
\biggl\llVert \cos \biggl(\frac{\pi}{2}T \biggr) \circ\bigl(
\widehat{T}'-T\bigr) \biggr\rrVert _2 \le2\bigl\|
\widehat{T}'-T \bigr\|_2.
\]
\end{lemma}
\begin{pf}
Recall that $\sin (\frac{\pi}{2}T )=\Sigma$. Then, with $J_d$
denoting a $d\times d$ matrix with all entries identically equal to
one, and the square root function acting component-wise, we have
%
\begin{equation}
\cos \biggl(\frac{\pi}{2}T \biggr) = \sqrt{J_d- \sin \biggl(
\frac
{\pi
}{2}T \biggr)\circ\sin \biggl(\frac{\pi}{2}T \biggr)} = \sqrt
{J_d-\Sigma \circ\Sigma}. \label{eq:sqrt_cos}
\end{equation}
Next, using the generalized binomial formula
\[
(1 + x)^{\alpha} = \sum_{k=0}^{\infty}\pmatrix{
\alpha\cr k} x^k
\]
on equation (\ref{eq:sqrt_cos}) with $\alpha=\frac{1}{2}$ and $x$ being
the components of $-\Sigma\circ\Sigma$ (so the sum converges, in fact
absolutely, since $\alpha>0$ and $\|\Sigma\circ\Sigma\|_{\infty
}\le
1$), we have
\[
\cos \biggl(\frac{\pi}{2}T \biggr) =\sum_{k=0}^{\infty}
\pmatrix{1/2\cr k} (-1)^k \Sigma\circ_{2k}\Sigma.
\]
Here, by $\Sigma\circ_{l}\Sigma$ we mean the Hadamard product of $l$
$\Sigma$'s, that is, $\Sigma\circ\cdots\circ\Sigma$ with a total
of $l$ terms.
Hence,
%
\begin{eqnarray}\label{eq:T_hat_first_order_1}
\biggl\llVert \cos \biggl(\frac{\pi}{2}T \biggr) \circ\bigl(
\widehat{T}'-T\bigr) \biggr\rrVert _2 &=& \Biggl\llVert
\Biggl[\sum_{k=0}^{\infty} \pmatrix{1/2\cr k}
(-1)^k \Sigma \circ_{2k}\Sigma \Biggr] \circ\bigl(
\widehat{T}'-T\bigr) \Biggr\rrVert _2
\nonumber
\\[-8pt]
\\[-8pt]
\nonumber
&\le&\sum_{k=0}^{\infty} \left\llvert \pmatrix{1/2\cr k}
\right\rrvert \cdot\bigl\llVert (\Sigma\circ_{2k}\Sigma) \circ\bigl(
\widehat{T}'-T\bigr) \bigr\rrVert _2.
\end{eqnarray}
Because $\Sigma$ is positive semidefinite (since it is a correlation
matrix), by the Schur product theorem, $\Sigma\circ_{2k}\Sigma$ is
positive semidefinite for all $k$; moreover, $\Sigma\circ_{2k}\Sigma$'s
all have diagonal elements identically equal to one. Then, by \cite{Horn91},
Theorem~5.5.18, we have, for all $k$,
%
\begin{equation}
\bigl\llVert (\Sigma\circ_{2k}\Sigma) \circ\bigl(
\widehat{T}'-T\bigr) \bigr\rrVert _2 \le \bigl\|
\widehat{T}'-T \bigr\|_2. \label{eq:T_hat_first_order_2}
\end{equation}
Plugging (\ref{eq:T_hat_first_order_2}) into (\ref
{eq:T_hat_first_order_1}) and then using the fact that $\sum_{k=0}^{\infty}
\bigl\llvert {1/2\choose k}\bigl\rrvert  = 2$ yield
%
\begin{equation}
\biggl\llVert \cos \biggl(\frac{\pi}{2}T \biggr) \circ\bigl(
\widehat{T}'-T\bigr) \biggr\rrVert _2 \le \Biggl[ \sum
_{k=0}^{\infty} \left\llvert \pmatrix{1/2\cr k}\right\rrvert
\Biggr] \cdot\bigl\| \widehat{T}'-T \bigr\|_2 = 2 \bigl\|
\widehat{T}'-T \bigr\|_2, \label{eq:T_hat_first_order_3}
\end{equation}
which is the conclusion of the lemma.
\end{pf}

\begin{lemma}
\label{lemma:second_order}
For the second-order term, we have
%
\begin{equation}
\biggl\llVert \sin \biggl(\frac{\pi}{2}\overline{T} \biggr)\circ \bigl(
\widehat {T}'-T\bigr)\circ\bigl(\widehat{T}'-T\bigr)
\biggr\rrVert _2 \le\bigl\|\widehat{T}'-T
\bigr\|_2^2. \label{eq:second_order_raw_generic}
\end{equation}
Alternatively, for the particular case $\wh T'=\wh T$, we have, with
probability at least $1-\frac{1}{4}\alpha^2$,
%
\begin{equation}
\biggl\llVert \sin \biggl(\frac{\pi}{2}\overline{T} \biggr)\circ (\widehat
{T}-T)\circ(\widehat{T}-T)\biggr\rrVert _2 \le8\cdot
\frac{d\cdot\log
(2\alpha^{-1}d)}{n}. \label{eq:second_order_raw}
\end{equation}
\end{lemma}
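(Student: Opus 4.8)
The plan is to reduce both claims to a single elementary estimate on $\|(\widehat T'-T)\circ(\widehat T'-T)\|_2$ and then, for the sharp bound, to combine it with the element-wise $\ell_\infty$ control of $\widehat T-T$ coming from Hoeffding's inequality. Write $E\colonequals\widehat T'-T$, which is symmetric, and $S\colonequals\sin\left(\frac{\pi}{2}\overline T\right)$. Since each entry of $\overline T$ lies in the closed interval between the corresponding entries of $T$ and $\widehat T'$, and $|\sin|\le1$ on $\RR$, we have $|[S]_{jk}|\le1$ for all $j,k$; hence $S\circ E\circ E$ is dominated entrywise in absolute value by the entrywise-nonnegative matrix $E\circ E$, whose $(j,k)$ entry is $[E]_{jk}^2$. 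Because the operator norm is monotone under such domination (for real matrices $A,B$ of the same size with $|[A]_{jk}|\le[B]_{jk}$ and $B$ entrywise nonnegative one has $|y^T A x|\le|y|^T B|x|$ for all vectors $x,y$, whence $\|A\|_2\le\|B\|_2$), it suffices to bound $\|E\circ E\|_2$.

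For that I would use the largest row sum of the nonnegative matrix $E\circ E$: by the standard interpolation bound $\|M\|_2^2\le\left(\max_j\sum_k|[M]_{jk}|\right)\left(\max_k\sum_j|[M]_{jk}|\right)$ (Schur test / Riesz--Thorin), and since $E\circ E$ is symmetric with nonnegative entries so that both factors equal $\max_j\sum_k[E]_{jk}^2$, we get $\|E\circ E\|_2\le\max_j\sum_k[E]_{jk}^2$. This quantity has two useful upper bounds: on one hand $\sum_k[E]_{jk}^2=\|Ee_j\|_2^2\le\|E\|_2^2$, which yields the generic bound (\ref{eq:second_order_raw_generic}); on the other hand $\sum_k[E]_{jk}^2\le d\,\|E\|_\infty^2$, which is the estimate I would use for (\ref{eq:second_order_raw}). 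The point of switching to the second estimate is that the first one, through Theorem~\ref{thm:T_hat_operator_norm}, carries an unwanted $\|T\|_2$ factor, whereas $\|E\|_\infty$ does not see $\|T\|_2$ at all.

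It then remains, in the case $\widehat T'=\widehat T$, to show that $\|\widehat T-T\|_\infty=\max_{k<\ell}|\widehat\tau_{k\ell}-\tau_{k\ell}|$ is at most $\sqrt{8\log(2\alpha^{-1}d)/n}$ on an event of probability at least $1-\frac14\alpha^2$; note that the diagonal of $\widehat T-T$ is identically zero, since continuity of the marginals forces $\tau_{kk}=\widehat\tau_{kk}=1$. Each $\widehat\tau_{k\ell}$ is a U-statistic of order two with kernel $\sgn\!\left((X^i_k-X^j_k)(X^i_\ell-X^j_\ell)\right)$ taking values in $\{-1,0,1\}$, so Hoeffding's inequality for U-statistics gives $\PP\{|\widehat\tau_{k\ell}-\tau_{k\ell}|\ge t\}\le2\exp(-nt^2/4)$ when $n$ is even (with $n$ replaced by $2\lfloor n/2\rfloor$ otherwise). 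A union bound over the fewer than $d^2/2$ off-diagonal pairs gives $\PP\{\|\widehat T-T\|_\infty\ge t\}<d^2\exp(-nt^2/4)$, and choosing $t=\sqrt{8\log(2\alpha^{-1}d)/n}$ makes the right-hand side equal to $\frac14\alpha^2$. On the complementary event $A$ we therefore have $\max_j\sum_k[E]_{jk}^2\le(d-1)t^2<8\,d\log(2\alpha^{-1}d)/n$, which together with the reduction of the first two paragraphs proves (\ref{eq:second_order_raw}).

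The only genuine subtlety I anticipate is recognizing that the generic bound $\|\widehat T-T\|_2^2$ is too weak for (\ref{eq:second_order_raw}) --- since Theorem~\ref{thm:T_hat_operator_norm} controls $\|\widehat T-T\|_2$ only up to a $\sqrt{\|T\|_2}$ factor --- so that one must instead route through the element-wise $\ell_\infty$ norm at the cost of a factor $d$. Everything else is routine: the entrywise domination, the Schur-test interpolation, and Hoeffding plus a union bound are all standard, and the common refinement $\max_j\sum_k[E]_{jk}^2\le\min\{\|E\|_2^2,\,d\,\|E\|_\infty^2\}$ is exactly what lets a single two-line reduction serve both parts of the lemma.
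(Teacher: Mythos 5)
Your proof is correct and follows the same overall architecture as the paper's: an entrywise-domination argument reduces both claims to bounding $\|(\widehat T'-T)\circ(\widehat T'-T)\|_2$, and the sharp bound is then obtained from Hoeffding's inequality for the scalar U-statistic plus a union bound over entries, exactly as in the paper. The one genuine difference is the intermediate inequality: the paper bounds $\|E\circ E\|_2\le\|E\|_2^2$ by citing the Hadamard-product submultiplicativity $\|A\circ B\|_2\le\|A\|_2\|B\|_2$ (Horn--Johnson, Theorem~5.5.1) for the generic claim, and separately uses the crude $\|M\|_2\le d\|M\|_\infty$ for the specific claim; you instead route both claims through the single elementary Schur-test estimate $\|E\circ E\|_2\le\max_j\sum_k[E]_{jk}^2$ and then bound that row sum by $\min\{\|E\|_2^2,\,d\|E\|_\infty^2\}$. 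This buys a self-contained argument that avoids the (nontrivial) Hadamard-product operator-norm inequality and unifies the two cases, at no loss in the constants; the paper's version is shorter on the page only because it outsources that step to a reference. Your observation that the $\|E\|_2^2$ route is useless for the second claim (it drags in $\sqrt{\|T\|_2}$ through Theorem~\ref{thm:T_hat_operator_norm}) is precisely the point of the paper's switch to the $\ell_\infty$ norm as well, and your probability accounting ($t^2=8\log(2\alpha^{-1}d)/n$ yielding failure probability $\alpha^2/4$) matches the paper's exactly.
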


\begin{pf}
First, we observe a simple fact: for two matrices $M,N\in\RR^{k\times
\ell}$ (for arbitrary $k,\ell$), if $|[M]_{ij}|\le[N]_{ij}$ for all
$1\le i\le k,1\le j\le\ell$, then $\|M\|_2\le\|N\|_2$.

To see this, we fix an arbitrary vector $u=(u_1,\ldots,u_\ell)^T\in
\mathbb{R}^{\ell}$ with $\|u\|=1$, with $\|\cdot\|$ being the Euclidean
norm for vectors. Let $\tilde{u}=(\tilde{u}_1,\ldots,\tilde{u}_{\ell
})^T\in\mathbb{R}^{\ell}$ be the vector such that $\tilde{u}_j = |u_j|$
for $j=1,\ldots,\ell$, that is, each component of $\tilde{u}$ is the
absolute value of the corresponding component of $u$. Clearly, $\|
\tilde
{u}\|=1$ as well. Then we have, for all $1\le i\le k$,
\[
\bigl\llvert [M u]_i\bigr\rrvert = \Biggl\llvert \sum
_{j=1}^\ell[M]_{ij}u_j \Biggr
\rrvert \le\sum_{j=1}^\ell\bigl|[M]_{ij}\bigr||u_j|
\le\sum_{j=1}^\ell[N]_{ij}
\tilde {u}_j = \bigl|[N \tilde{u}]_i\bigr|.
\]
Here, $[M u]_i$ and $[N \tilde{u}]_i$ are the $i$th component of the
vectors $M u$ and $N \tilde{u}$, respectively. Hence, clearly, $\|M u\|
\le\|N \tilde{u}\|$, which further implies that
\[
\sup \bigl\{ \llVert M u\rrVert \dvt \|u\|=1 \bigr\} \le\sup \bigl\{ \llVert N u
\rrVert \dvt \|u\|=1 \bigr\},
\]
and we conclude that $\|M\|_2\le\|N\|_2$.

Now, it is easy to see that
\[
\biggl\llvert \biggl[\sin \biggl(\frac{\pi}{2}\overline{T} \biggr)\circ
\bigl(\widehat {T}'-T\bigr)\circ\bigl(\widehat{T}'-T
\bigr) \biggr]_{ij}\biggr\rrvert \le \bigl[\bigl(\widehat
{T}'-T\bigr)\circ\bigl(\widehat{T}'-T\bigr)
\bigr]_{ij}\qquad \forall1\le i,j\le d.
\]
Hence, by the preceding observation, we have
%
\begin{equation}
\biggl\llVert \sin \biggl(\frac{\pi}{2}\overline{T} \biggr)\circ \bigl(
\widehat {T}'-T\bigr)\circ\bigl(\widehat{T}'-T\bigr)
\biggr\rrVert _2 \le\bigl\llVert \bigl(\widehat {T}'-T
\bigr)\circ \bigl(\widehat{T}'-T\bigr)\bigr\rrVert _2.
\label{eq:second_order_1}
\end{equation}
By \cite{Horn91}, Theorem~5.5.1, we further have
%
\begin{equation}
\bigl\llVert \bigl(\widehat{T}'-T\bigr)\circ\bigl(
\widehat{T}'-T\bigr)\bigr\rrVert _2 \le\bigl\|\wh
T'-T\bigr\|_2^2. \label{eq:second_order_2}
\end{equation}
Then inequality (\ref{eq:second_order_raw_generic}) follows from
inequalities (\ref{eq:second_order_1}) and (\ref{eq:second_order_2}).

Next, we prove the second half of the lemma. We have
%
\begin{equation}
\biggl\llVert \sin \biggl(\frac{\pi}{2}\overline{T} \biggr)\circ(\wh T-T)
\circ (\wh T-T)\biggr\rrVert _2 \le\bigl\llVert (\wh T-T)\circ(\wh
T-T)\bigr\rrVert _2 \le d \|\wh T-T\|_{\infty}^2.
\label{eq:second_order_T_hat_1}
\end{equation}
Here, the first inequality follows by inequality (\ref
{eq:second_order_1}) with the choice $\wh T'=\wh T$, and the second
inequality follows by the bound that $\|M\circ M\|_2 \le d\|M\circ M\|
_{\infty} = d \|M\|_{\infty}^2$ for arbitrary $M\in\RR^{d\times
d}$. By
Hoeffding's inequality for the scalar $U$-statistic \cite{Hoeffding63},
\[
\PP \bigl( | \widehat{T}_{jk}-T_{jk} | \ge t \bigr) \le2 \exp
\biggl(-\frac{nt^2}{4} \biggr),
\]
and so, by the union bound,
\[
\PP \bigl( \| \widehat{T} - T \|_{\infty} \ge t \bigr) \le d^2 \exp
\biggl(-\frac{nt^2}{4} \biggr).
\]
Thus, there exists an event $A$ with probability at least $1-\frac
{1}{4}\alpha^2$ such that
%
\begin{equation}
\| \widehat{T} - T \|_{\infty}^2 \le4\cdot\frac{\log(4\alpha
^{-2}d^2)}{n} =
8\cdot\frac{\log(2\alpha^{-1}d)}{n} \label{eq:second_order_T_hat_2}
\end{equation}
on the event $A$. Plugging inequality (\ref{eq:second_order_T_hat_2})
into inequality (\ref{eq:second_order_T_hat_1}) yields that
inequality (\ref{eq:second_order_raw}) holds on the same event. This
finishes the proof of the lemma.
\end{pf}

The conclusions of Theorem~\ref{thmm:main_operator_norm_bound} now
follow immediately. In particular, inequality (\ref{Lipschitz_generic})
follows from Lemmas~\ref{lemma:decompose_first_second_order},
\ref
{lemma:first_order} and inequality (\ref{eq:second_order_raw_generic})
in Lemma~\ref{lemma:second_order}, while inequality (\ref
{Lipschitz_Kendall}) follows from Lemmas~\ref
{lemma:decompose_first_second_order} and~\ref{lemma:first_order}
with $\wh T'$ set to $\wh T$ and $\wh\Sigma'$ set to $\wh\Sigma$, and
inequality (\ref{eq:second_order_raw}) in Lemma~\ref
{lemma:second_order}, which holds with probability at least $1-\frac
{1}{4}\alpha^2$. 

\subsection{Proof of Theorem \texorpdfstring{\protect\ref{thmm:T_Sigma}}{2.3}}
\label{app:T_Sigma}

We let the $\arcsin$ function have the series expansion $\arcsin(x) =
\sum_{k=0}^{\infty}g(k) x^k$ for $|x|\le1$. The exact form of the
$g(k)$'s for all $k$ is not important; we only need $g(0)=0$, $g(1)=1$,
all the $g(k)$'s are nonnegative, and $\sum_{k=0}^{\infty}g(k) = \pi
/2$. With the $\arcsin$ function acting component-wise, and with
$\Sigma
\circ_{k}\Sigma$ denoting the Hadamard product of $k$ $\Sigma$'s, we have
\[
T = \frac{2}{\pi}\arcsin(\Sigma) = \frac{2}{\pi}\sum
_{k=0}^{\infty} g(k) \Sigma\circ_{k}\Sigma.
\]
Because $\Sigma$ is positive semidefinite, by the Schur product
theorem, $\Sigma\circ_{k}\Sigma$, and thus $g(k) \Sigma\circ
_{k}\Sigma
$, are positive semidefinite for all $k\ge0$. In addition, $T$ is
positive semidefinite. Hence, by Weyl's inequality and the triangle inequality,
%
\begin{equation}
\frac{2}{\pi} g(1)\|\Sigma\|_2 \le\|T\|_2 \le
\frac{2}{\pi} \sum_{k=0}^{\infty} g(k)\|
\Sigma\circ_{k}\Sigma\|_2. \label{eq:T_Sigma_1}
\end{equation}
The first half of inequality (\ref{eq:T_Sigma_1}) yields the first half
of inequality (\ref{eq:T_Sigma}). Next, note that the $\Sigma\circ
_{k}\Sigma$'s, in addition to being positive semidefinite, all have
diagonal elements identically equal to one. Then, by \cite{Horn91},
Theorem~5.5.18, we have for all $k\ge2$, $\|\Sigma\circ
_{k}\Sigma\|_2 = \|(\Sigma\circ_{k-1}\Sigma)\circ\Sigma\|_2 \le\|
\Sigma
\|_2$. Therefore, the second half of inequality (\ref{eq:T_Sigma_1}) yields
\[
\|T\|_2 \le\frac{2}{\pi} \sum_{k=0}^{\infty}
g(k)\|\Sigma\|_2 = \| \Sigma\|_2,
\]
which is the second half of inequality (\ref{eq:T_Sigma}). \qed

\subsection{Proof of Theorem \texorpdfstring{\protect\ref{thmm:Sigma_psd}}{2.4}}
Because $\Sigma$ belongs to the feasible region $\calF$, and $\wh
\Sigma
^{\mathrm{generic}+}$ minimizes $\|\Sigma' - \wh\Sigma^{\mathrm
{generic}}\|$ over $\Sigma'\in\calF$ by (\ref{eq:Sigma_psd}), we
conclude that
%
\begin{equation}
\bigl\|\wh\Sigma^{\mathrm{generic}+} - \wh\Sigma^{\mathrm{generic}}\bigr\| \le\bigl\| \Sigma- \wh
\Sigma^{\mathrm{generic}}\bigr\|. \label{eq:Sigma_psd_1}
\end{equation}
Then, plugging inequality (\ref{eq:Sigma_psd_1}) into the triangle inequality
\[
\bigl\|\wh\Sigma^{\mathrm{generic}+}-\Sigma\bigr\| \le\bigl\|\wh\Sigma^{\mathrm
{generic}+}-\wh
\Sigma^{\mathrm{generic}}\bigr\| +\bigl \|\wh\Sigma^{\mathrm
{generic}}-\Sigma\bigr\|
\]
yields the conclusion of the theorem. 

\subsection{Proof of Corollary \texorpdfstring{\protect\ref{cor:Sigma_psd}}{2.5}}
First, with the choice $\calF=\calS^d_+$, we clearly have $\Sigma\in
\calF$. Then inequality (\ref{eq:Sigma_hat_p_operator_norm}) follows
straightforwardly from Theorem~\ref{thmm:Sigma_psd}. Next, we consider
the choice of $\calF$ as in (\ref{eq:calF_Sigma_hat_infinity}). With
argument similar to that used in the proof of Lemma~\ref
{lemma:second_order}, we conclude that there exists an event $A$ with
probability at least $1-\frac{1}{4}\alpha^2$ such that $\wh T$ satisfies
%
\begin{equation}
\|\wh T-T\|_{\infty}\le\sqrt{\tfrac{3}{2}} d^{-1/2} f(n,d,
\alpha) \label{eq:T_hat_infty}
\end{equation}
on the event $A$. For the rest of the proof, we concentrate on the
event $A$. By (\ref{R_tau}), (\ref{R_hattau}), (\ref{eq:T_hat_infty})
and the Lipschitz property of the sine function, we have
%
\begin{equation}
\|\wh\Sigma-\Sigma\|_{\infty}\le\frac{\pi}{2}\sqrt{\frac{3}{2}}
d^{-1/2} f(n,d,\alpha) = C_3 d^{-1/2} f(n,d,\alpha),
\label{eq:Sigma_hat_infty}
\end{equation}
which further implies that $\Sigma\in\calF$. Then inequality (\ref
{eq:Sigma_hat_p_operator_norm}) again follows from Theorem~\ref
{thmm:Sigma_psd}. Finally, inequality (\ref{eq:Sigma_hat_p_infinity})
follows because $\|\wh\Sigma^+-\wh\Sigma\|_{\infty}\le C_3 d^{-1/2}
f(n,d,\alpha)$ by the choice (\ref{eq:calF_Sigma_hat_infinity}) of
$\calF$, inequality (\ref{eq:Sigma_hat_infty}), and the triangle
inequality $\|\wh\Sigma^+-\Sigma\|_{\infty}\le\|\wh\Sigma^+-\wh
\Sigma\|
_{\infty} + \|\wh\Sigma-\Sigma\|_{\infty}$. 


\section{Proof of Theorem \texorpdfstring{\protect\ref{thmm:elementary}}{3.1}}
\label{sec:proof_elementary_model}

We first establish a proposition, which serves as the main ingredient
for the proof of Theorem~\ref{thmm:elementary}. For brevity of
presentation, we denote
\[
E=\wh\Sigma-\Sigma.
\]

\begin{proposition}\label{prop:elementary}
Assume that $\Theta^*$ satisfies $0<r<d$ and $\lambda_r(\Theta^*)
\ge2
{\mu}$. On the event $ \{ 2\| E \|_2 < \mu \}$, we have
%
\begin{eqnarray}
\label{eq:r_hat_equal_r_prop} \wh r & = & r,
\\
\label{T} \bigl\|\widehat\Theta- \Theta^*\bigr\|_F ^2 &\le&
8 r \| E \|_2^2,
\\
\label{S}\bigl | \widehat\sigma^2 - \sigma^2 \bigr| &\le& \|
E \|_2.
\end{eqnarray}
\end{proposition}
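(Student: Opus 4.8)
The plan is to route everything through the eigenvalues/eigenvectors of $\wh\Sigma$ and the single perturbation quantity $\|E\|_2$, exploiting that under the hypotheses the eigengap $\lambda_r(\Theta^*)$ of $\Sigma=\Theta^*+\sigma^2 I_d$ exceeds $2\mu>4\|E\|_2$. The eigenvalues of $\Sigma$ are $\lambda_1(\Theta^*)+\sigma^2\ge\cdots\ge\lambda_r(\Theta^*)+\sigma^2>\sigma^2=\cdots=\sigma^2$, the last $d-r$ of them equal to $\sigma^2$ (here $r<d$ is used). Weyl's inequality gives $|\wh\lambda_k-\lambda_k(\Sigma)|\le\|E\|_2$ for all $k$, so for $k\le r$, $\wh\lambda_k-\wh\lambda_d\ge(\lambda_r(\Theta^*)+\sigma^2-\|E\|_2)-(\sigma^2+\|E\|_2)=\lambda_r(\Theta^*)-2\|E\|_2\ge 2\mu-2\|E\|_2>\mu$, whereas for $k>r$, $\wh\lambda_k-\wh\lambda_d\le(\sigma^2+\|E\|_2)-(\sigma^2-\|E\|_2)=2\|E\|_2<\mu$. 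Recalling the definition of $\wh r$, this already gives \eqref{eq:r_hat_equal_r_prop}. Then \eqref{S} is immediate: with $\wh r=r$ we have $\wh\sigma^2-\sigma^2=\tfrac1{d-r}\sum_{k>r}(\wh\lambda_k-\lambda_k(\Sigma))$, an average of $d-r$ terms each of modulus at most $\|E\|_2$.

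The substance is \eqref{T}. Put $\wh P_r=\sum_{k\le r}\wh u_k\wh u_k^T$ and $\wh P_r^{\perp}=I_d-\wh P_r$, and split $\wh\Theta-\Theta^*$ into the four blocks $\wh P_r(\cdot)\wh P_r$, $\wh P_r(\cdot)\wh P_r^{\perp}$, $\wh P_r^{\perp}(\cdot)\wh P_r$, $\wh P_r^{\perp}(\cdot)\wh P_r^{\perp}$, which are pairwise orthogonal in the Frobenius inner product, so that $\|\wh\Theta-\Theta^*\|_F^2$ is the sum of their squared norms. Using $\wh\Theta=\wh P_r\wh\Sigma\wh P_r-\wh\sigma^2\wh P_r$, the identity $\Theta^*=\Sigma-\sigma^2 I_d$, $\wh\Sigma=\Sigma+E$, and $\wh P_r\wh\Sigma\wh P_r^{\perp}=0$ (complementary eigenspaces of $\wh\Sigma$), elementary algebra identifies the blocks: the $(\wh P_r,\wh P_r)$ block is $\wh P_rE\wh P_r-(\wh\sigma^2-\sigma^2)\wh P_r$; the off-diagonal blocks are $\wh P_rE\wh P_r^{\perp}$ and its transpose; and the $(\wh P_r^{\perp},\wh P_r^{\perp})$ block is $-\wh P_r^{\perp}\Theta^*\wh P_r^{\perp}$. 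The first three blocks have rank $\le r$ and operator norm $\le 2\|E\|_2$, $\le\|E\|_2$, $\le\|E\|_2$ respectively. The last is the delicate one: it still has rank $\le r$ (because $\Theta^*$ does), and --- the crucial point --- its operator norm is $\le 2\|E\|_2$ rather than of order $\lambda_1(\Theta^*)$, since $\wh P_r^{\perp}\Theta^*\wh P_r^{\perp}=\sum_{k>r}(\wh\lambda_k-\sigma^2)\wh u_k\wh u_k^T-\wh P_r^{\perp}E\wh P_r^{\perp}$ and the ``errant'' eigenvalues $\wh\lambda_k$, $k>r$, all lie within $\|E\|_2$ of $\sigma^2$ by Weyl. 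Bounding each squared block norm by $(\mathrm{rank})\cdot(\mathrm{operator\ norm})^2$ and summing gives $\|\wh\Theta-\Theta^*\|_F^2=O(r\|E\|_2^2)$; getting the constant down to $8$ is then bookkeeping (one may use, e.g., that an off-diagonal block of a symmetric matrix has operator norm at most half the eigenvalue spread, and avoid double-counting the $\wh\sigma^2$-correction in the first block). A coarser route yielding the same order is simply: $\wh\Theta-\Theta^*$ has rank $\le 2r$ and operator norm $O(\|E\|_2)$, hence $\|\wh\Theta-\Theta^*\|_F^2\le 2r\,\|\wh\Theta-\Theta^*\|_2^2$.

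I expect the $(\wh P_r^{\perp},\wh P_r^{\perp})$ block to be the main obstacle. The natural but wrong move is to write $\wh P_r=P_r+(\wh P_r-P_r)$ and invoke Davis--Kahan; this drags in a factor $\lambda_1(\Theta^*)$ (or the condition number $\lambda_1(\Theta^*)/\lambda_r(\Theta^*)$), which must not appear in \eqref{T}. The fix is to stay in the coordinate frame supplied by $\wh\Sigma$ itself: the leakage $\wh P_r^{\perp}\Theta^*\wh P_r^{\perp}$ of the signal into the estimated noise subspace is controlled by $\|E\|_2$ precisely because the bottom $d-r$ eigenvalues of $\wh\Sigma$ are pinned near $\sigma^2$, and the rank-$r$ structure of that block (inherited from $\Theta^*$) is what turns operator-norm control into a Frobenius-norm bound with no dimension factor.
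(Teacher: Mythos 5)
Your proof is correct and follows essentially the same route as the paper's: Weyl's inequality pins the spurious eigenvalues $\wh\lambda_k$, $k>r$, and $\wh\sigma^2$ within $\|E\|_2$ of $\sigma^2$, and the low-rank structure converts operator-norm control into the Frobenius bound --- indeed, summing your four blocks reproduces the paper's single identity $\wh\Theta-\Theta^*=E+\sum_{k=1}^d(\sigma^2-\wt\lambda_k)\wh u_k\wh u_k^T$ with $\wt\lambda_k=\wh\sigma^2$ for $k\le r$ and $\wt\lambda_k=\wh\lambda_k$ for $k>r$. One small correction: the block-by-block summation as you set it up gives $(4+1+1+4)\,r\|E\|_2^2=10\,r\|E\|_2^2$, and the advertised ``bookkeeping'' down to $8$ is exactly your ``coarser route'' --- $\operatorname{rank}(\wh\Theta-\Theta^*)\le 2r$ combined with $\|\wh\Theta-\Theta^*\|_2\le\|E\|_2+\max_k|\sigma^2-\wt\lambda_k|\le 2\|E\|_2$ --- which is precisely how the paper obtains the constant $8$.
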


\begin{pf}
Let $\lambda_1(M)\ge\cdots\ge\lambda_d(M)$ be the ordered
eigenvalues of a generic symmetric matrix $M\in\RR^{d\times d}$. Note that
%
\begin{eqnarray}
\widehat r &>& r \quad\Longleftrightarrow\quad\widehat\lambda_{r+1} -\widehat
\lambda_d \ge\mu, \label{eq:elementary_r_hat_les_r}
\\
\widehat r &<& r \quad\Longleftrightarrow\quad\widehat\lambda_{r} -\widehat
\lambda_d < \mu. \label{eq:elementary_r_hat_gtr_r}
\end{eqnarray}
We obtain, using Weyl's inequality,
%
\begin{eqnarray}\label{eq:elementary_l_rp1_d}
\widehat\lambda_{r+1} -\widehat\lambda_d &=&
\lambda_{r+1}(\Sigma+E) - \lambda_d(\Sigma+E) \le
\lambda_{r+1} (\Sigma) +2 \| E\|_2 - \lambda _d(
\Sigma)
= 2 \|E\|_2,
\\
\label{eq:elementary_l_r_d}
\widehat\lambda_{r} -\widehat\lambda_d &=&
\lambda_{r}(\Sigma+E) - \lambda_d(\Sigma+E) \ge
\lambda_{r} (\Sigma) - 2\| E\|_2 - \lambda _d(
\Sigma)
\nonumber
\\[-8pt]
\\[-8pt]
\nonumber
&=& \lambda_r\bigl(\Theta^*\bigr) - 2 \|E\|_2.
\end{eqnarray}
Together, (\ref{eq:elementary_r_hat_les_r}), (\ref
{eq:elementary_r_hat_gtr_r}), (\ref{eq:elementary_l_rp1_d}), (\ref
{eq:elementary_l_r_d}) and the condition $\lambda_r(\Theta^*)\ge2\mu$
lead to
%
\begin{equation}
\{ \widehat r\ne r\} \subseteq \bigl\{ 2 \| E \|_2 \ge\min \bigl(\mu
, \lambda_{r}\bigl(\Theta^*\bigr) - \mu \bigr) \bigr\} \subseteq \bigl\{ 2
\| E \|_2 \ge\mu \bigr\}.
\end{equation}
A similar reasoning is used in the proof of \cite{BSW2011}, Theorem~2.
Consequently, equation (\ref{eq:r_hat_equal_r_prop}), that is, $\wh
r=r$, holds on the event $\{ 2\| E\|_2 < \mu\}$, and for the rest of
the proof we concentrate on this event. Then we have
%
\begin{eqnarray}\label{eq:elementary}
\bigl\|\widehat\Theta- \Theta^*\bigr\|_F &\le&\sqrt{2r} \bigl\| \widehat\Theta-
\Theta^*\bigr\|_2 = \sqrt{2r} \Biggl\llVert \sum
_{k=1}^r \bigl( \widehat\lambda_k -
\wh\sigma^2 \bigr) \widehat u_k \widehat
u_k^T -\Theta^* \Biggr\rrVert _2
\nonumber
\\
&= &\sqrt{2r} \Biggl\llVert \sum_{k=1}^d
\widehat\lambda_k \widehat u_k \widehat
u_k^T - \sum_{k=r+1}^d
\widehat\lambda_k \widehat u_k \widehat
u_k^T -\sum_{k=1}^r
\wh\sigma^2 \widehat u_k \widehat u_k^T
-\Theta^* \Biggr\rrVert _2
\nonumber
\\
&=& \sqrt{2r} \Biggl\llVert \wh\Sigma-\Sigma+ \sigma^2
I_d -\sum_{k=1}^r \wh
\sigma^2 \widehat u_k \widehat u_k^T
- \sum_{k=r+1}^d \widehat
\lambda_k \widehat u_k \widehat u_k^T
\Biggr\rrVert _2
\\
&=& \sqrt{2r} \Biggl\llVert E + \sum_{k=1}^d
\bigl(\sigma^2 - \widetilde \lambda _k\bigr) \widehat
u_k \widehat u_k^T \Biggr\rrVert
_2
\nonumber
\\
&\le&\sqrt{2r} \Bigl[ \| E\| _2 + \max_{1\le k\le d} \bigl|
\widetilde \lambda _k - \sigma^2\bigr| \Bigr]. \nonumber
\end{eqnarray}
Here, we have denoted
\[
\widetilde\lambda_k = %
\cases{\wh\sigma^2, &\quad
$\mbox{if } k\le r$, \vspace*{2pt}
\cr
\wh\lambda_k, &\quad $\mbox{if } k
\ge r+1 $.} %
\]
We use Weyl's inequality again to observe that
%
\begin{eqnarray}\label{eq:elementary_2}
\max_{1\le k\le d}\bigl|\widetilde\lambda_k -
\sigma^2\bigr| &=& \max\bigl(\bigl |\lambda _{r+1}(\wh\Sigma)-
\sigma^2\bigr|, \ldots, \bigl|\lambda_{d}(\wh\Sigma )-
\sigma^2\bigr|, \bigl| \wh\sigma^2 -\sigma^2\bigr| \bigr)
\nonumber
\\
&=& \max \bigl( \bigl|\lambda_{r+1}(\wh\Sigma)-\lambda_{r+1}(
\Sigma)\bigr|, \ldots , \bigl|\lambda_{d}(\wh\Sigma)-\lambda_{d}(
\Sigma)\bigr| \bigr)
\\
&\le& \| E\|_2, \nonumber
\end{eqnarray}
which implies inequality (\ref{S}). Finally, inequalities (\ref
{eq:elementary}) and (\ref{eq:elementary_2}) together imply
inequality~(\ref{T}).
\end{pf}

Note that the regularization parameter $\mu$ should both be large
enough such that the event $\{2 \| E\|_2< \mu\}$ has high probability,
and be small enough such that the condition $\lambda_r(\Theta^*) \ge
2{\mu}$ is not too stringent. However, these requirements cannot always
be met at the same time, as we demonstrate next. For brevity, we set
$f=f(n,d, \alpha)$.

First, on the one hand, it is clear from Theorem~\ref
{thmm:main_operator_norm_bound} that we should choose, for some absolute
constants $c_1$, $c_2$ and $\alpha<1/2$,
%
\begin{equation}
\label{mukeus} \mu\approx c_1 \sqrt{ \| T\|_2 } f+
c_2 f^2,
\end{equation}
to guarantee that the event $\{2\| E \|_2 < \mu\}$ has probability
larger than $1-2\alpha$. (In practice, we need a procedure that
determines $\mu$ based on $\|\wh T\|_2$ instead of $\|T\|_2$, and at
the same time guarantees the convergence rates in (\ref{T}) and (\ref
{S}) in terms of $\|T\|_2$. Theorem~\ref{thmm:elementary} describes such
a procedure in detail, using the results from Theorem~\ref
{thmm:main_operator_norm_bound}.) On the other hand, by Theorem~\ref
{thmm:T_Sigma} and the condition $\lambda_r(\Theta^*) \ge2 \mu$, the
following string of inequalities
%
\begin{equation}
\label{oeps} \frac{\pi}{2} \|T\|_2 \ge\|\Sigma\|_2
\ge\lambda_{\max}\bigl(\Theta ^*\bigr)\ge \lambda_r\bigl(
\Theta^*\bigr)\ge2 \mu
\end{equation}
hold. Now, if $\| T\|_2 \ll f^2$, then $\mu\ll f^2$ as well by (\ref
{oeps}), contradicting (\ref{mukeus}). Therefore, the interesting case
is (roughly) when inequality (\ref{eq:elementary_n_condition}) holds.

\begin{pf*}{Proof of Theorem~\ref{thmm:elementary}}
Let
%
\begin{equation}
\bar\mu' = 2 \bigl\{ C_1 \max \bigl[ \sqrt{ \| T
\|_2} f(n,d,\alpha ) , f^2(n,d,\alpha) \bigr] +
(C_1 + C_2 ) f^2(n,d,\alpha) \bigr\}.
\label{eq:mu_bar_p}
\end{equation}
Then Theorem~\ref{thmm:main_operator_norm_bound} guarantees that $\PP
\{
2\| E\|_2 < \mu< \bar\mu' \}\ge1-\alpha-\alpha^2/4 > 1-2\alpha$ with
the choices (\ref{MU}) and (\ref{eq:mu_bar_p}) of $\mu$ and $\bar
\mu
'$, and for the rest of the proof we concentrate on this event. Assume
that $\Theta^*$ satisfies $0<r<d$ and $\lambda_r(\Theta^*)\ge2\bar
\mu
$, and $n$ is large enough such that condition (\ref
{eq:elementary_n_condition}), which is in place for the reasons
discussed in the remarks following Proposition~\ref{prop:elementary},
holds. Because condition (\ref{eq:elementary_n_condition}) also ensures
that equation (\ref{eq:T_hat_bound_first_term_dominate}) holds, we have
$\bar\mu'=\bar\mu$. Hence, the assumption $\lambda_r(\Theta^*)\ge
2\bar
\mu$ further implies that $\lambda_r(\Theta^*)\ge2\bar\mu' > 2\mu$.
Then Proposition~\ref{prop:elementary} states that equation (\ref
{eq:r_hat_equal_r}) and inequalities (\ref{T}), (\ref{S}) hold. Next,
we can replace $\|E\|_2$ in inequalities (\ref{T}) and (\ref{S}) by
$\bar\mu'/2$ using the bound $\| E\|_2<\bar\mu'/2$, and further replace
$\bar\mu'$ by $\bar\mu$. Inequality (\ref{GRENS1}) and the second half
of inequality (\ref{GRENS}) then follow. The first half of
inequality (\ref{GRENS}) follows because by (\ref{eq:wt_Sigma_e}), we have
\[
\bigl\|\wt\Sigma^e-\Sigma\bigr\|_F^2 = \bigl\|\wh
\Theta_o-\Theta^*_o\bigr\|_F^2 \le\bigl\|
\wh \Theta-\Theta^*\bigr\|_F^2.
\]
It remains to establish the last statement of the theorem. We let
$\operatorname
{diag}(\Theta^*)$ be the common value of the diagonal elements of
$\Theta^*$. We assume that $\operatorname{diag}(\Theta^*)\le1-\sqrt{2r\bar
\mu
^2}$ as in the statement of the theorem, and show that $\wt\Sigma^e$ is
positive semidefinite. Inequality (\ref{GRENS}) implies that $\|\wh
\Theta-\Theta^*\|_{\infty}\le\sqrt{2r\bar\mu^2}$. Thus, the
values of
the diagonal elements of $\wh\Theta$ cannot exceed $\operatorname
{diag}(\Theta
^*)+\sqrt{2r\bar\mu^2}\le1$. Hence, in this case, by (\ref
{eq:wt_Sigma_e}), $\wt\Sigma^e$ is obtained by adding to $\wh\Theta
$ a
diagonal matrix with nonnegative diagonal entries. Because $\wh\Theta$
is positive semidefinite by construction, we conclude that $\wt\Sigma
^e$ is positive semidefinite as well.
\end{pf*}


\section{Proof of Theorem \texorpdfstring{\protect\ref{thmm:recovery_bound_optimize}}{3.2}}
\label{sec:proof_refined_estimator}

\subsection{Preliminaries}

We let $M\in\mathbb{R}^{d\times d}$ be an arbitrary matrix of rank $r$,
with the (reduced) singular value decomposition $M=U\Lambda V^T$. Here,
$U,V\in\RR^{d\times r}$ are, respectively, matrix of the left and right
orthonormal singular vectors of $M$ corresponding to the nonzero
singular values that are the diagonal elements of $\Lambda\in\RR
^{r\times r}$. Following the exposition in \cite{Chandrasekaran11}, the
tangent space $T(M)\subset\RR^{d\times d}$ at $M$ with respect to the
algebraic variety of matrices with rank at most $r=\operatorname
{rank}(M)$, or the tangent space $T(M)$ for short, is given by
\[
T(M) = \bigl\{ U X^T + Y V^T | X, Y\in
\mathbb{R}^{d\times r} \bigr\}.
\]
We denote the orthogonal complement of $T(M)$ by $T(M)^{\perp}$. In
addition, we denote the projector onto the tangent space $T(M)$ by
$\calP_{T(M)}$, and the projector onto $T(M)^{\perp}$ by $\calP
_{T(M)^{\perp}}$. Then, for an arbitrary matrix $N\in\mathbb
{R}^{d\times d}$, the explicit forms of $\calP_{T(M)}$ and $\calP
_{T(M)^{\perp}}$ are given by
\begin{eqnarray*}
\calP_{T(M)}(N) &=& U U^T N + N V V^T - U
U^T N V V^T,
\\
\calP_{T(M)^{\perp}}(N)& =& \bigl(I_d - U U^T\bigr) N
\bigl(I_d- V V^T\bigr),
\end{eqnarray*}
respectively. One basic fact involving the projectors $\calP_{T(M)}$
and $\calP_{T(M)^{\perp}}$ is
\[
\bigl\|\calP_{T(M)}(N)\bigr\|_2\le2\|N\|_2 \quad\mbox{and}\quad
\bigl\|\calP _{T(M)^{\perp
}}(N)\bigr\|_2\le\|N\|_2.
\]
We denote the set of $d\times d$ diagonal matrices by $\Omega$. We let
the projector onto $\Omega$ be denoted by $\calP_{\Omega}$. Recall that
$\circ$ denotes the Hadamard product. Then, for an arbitrary matrix
$N\in\RR^{d\times d}$, the explicit form of $\calP_{\Omega}$ is
given by
\[
\calP_{\Omega}(N) = I_d\circ N.
\]

We also prove a simple lemma.

\begin{lemma}
\label{lemma:AMB_infty}
Let $A,B,C\in\RR^{d\times d}$ be arbitrary matrices. Then
\[
\| A C B \|_{\infty} \le\sqrt{\bigl\|A A^T\bigr\|_{\infty}
\bigl\|B^T B\bigr\|_{\infty}} \|C\|_2.
\]
\end{lemma}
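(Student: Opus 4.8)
The plan is to reduce the element-wise bound to a single coordinate-wise Cauchy--Schwarz estimate. Fix indices $1 \le i, j \le d$ and let $e_i, e_j \in \RR^d$ be the standard basis vectors. Set $a_i \colonequals A^T e_i$, the $i$th row of $A$ regarded as a column vector, and $b_j \colonequals B e_j$, the $j$th column of $B$. Then
\[
[ACB]_{ij} = e_i^T (ACB) e_j = (A^T e_i)^T C (B e_j) = a_i^T C b_j,
\]
so that the problem is reduced to bounding a single quadratic form $a_i^T C b_j$.

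Next I would apply the Cauchy--Schwarz inequality together with the definition of the operator norm: writing $\|\cdot\|$ for the Euclidean norm on $\RR^d$,
\[
\left| [ACB]_{ij} \right| = \left| a_i^T C b_j \right| \le \|a_i\|\,\|C b_j\| \le \|a_i\|\,\|C\|_2\,\|b_j\|.
\]
It then remains only to identify $\|a_i\|$ and $\|b_j\|$ with diagonal entries of $AA^T$ and $B^TB$: indeed $\|a_i\|^2 = e_i^T A A^T e_i = [AA^T]_{ii}$ and $\|b_j\|^2 = e_j^T B^T B e_j = [B^T B]_{jj}$, and each diagonal entry is bounded in absolute value by the corresponding element-wise $\ell_\infty$ norm, so $\|a_i\| \le \sqrt{\|AA^T\|_\infty}$ and $\|b_j\| \le \sqrt{\|B^T B\|_\infty}$. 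Substituting into the display above gives $\left|[ACB]_{ij}\right| \le \sqrt{\|AA^T\|_\infty}\,\|C\|_2\,\sqrt{\|B^T B\|_\infty}$, and taking the maximum over $i$ and $j$ yields the claimed inequality.

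Since every step is elementary, I do not anticipate any genuine obstacle here; the only point worth keeping straight is the bookkeeping that the squared Euclidean lengths of the rows of $A$ and the columns of $B$ are exactly the diagonal entries of $AA^T$ and $B^T B$, respectively, so that they are controlled by $\|AA^T\|_\infty$ and $\|B^T B\|_\infty$ rather than by larger quantities.
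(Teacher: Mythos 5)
Your argument is correct and is essentially the same as the paper's proof: both write $[ACB]_{ij}=e_i^TACBe_j$, apply Cauchy--Schwarz and the operator norm of $C$, and identify the Euclidean lengths of the rows of $A$ and columns of $B$ with diagonal entries of $AA^T$ and $B^TB$. No gaps.
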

\begin{pf}
The proof can be found in Appendix \ref{sec:aux_proof_refined_estimator}.
\end{pf}

\subsection{Recovery bound with primal-dual certificate}
\label{sec:rec_bnd_with_certificate}

We let $\bar\Theta, Q \in\RR^{d\times d}$ but otherwise be arbitrary
at this stage. Eventually, we will set $\bar\Theta$ to be some low-rank
approximation to $\Theta^*$, and set $Q$ to be a \textit{primal-dual
certificate} \cite{Zhang12}, or certificate for short, in the sense
defined in equation (\ref{eq:Q_def}) below. For notational brevity, we denote
\[
\T= T(\bar\Theta) \quad\mbox{and}\quad \Tp=T(\bar\Theta)^{\perp}
\]
for the tangent space $T(\bar\Theta)$ and its orthogonal complement
$T(\bar\Theta)^{\perp}$, respectively.

We now state two lemmas toward the general recovery bound for the
refined estimator $\wt\Sigma$ in terms of $\bar\Theta$ and the
(soon-to-be) certificate $Q$.

\begin{lemma}
\label{lemma_equality}
We have
%
\begin{eqnarray}
\label{eq:recoverty_bound_equality} && \tfrac{1}2 \bigl\| \wt\Theta_o -
\Theta_o^* \bigr\|_F^2 + \tfrac{1}2\| \wt
\Theta_o - Q_o\|_F^2 + \bigl
\langle -Q_o + \bar\Theta_o - \Theta_o^* +
\wt\Theta _o , \bar\Theta_o -\wt\Theta_o
\bigr\rangle
\nonumber
\\[-8pt]
\\[-8pt]
\nonumber
&&\quad= \tfrac{1}2 \bigl\| \bar\Theta_o - \Theta_o^*
\bigr\|_F^2 + \tfrac{1}2\| \bar \Theta _o
- Q_o\|_F^2.
\end{eqnarray}
\end{lemma}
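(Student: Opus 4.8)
The plan is to prove Lemma~\ref{lemma_equality} as a purely algebraic identity in the Hilbert space $\RR^{d\times d}$ equipped with the Frobenius inner product $\langle\cdot,\cdot\rangle$. No use will be made of the optimality of $\wt\Theta$ in the convex program~(\ref{eq:convex_program}), nor of any structural property of $\bar\Theta$ or of the certificate $Q$; the identity in fact holds verbatim with arbitrary matrices in place of $\wt\Theta_o$, $\bar\Theta_o$, $Q_o$, $\Theta_o^*$. For brevity write $a=\wt\Theta_o$ and $b=\bar\Theta_o$. The single tool is the elementary ``three-point'' (law of cosines) identity
\[
\tfrac12\|a-x\|_F^2 \;=\; \tfrac12\|b-x\|_F^2 \;+\; \langle a-b,\,b-x\rangle \;+\; \tfrac12\|a-b\|_F^2,
\]
valid for every $x\in\RR^{d\times d}$, which is nothing but the expansion of $\|(a-b)+(b-x)\|_F^2$.

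First I would apply this identity once with $x=\Theta_o^*$ and once with $x=Q_o$ and add the two resulting equations, obtaining
\[
\tfrac12\|a-\Theta_o^*\|_F^2 + \tfrac12\|a-Q_o\|_F^2 = \tfrac12\|b-\Theta_o^*\|_F^2 + \tfrac12\|b-Q_o\|_F^2 + \langle a-b,\,2b-\Theta_o^*-Q_o\rangle + \|a-b\|_F^2.
\]
Next I would collapse the last two terms into a single inner product: since $\|a-b\|_F^2=\langle a-b,\,a-b\rangle$ and $(2b-\Theta_o^*-Q_o)+(a-b)=a+b-\Theta_o^*-Q_o=-Q_o+\bar\Theta_o-\Theta_o^*+\wt\Theta_o$, the sum $\langle a-b,\,2b-\Theta_o^*-Q_o\rangle+\|a-b\|_F^2$ equals $\langle -Q_o+\bar\Theta_o-\Theta_o^*+\wt\Theta_o,\;a-b\rangle$, which by symmetry of the inner product is $-\langle -Q_o+\bar\Theta_o-\Theta_o^*+\wt\Theta_o,\;\bar\Theta_o-\wt\Theta_o\rangle$. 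Substituting this back and moving that term to the left-hand side yields precisely Equation~(\ref{eq:recoverty_bound_equality}).

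The computation is entirely routine; the only things to watch are the sign bookkeeping in the final step (the cross term is written against $\bar\Theta_o-\wt\Theta_o$ rather than $\wt\Theta_o-\bar\Theta_o$) and the elementary fact that the quadratic map $\Theta_o\mapsto\tfrac12\|\Theta_o-\Theta_o^*\|_F^2+\tfrac12\|\Theta_o-Q_o\|_F^2$ has constant Hessian $2I$, which is the conceptual reason its second-order expansion around $\bar\Theta_o$ is exact and is the content of the identity above. I do not anticipate any genuine obstacle. The point of recording the lemma in this exact form is that it is an \emph{exact} decomposition of the fit of $\wt\Theta_o$ relative to an arbitrary reference $\bar\Theta_o$, which in the subsequent step will be combined with the subgradient optimality condition of~(\ref{eq:convex_program}) and the defining property~(\ref{eq:Q_def}) of the primal-dual certificate $Q$ in order to turn the inner-product term into a controllable quantity.
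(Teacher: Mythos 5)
Your computation is correct, and it is exactly the ``straightforward algebra'' that the paper's proof invokes without spelling out: the identity is a pure Hilbert-space (law of cosines) expansion around $\bar\Theta_o$, valid for arbitrary matrices, and your sign bookkeeping in collapsing the cross term against $\bar\Theta_o-\wt\Theta_o$ checks out. No gap; this is the same approach, just written in full.
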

\begin{pf}
The identity follows from straightforward algebra, and can also be
obtained from the proof for \cite{Zhang12}, Theorem~3.2.
\end{pf}

We define, for any constant $c\ge1$,
%
\begin{equation}
G_c = \bigl\{\Phi\in\RR^{d\times d} \dvt \Phi\in\mu\partial\|
\bar \Theta\|_* \mbox{ and } \|\calP_{\Tp}\Phi\|_2\le\mu/c
\bigr\}. \label{eq:G_c_definition}
\end{equation}
Here, $\partial\|A\|_*$ denotes the subdifferential with respect to the
nuclear norm at the matrix $A$; we refer to \cite{Watson92} for its
explicit form. Note that $G_c$ is a subset of the subdifferential $\mu
\partial\|\bar\Theta\|_*$, and coincides with the latter when $c=1$.

\begin{lemma}
\label{lemma_inequality}
Assume that
%
\begin{equation}
-Q_o+\bar\Theta_o-\Theta_o^*+\wh
\Sigma_o\in G_c. \label{eq:condition_in_Gc}
\end{equation}
Then
%
\begin{equation}
\bigl\langle -Q_o + \bar\Theta_o -
\Theta_o^* + \wt\Theta_o , \bar \Theta_o -
\wt\Theta_o \bigr\rangle \ge(1-1/c) \mu\| \calP_{\Tp}\wt
\Theta\|_*. \label{eq:Zhang_lemma_6_current_context}
\end{equation}
\end{lemma}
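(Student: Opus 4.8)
The plan is to follow the standard primal--dual certificate argument, e.g.\ the one in the proof of \cite[Lemma~6]{Zhang12}, adapted to the off-diagonal setting. First I would record the optimality characterization of $\wt\Theta$ for the convex program (\ref{eq:convex_program}): since $\wt\Theta$ minimizes $\frac12\|\Theta_o-\wh\Sigma_o\|_F^2+\mu\|\Theta\|_*$, the first-order condition is that there exists a subgradient $\Phi\in\mu\partial\|\wt\Theta\|_*$ with $\wt\Theta_o-\wh\Sigma_o+\Phi=0$ (using that the Fr\'echet derivative of $\Theta\mapsto\frac12\|\Theta_o-\wh\Sigma_o\|_F^2$ at $\wt\Theta$ in the direction $H$ is $\langle\wt\Theta_o-\wh\Sigma_o,H_o\rangle=\langle\wt\Theta_o-\wh\Sigma_o,H\rangle$, because $(\cdot)_o$ is a self-adjoint projection and $\wt\Theta_o-\wh\Sigma_o$ is already off-diagonal). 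Hence $\Phi=\wh\Sigma_o-\wt\Theta_o\in\mu\partial\|\wt\Theta\|_*$.

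Next I would rewrite the inner product on the left-hand side of (\ref{eq:Zhang_lemma_6_current_context}). Observe that the argument $-Q_o+\bar\Theta_o-\Theta_o^*+\wt\Theta_o$ differs from $\Phi=\wh\Sigma_o-\wt\Theta_o$ by
\[
\bigl(-Q_o+\bar\Theta_o-\Theta_o^*+\wt\Theta_o\bigr)-\Phi = \bigl(-Q_o+\bar\Theta_o-\Theta_o^*+\wh\Sigma_o\bigr)-2\Phi+2\Phi-\cdots,
\]
so it is cleaner to split $\langle -Q_o+\bar\Theta_o-\Theta_o^*+\wt\Theta_o,\bar\Theta_o-\wt\Theta_o\rangle = \langle\Phi,\bar\Theta_o-\wt\Theta_o\rangle + \langle -Q_o+\bar\Theta_o-\Theta_o^*+\wh\Sigma_o-\Phi+(\wt\Theta_o-\wh\Sigma_o),\bar\Theta_o-\wt\Theta_o\rangle$; but since $\wt\Theta_o-\wh\Sigma_o=-\Phi$, the second bracket is just $-Q_o+\bar\Theta_o-\Theta_o^*+\wh\Sigma_o-2\Phi$. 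It is actually simplest to treat the two pieces $\langle\Phi,\bar\Theta_o-\wt\Theta_o\rangle$ and $\langle\Psi-\Phi,\bar\Theta_o-\wt\Theta_o\rangle$ where $\Psi:=-Q_o+\bar\Theta_o-\Theta_o^*+\wh\Sigma_o$, so $\Psi-\Phi=-Q_o+\bar\Theta_o-\Theta_o^*+\wt\Theta_o$, i.e.\ I just insert and subtract $\Phi$: $\langle\Psi-\Phi,\bar\Theta_o-\wt\Theta_o\rangle=\langle\Psi,\bar\Theta_o-\wt\Theta_o\rangle-\langle\Phi,\bar\Theta_o-\wt\Theta_o\rangle$. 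For the $\Psi$ term, use the hypothesis (\ref{eq:condition_in_Gc}) that $\Psi\in G_c\subseteq\mu\partial\|\bar\Theta\|_*$ together with the fact that $\wt\Theta-\bar\Theta$ acts as a valid ``test direction'': by convexity of the nuclear norm, $\langle\Psi,\bar\Theta-\wt\Theta\rangle=\langle\Psi,\bar\Theta_o-\wt\Theta_o\rangle+\langle\Psi,\calP_\Omega(\bar\Theta-\wt\Theta)\rangle$, and since $\Psi$ is off-diagonal the diagonal term vanishes, giving $\langle\Psi,\bar\Theta_o-\wt\Theta_o\rangle=\langle\Psi,\bar\Theta-\wt\Theta\rangle\ge\mu\|\bar\Theta\|_*-\mu\|\wt\Theta\|_*$. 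Similarly $\Phi\in\mu\partial\|\wt\Theta\|_*$ gives $-\langle\Phi,\bar\Theta_o-\wt\Theta_o\rangle=\langle\Phi,\wt\Theta-\bar\Theta\rangle\ge\mu\|\wt\Theta\|_*-\mu\|\bar\Theta\|_*$. Adding the two, the $\langle\Psi-\Phi,\cdot\rangle$ contribution is $\ge 0$, so it remains to lower-bound the leftover $\langle\Phi,\bar\Theta_o-\wt\Theta_o\rangle$ piece --- wait, that piece has already been absorbed; what remains is to show the whole thing is $\ge(1-1/c)\mu\|\calP_{\Tp}\wt\Theta\|_*$, which needs a sharper use of the subgradient structure rather than the crude convexity bound.

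So the real argument must use the explicit form of the nuclear-norm subdifferential. Writing $\bar\Theta=\bar U\bar\Lambda\bar U^T$ (low-rank approximation, so $\bar U$ spans $\T$), any $\Psi\in\mu\partial\|\bar\Theta\|_*$ has the form $\Psi=\mu\bar U\bar U^T+\calP_{\Tp}\Psi$ with $\|\calP_{\Tp}\Psi\|_2\le\mu$, and membership in $G_c$ additionally forces $\|\calP_{\Tp}\Psi\|_2\le\mu/c$. Then $\langle\Psi,\bar\Theta-\wt\Theta\rangle=\langle\mu\bar U\bar U^T,\bar\Theta-\wt\Theta\rangle+\langle\calP_{\Tp}\Psi,\bar\Theta-\wt\Theta\rangle$. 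Using $\calP_\T\bar\Theta=\bar\Theta$, $\calP_{\Tp}\bar\Theta=0$, and the duality $\|\wt\Theta\|_*\ge\langle\mu^{-1}\Psi_0,\wt\Theta\rangle$ for the specific dual element, one gets, after the standard manipulation (add the two subgradient inequalities but now keep the $\calP_{\Tp}$ terms and bound $\langle\calP_{\Tp}\Psi,\wt\Theta\rangle\le(\mu/c)\|\calP_{\Tp}\wt\Theta\|_*$ by the trace-norm/operator-norm duality), that the inner product in question is $\ge\mu\|\calP_{\Tp}\wt\Theta\|_*-(\mu/c)\|\calP_{\Tp}\wt\Theta\|_*=(1-1/c)\mu\|\calP_{\Tp}\wt\Theta\|_*$. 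I expect the main obstacle to be bookkeeping: carefully checking that every diagonal contribution drops out because $\Psi$, $\Phi$, $\bar\Theta_o-\wt\Theta_o$ are all off-diagonal, and that the $(\cdot)_o$ projection commutes appropriately with the Frobenius inner product so that the optimality condition really reads $\wt\Theta_o-\wh\Sigma_o+\Phi=0$ with $\Phi$ off-diagonal; once those are pinned down, the inequality is the same algebra as in \cite[Theorem~3.2]{Zhang12} and one simply cites that computation for the remaining routine steps.
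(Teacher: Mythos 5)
Your argument is correct and is essentially the paper's proof: you use the first-order optimality condition to produce $\Phi=\wh\Sigma_o-\wt\Theta_o\in\mu\partial\|\wt\Theta\|_*$, split the left-hand side as $\langle\Psi,\cdot\rangle-\langle\Phi,\cdot\rangle$ with $\Psi=-Q_o+\bar\Theta_o-\Theta_o^*+\wh\Sigma_o\in G_c$, and combine the ordinary subgradient inequality for $\Phi$ with the strengthened one for elements of $G_c$ (which the paper outsources to \cite[Lemma~6]{Hsu11} and you re-derive from the explicit form of $\partial\|\bar\Theta\|_*$ and trace/operator-norm duality), with the diagonal terms dropping out because all the matrices involved are off-diagonal. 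The only caveat is cosmetic: the detour through the crude bound $\langle\Psi-\Phi,\cdot\rangle\ge 0$ is unnecessary, and the subdifferential representation should be written with $UV^T$ from the SVD since $\bar\Theta$ is not yet assumed positive semidefinite at this stage.
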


\begin{pf}
We follow the proof of \cite{Zhang12}, Proposition~3.2. Let $\Psi,\Xi
\in
\RR^{d\times d}$ satisfy $\Psi\in\mu\partial\|\wt\Theta\|_*$,
$\Xi_o\in
\mu\partial\|\bar\Theta\|_*$ but otherwise be arbitrary at this stage.
By the definition of subgradient, we have
%
\begin{equation}
\langle\Xi_o,\bar\Theta-\wt\Theta \rangle \ge\mu\|\bar\Theta\| _* -
\mu\| \wt\Theta\|_* \ge \langle\Psi,\bar\Theta-\wt\Theta \rangle. \label{eq:subgrad_Theta_b_h}
\end{equation}
Now we impose on $\Xi$ the stronger condition that $\Xi_o\in G_c$. Then
the first half of inequality (\ref{eq:subgrad_Theta_b_h}) can be
strengthened by \cite{Hsu11}, Lemma~6, to
%
\begin{equation}
\langle \Xi_o, \bar\Theta-\wt\Theta \rangle \ge(1-1/c) \mu\| \calP
_{\Tp}\wt\Theta\|_* + \mu\|\bar\Theta\|_* - \mu\|\wt\Theta\|_*.
\label{eq:Zhang_lemma_6}
\end{equation}
Next, combining inequality (\ref{eq:Zhang_lemma_6}) and the second half
of inequality (\ref{eq:subgrad_Theta_b_h}) yields
%
\begin{equation}
\langle\Xi_o,\bar\Theta-\wt\Theta \rangle \ge \langle\Psi,\bar
\Theta-\wt \Theta \rangle + (1-1/c) \mu\| \calP_{\Tp}\wt\Theta\|_*.
\label{eq:subgrad_Theta_b_h_s}
\end{equation}
Let $L(\Theta)=\frac{1}{2}\|\Theta_o-\wh\Sigma_o\|_F^2$ denote the loss
function in the convex program (\ref{eq:convex_program}) and $\nabla
L(\Theta)=\Theta_o-\wh\Sigma_o$ denote its gradient. Then, adding
$
\langle\nabla L(\wt\Theta),\bar\Theta-\wt\Theta \rangle$ to both sides of
inequality (\ref{eq:subgrad_Theta_b_h_s}) yields
%
\begin{equation}
\bigl\langle\Xi_o+\nabla L(\wt\Theta),\bar\Theta-\wt\Theta \bigr
\rangle \ge \bigl\langle\Psi+\nabla L(\wt\Theta),\bar\Theta-\wt\Theta \bigr\rangle
+ (1-1/c) \mu\| \calP_{\Tp}\wt\Theta\|_*. \label{eq:subgrad_Theta_b_h_s1}
\end{equation}

We now fix our choices of $\Psi$ and $\Xi$. First, by the optimality of
$\wt\Theta$ for the convex program~(\ref{eq:convex_program}), we have
$0\in\nabla L(\wt\Theta)+\mu\partial\|\wt\Theta\|_*$. Hence, we can
fix $\Psi\in\mu\partial\|\wt\Theta\|_*$ such that
%
\begin{equation}
\label{eq:Theta_hat_optimality} \nabla L(\wt\Theta)+\Psi= 0.
\end{equation}
Then, plugging equation (\ref{eq:Theta_hat_optimality}) into
inequality (\ref{eq:subgrad_Theta_b_h_s1}) yields
%
\begin{equation}
\bigl\langle\Xi_o+\nabla L(\wt\Theta),\bar\Theta-\wt\Theta \bigr
\rangle \ge (1-1/c) \mu\| \calP_{\Tp}\wt\Theta\|_*. \label{eq:subgrad_Theta_b_h_s2}
\end{equation}
Next, we set $\Xi=-Q+\bar\Theta-\Theta^*+\wh\Sigma$, so $\Xi
_o\in G_c$
by assumption. We also use $\nabla L(\wt\Theta) = \wt\Theta_o-\wh
\Sigma_o$. Then inequality (\ref{eq:subgrad_Theta_b_h_s2}) becomes
%
\begin{equation}
\bigl\langle-Q_o+\bar\Theta_o-\Theta^*_o+
\wt\Theta_o,\bar\Theta-\wt \Theta \bigr\rangle \ge(1-1/c) \mu\|
\calP_{\Tp}\wt\Theta\|_*. \label{eq:subgrad_Theta_b_h_s3}
\end{equation}
Finally, observe that, for arbitrary commensurate matrices $A$ and $B$,
we have $ \langle A_o,B \rangle=\tr(A^T_o B)=\tr(A^T_o B_o)=
\langle A_o,B_o \rangle$. Hence, we are free to replace the term $\bar\Theta
-\wt
\Theta$ in the angle bracket on the left-hand side of inequality (\ref
{eq:subgrad_Theta_b_h_s3}) by $\bar\Theta_o-\wt\Theta_o$. The corollary
then follows.
\end{pf}

We are now ready to derive the general recovery bound for the refined
estimator $\widetilde{\Sigma}$ in terms of $\bar\Theta$ and the
certificate $Q$. We denote $E=\wh\Sigma-\Sigma$ again, and note that $E_o=E$.

\begin{theorem}
\label{thmm:recovery_bound_raw}
If
%
\begin{equation}
-Q_o+\bar\Theta_o+E\in G_c,
\label{eq:Q_def}
\end{equation}
then
%
\begin{equation}
\tfrac{1}2 \| \widetilde{\Sigma} - \Sigma\|_F^2
+ (1-1/c) \mu\| \calP _{\Tp}\wt\Theta\|_* \le\tfrac{1}2 \bigl\| \bar
\Theta_o - \Theta_o^* \bigr\| _F^2 +
\tfrac{1}2\| \bar\Theta_o - Q_o
\|_F^2. \label{eq:thmm:recovery_bound_raw}
\end{equation}
\end{theorem}

\begin{pf}
We start from Lemma~\ref{lemma_equality}. By the construction of
$\widetilde{\Sigma}$ as in (\ref{eq:Sigma_refined}), the off-diagonal
elements of $\wt\Theta$ and $\widetilde{\Sigma}$ agree, that is,
$\wt
\Theta_o=\widetilde{\Sigma}_o$. In addition, $\Theta^*_o=\Sigma_o$.
Hence, $\wt\Theta_o - \Theta^*_o = \widetilde{\Sigma}_o-\Sigma_o =
\widetilde{\Sigma}-\Sigma$. Thus, after discarding the term $\frac
{1}{2}\|\wt\Theta_o-Q_o\|^2$, equation (\ref
{eq:recoverty_bound_equality}) becomes
%
\begin{eqnarray}\label{eq:recovery_bound_intermediate}
&&\tfrac{1}2 \| \widetilde{\Sigma}-\Sigma\|_F^2
+ \bigl\langle -Q_o + \bar \Theta _o -
\Theta_o^* + \wt\Theta_o , \bar\Theta_o -
\wt\Theta_o \bigr\rangle
\nonumber
\\[-8pt]
\\[-8pt]
\nonumber
&&\quad\le\tfrac{1}2 \bigl\| \bar\Theta_o - \Theta_o^*
\bigr\|_F^2 + \tfrac{1}2\| \bar \Theta_o
- Q_o\|_F^2.
\end{eqnarray}

Next we invoke Lemma~\ref{lemma_inequality}. Because $-\Theta_o^*+\wh
\Sigma_o = -\Sigma_o+\wh\Sigma_o = E_o=E$, condition (\ref{eq:Q_def})
translates into condition (\ref{eq:condition_in_Gc}), and hence
inequality (\ref{eq:Zhang_lemma_6_current_context}) holds. Finally,
plugging inequality (\ref{eq:Zhang_lemma_6_current_context}) into
inequality (\ref{eq:recovery_bound_intermediate}) yields the theorem.
\end{pf}

\subsection{Certificate construction}
\label{sec:certificate_construction}

From Theorem~\ref{thmm:recovery_bound_raw}, it is clear that the
recovery bounds on $\| \widetilde{\Sigma} - \Sigma\|_F^2$ and $\|
\calP
_{\Tp}\wt\Theta\|_*$ depend crucially on an appropriate certificate $Q$
such that $\|Q_o-\bar\Theta_o\|_F^2$ can be tightly bounded. This
section is dedicated to the construction of such a certificate.

Recall that $\bar\Theta\in\RR^{d\times d}$, which is intended to be
some low-rank approximation to $\Theta^*$, has been left unspecified so
far. Now we restrict $\bar\Theta$ to be a positive semidefinite matrix
of rank $r$, with the eigen-decomposition
%
\begin{equation}
\bar\Theta=\bar U \bar\Lambda{\bar U}^T. \label{eq:bar_Theta_eigendecompose}
\end{equation}
Here, $\bar U\in\RR^{d\times r}$ is the matrix of the orthonormal
eigenvectors of $\bar\Theta$ corresponding to the positive eigenvalues
that are the diagonal elements of $\bar\Lambda\in\RR^{r\times r}$.
Recall from Section~\ref{sec:rec_bnd_with_certificate} that $\T$
denotes the tangent space $T(\bar\Theta)$, and $\Tp$ denotes its
orthogonal complement $T(\bar\Theta)^{\perp}$. Then, with our specific
choice of $\bar\Theta$, the projectors $\calP_{\T}$ and $\calP
_{\Tp}$
are given by
\begin{subequations}
%
\begin{eqnarray}
\label{eq:P_Tbar}  \calP_{\T}(N) &=& \bar U {\bar U}^T N + N
\bar U {\bar U}^T - \bar U {\bar U}^T N \bar U {\bar
U}^T,
\\
\label{eq:P_Tbar_p} \calP_{\Tp}(N) &= &\bigl(I_d - \bar U {\bar
U}^T\bigr) N \bigl(I_d- \bar U {\bar U}^T
\bigr)
\end{eqnarray}
\end{subequations}
for arbitrary $N\in\mathbb{R}^{d\times d}$. For notational brevity,
from now on we will omit the parentheses surrounding the argument when
applying the projectors. Again with our specific choice of $\bar\Theta
$, we can give a more explicit characterization of $G_c$, defined
earlier in (\ref{eq:G_c_definition}), as
%
\begin{equation}
G_c = \bigl\{\Phi\in\RR^{d\times d} \dvt \calP_{\bar T}
\Phi= \mu\bar U \bar U^T \mbox{ and } \|\calP_{\Tp}\Phi
\|_2\le\mu/c\bigr\}. \label{eq:G_c_def_exp}
\end{equation}
We also define
%
\begin{equation}
\gamma= \bigl\|\bar U {\bar U}^T\bigr\|_{\infty}=\max
_{1\le i\le d} \bigl[\bar U {\bar U}^T\bigr]_{ii}
\le1. \label{eq:def_gamma}
\end{equation}
The second equality in (\ref{eq:def_gamma}) is due to the fact that
$\bar U{\bar U}^T$ is positive semidefinite, while the inequality
follows since $\bar U$ is a matrix of orthonormal eigenvectors.

Next, we obtain some technical results stating that, under certain
conditions, the operators $\calP_{\T}$ and $\calP_{\Omega}\calP
_{\T}$
are contractions under certain matrix norms (Lemma~\ref
{lemma:contraction}), and the operator $\calId-\calP_{\T}\calP
_{\Omega
}$, with $\calId$ the identity operator in $\RR^{d\times d}$, is
invertible (Lemma~\ref{lemma:invertibility}). These results essentially
follow from \cite{Hsu11} (e.g., their Lemmas 4, 8 and 10),
but we offer tighter bounds specialized to our study.

\begin{lemma}
\label{lemma:contraction}
For any diagonal matrix $D\in\RR^{d\times d}$, we have
%
\begin{equation}
\label{eq:calP_T_inf_to_inf} \|\calP_{\T} D\|_{\infty} \le3\gamma\|D
\|_{\infty}.
\end{equation}
For any matrix $M\in\RR^{d\times d}$, we have
%
\begin{equation}
\|\calP_{\T} M\|_{\infty} \le2\sqrt{\gamma} \|M
\|_{2} \label{eq:calP_T_inf_to_inf_a}
\end{equation}
and
%
\begin{equation}
\|\calP_{\Omega}\calP_{\T} M\|_{1} \le3\gamma\|M
\|_{1}. \label{eq:calP_T_1_to_1}
\end{equation}
\end{lemma}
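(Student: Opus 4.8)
The plan is to abbreviate $P\colonequals\bar U{\bar U}^T$, so that by~(\ref{eq:P_Tbar}) one has $\calP_{\T}N = PN + NP - PNP$ for every $N\in\RR^{d\times d}$, and to record three elementary facts about the positive semidefinite orthogonal projector $P$: (i) $[P]_{ii} = \|Pe_i\|^2\le\gamma$ for every $i$, where $\gamma = \|P\|_{\infty} = \max_i[P]_{ii}$ by~(\ref{eq:def_gamma}); (ii) $[P]_{ij}^2\le[P]_{ii}[P]_{jj}\le\gamma^2$, since $P$ is positive semidefinite; and (iii) $\sum_k|[P]_{ik}||[P]_{kj}|\le\bigl(\sum_k[P]_{ik}^2\bigr)^{1/2}\bigl(\sum_k[P]_{kj}^2\bigr)^{1/2} = \bigl([P^2]_{ii}\bigr)^{1/2}\bigl([P^2]_{jj}\bigr)^{1/2} = \sqrt{[P]_{ii}[P]_{jj}}\le\gamma$, using the symmetry and idempotency of $P$. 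I will also use $\|I_d-P\|_{\infty}\le1$, which holds because the diagonal entries of $I_d-P$ lie in $[0,1]$ while its off-diagonal entries are $-[P]_{ij}$, bounded in modulus by $\gamma\le1$.

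Given this, Inequality~(\ref{eq:calP_T_inf_to_inf}) follows by bounding $\calP_{\T}D = PD + DP - PDP$ term by term: for diagonal $D$ we have $[PD]_{ij} = [P]_{ij}D_{jj}$ and $[DP]_{ij} = D_{ii}[P]_{ij}$, so fact~(ii) gives $\|PD\|_{\infty},\|DP\|_{\infty}\le\gamma\|D\|_{\infty}$, while $[PDP]_{ij} = \sum_k[P]_{ik}D_{kk}[P]_{kj}$, so fact~(iii) gives $\|PDP\|_{\infty}\le\gamma\|D\|_{\infty}$; the triangle inequality then yields the factor $3\gamma$. For Inequality~(\ref{eq:calP_T_inf_to_inf_a}) I would regroup as $\calP_{\T}M = PM + (I_d-P)MP$ and apply Lemma~\ref{lemma:AMB_infty}, obtaining $\|PM\|_{\infty} = \|P\,M\,I_d\|_{\infty}\le\sqrt{\|PP^T\|_{\infty}\,\|I_d\|_{\infty}}\,\|M\|_2 = \sqrt{\gamma}\,\|M\|_2$ and $\|(I_d-P)MP\|_{\infty}\le\sqrt{\|(I_d-P)(I_d-P)^T\|_{\infty}\,\|P^TP\|_{\infty}}\,\|M\|_2\le\sqrt{1\cdot\gamma}\,\|M\|_2$, where I used $(I_d-P)^2 = I_d-P$, $P^2 = P$, and $\|I_d-P\|_{\infty}\le1$; summing gives $2\sqrt{\gamma}\,\|M\|_2$ (a direct entrywise estimate $|[PM]_{ij}| = |(Pe_i)^T Me_j|\le\|Pe_i\|\,\|M\|_2\le\sqrt{\gamma}\,\|M\|_2$, together with the analogous bound on $(I_d-P)MP$, also works). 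For Inequality~(\ref{eq:calP_T_1_to_1}), note that $\calP_{\Omega}\calP_{\T}M$ keeps only the diagonal of $\calP_{\T}M = PM + MP - PMP$, so $\|\calP_{\Omega}\calP_{\T}M\|_1 = \sum_i|[PM]_{ii} + [MP]_{ii} - [PMP]_{ii}|$; expanding $[PM]_{ii} = \sum_k[P]_{ik}[M]_{ki}$ and using fact~(ii) gives $\sum_i|[PM]_{ii}|\le\gamma\|M\|_1$, and symmetrically for $[MP]_{ii}$, while expanding $[PMP]_{ii} = \sum_{k,\ell}[P]_{ik}[M]_{k\ell}[P]_{\ell i}$ and summing over $i$ first gives $\sum_i|[PMP]_{ii}|\le\gamma\|M\|_1$ by fact~(iii); the triangle inequality yields $3\gamma$.

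I do not expect a genuine obstacle here: once facts (i)--(iii) are in place, each of the three bounds is a two- or three-line entrywise Cauchy--Schwarz estimate. The only point requiring a little care is the constant $2\sqrt{\gamma}$ in~(\ref{eq:calP_T_inf_to_inf_a}): one must use the two-term regrouping $PM + (I_d-P)MP$ rather than the three-term form $PM + MP - PMP$, since the latter would contribute an extra $\gamma\|M\|_2$ from the $PMP$ summand; and fact~(iii) must genuinely invoke $P^2 = P$, not merely the symmetry of $P$.
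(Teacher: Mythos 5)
Your proposal is correct. For (\ref{eq:calP_T_inf_to_inf}) and (\ref{eq:calP_T_inf_to_inf_a}) it is essentially the paper's argument: the same three-term decomposition $PD+DP-PDP$ with the observation that $\|AB\|_{\infty}\le\|A\|_{\infty}\|B\|_{\infty}$ when one factor is diagonal, and the same two-term regrouping $PM+(I_d-P)MP$ with Lemma~\ref{lemma:AMB_infty}; your entrywise Cauchy--Schwarz facts (i)--(iii) are just an inlined version of that lemma (the paper bounds $\|PDP\|_{\infty}$ by invoking Lemma~\ref{lemma:AMB_infty} with $A=B=P$, $C=D$, which is your fact (iii) in disguise). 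Where you genuinely diverge is (\ref{eq:calP_T_1_to_1}): the paper does not touch the entries of $\calP_{\Omega}\calP_{\T}M$ at all, but instead uses the duality of $\|\cdot\|_1$ and $\|\cdot\|_{\infty}$ together with the self-adjointness of $\calP_{\Omega}$ and $\calP_{\T}$ to write $\|\calP_{\Omega}\calP_{\T}M\|_1=\sup_{\|N\|_{\infty}\le1}\left<M,\calP_{\T}\calP_{\Omega}N\right>$ and then applies (\ref{eq:calP_T_inf_to_inf}) to the diagonal matrix $\calP_{\Omega}N$; this reuses the first bound and keeps the constant $3\gamma$ automatically consistent between the two inequalities. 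Your direct computation of $\sum_i|[\calP_{\T}M]_{ii}|$ is equally valid and arrives at the same constant, and it has the minor virtue of making explicit that the idempotency $P^2=P$ (not just symmetry) is what delivers the bound $\sum_k|[P]_{ik}||[P]_{kj}|\le\gamma$; the duality route is slightly slicker but hides that point inside the already-proved (\ref{eq:calP_T_inf_to_inf}).
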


\begin{pf}
The proof can be found in Appendix \ref{sec:aux_proof_refined_estimator}.
\end{pf}

\begin{lemma}
\label{lemma:invertibility}
Assume that $\gamma<1/3$. Then the operator $\calId-\calP_{\T}\calP
_{\Omega}\dvtx\RR^{d\times d}\rightarrow\RR^{d\times d}$ is a bijection,
and hence is invertible. Moreover, $\calId-\calP_{\T}\calP_{\Omega}$
satisfies, for any matrix $M\in\RR^{d\times d}$,
%
\begin{equation}
\bigl\|(\calId-\calP_{\T}\calP_{\Omega})^{-1}M
\bigr\|_{\infty}\le\frac
{1}{1-3\gamma}\|M\|_{\infty}. \label{eq:I_calPT_calPO_inversion}
\end{equation}
\end{lemma}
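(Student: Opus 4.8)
The plan is to recognize $\calP_{\T}\calP_{\Omega}$ as a strict contraction on the finite-dimensional space $(\RR^{d\times d},\|\cdot\|_{\infty})$ and then run the standard Neumann-series / fixed-point argument. First I would record the contraction estimate. For an arbitrary $M\in\RR^{d\times d}$, the matrix $\calP_{\Omega}M=I_d\circ M$ is diagonal, and since its nonzero entries form a subset of the entries of $M$ we have $\|\calP_{\Omega}M\|_{\infty}\le\|M\|_{\infty}$. Applying Inequality~(\ref{eq:calP_T_inf_to_inf}) of Lemma~\ref{lemma:contraction} to the diagonal matrix $D=\calP_{\Omega}M$ then yields $\|\calP_{\T}\calP_{\Omega}M\|_{\infty}\le 3\gamma\|\calP_{\Omega}M\|_{\infty}\le 3\gamma\|M\|_{\infty}$, so $\calP_{\T}\calP_{\Omega}$ has $\|\cdot\|_{\infty}$-operator norm at most $3\gamma$, which is strictly less than $1$ by hypothesis.

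Next I would establish invertibility. Because $\RR^{d\times d}$ is finite-dimensional, a linear operator on it is bijective as soon as it is injective, so it suffices to check that $\calId-\calP_{\T}\calP_{\Omega}$ has trivial kernel. If $(\calId-\calP_{\T}\calP_{\Omega})M=0$, then $M=\calP_{\T}\calP_{\Omega}M$, whence $\|M\|_{\infty}\le 3\gamma\|M\|_{\infty}$; since $3\gamma<1$ this forces $M=0$. Thus $\calId-\calP_{\T}\calP_{\Omega}$ is a bijection and the inverse $(\calId-\calP_{\T}\calP_{\Omega})^{-1}$ is well defined.

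Finally I would deduce the norm bound~(\ref{eq:I_calPT_calPO_inversion}). Fix $M\in\RR^{d\times d}$ and set $N=(\calId-\calP_{\T}\calP_{\Omega})^{-1}M$, so that $M=N-\calP_{\T}\calP_{\Omega}N$. By the triangle inequality and the contraction estimate,
\[
\|N\|_{\infty}\le\|M\|_{\infty}+\|\calP_{\T}\calP_{\Omega}N\|_{\infty}\le\|M\|_{\infty}+3\gamma\|N\|_{\infty},
\]
and rearranging (legitimate since $1-3\gamma>0$) gives $\|N\|_{\infty}\le\frac{1}{1-3\gamma}\|M\|_{\infty}$, which is exactly the claimed inequality. (Alternatively one may write $(\calId-\calP_{\T}\calP_{\Omega})^{-1}=\sum_{k\ge0}(\calP_{\T}\calP_{\Omega})^{k}$, a series that converges by the contraction estimate, and bound each term by $(3\gamma)^{k}\|M\|_{\infty}$.) There is no serious obstacle here; the only point requiring care is to invoke Lemma~\ref{lemma:contraction} in the form valid for diagonal inputs, together with the elementary remark that $\|\calP_{\Omega}M\|_{\infty}\le\|M\|_{\infty}$, so that the composition $\calP_{\T}\calP_{\Omega}$ inherits the factor $3\gamma$.
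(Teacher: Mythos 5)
Your proof is correct and follows essentially the same route as the paper's: the contraction estimate $\|\calP_{\T}\calP_{\Omega}M\|_{\infty}\le 3\gamma\|M\|_{\infty}$ via Inequality~(\ref{eq:calP_T_inf_to_inf}) applied to the diagonal matrix $\calP_{\Omega}M$, injectivity plus finite-dimensionality for bijectivity, and a triangle-inequality rearrangement for the norm bound. The paper phrases the last two steps as the single lower bound $\|(\calId-\calP_{\T}\calP_{\Omega})M\|_{\infty}\ge(1-3\gamma)\|M\|_{\infty}$, but this is the same computation.
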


\begin{pf}
The proof can be found in Appendix \ref{sec:aux_proof_refined_estimator}.
\end{pf}

We demonstrate in Theorem~\ref{thmm:certificate} that, under appropriate
conditions, we can solve for $Q_o-\bar\Theta_o$ in an equation of the
form (\ref{eq:Q_def}), such that $Q-\bar\Theta$ has low rank and $\|
Q-\bar\Theta\|_2$ is small, which further implies that $\|Q_o-\bar
\Theta
_o\|_F^2$ is tightly bounded, as is desired. The techniques we use are
based on the proofs of \cite{Chandrasekaran12a}, Proposition~5.2 and
\cite{Hsu11}, Theorem~5.

\begin{theorem}
\label{thmm:certificate}
Assume that $\bar\Theta$ is positive semidefinite and has the
eigen-decomposition (\ref{eq:bar_Theta_eigendecompose}). Let $\bar
T=T(\bar\Theta)$. Let $G_c$ and $\gamma$ be defined as in (\ref
{eq:G_c_def_exp}) and (\ref{eq:def_gamma}), respectively. Suppose that
$\gamma$ satisfies
%
\begin{equation}
\label{eq:gamma_condition} \gamma< \frac{1}{c+3}.
\end{equation}
Let $A$ be the event on which
%
\begin{equation}
\label{eq:mu_condition} \mu\ge \biggl(\frac{1}{c}-\frac{\gamma
}{1-3\gamma}
\biggr)^{-1} \biggl(\frac{2\sqrt{\gamma}}{1-3\gamma} + 1 \biggr) \|E\|_2
\end{equation}
holds. Then, on the event $A$, there exists some $\Phi\in\bar T$ such that
%
\begin{equation}
-\Phi_o+E\in G_c \label{eq:Phi_in_Gc_condition}
\end{equation}
and
%
\begin{equation}
\|\Phi\|_2 \le \biggl( \frac{2}{c} +1 \biggr) \mu.
\label{eq:certificate_operator_norm}
\end{equation}
\end{theorem}

\textit{Remark}.
Note that inequality (\ref{eq:gamma_condition}) ensures that the
multiplicative factor $ (\frac{1}{c}-\frac{\gamma}{1-3\gamma
} )^{-1}$ in inequality (\ref{eq:mu_condition}) is positive.

\begin{pf*}{Proof of Theorem~\ref{thmm:certificate}}
We focus on the event $A$. Note that assumption (\ref
{eq:gamma_condition}) entails that $\gamma<1/4$ since $c\ge1$. As a
result, we can apply Lemma~\ref{lemma:invertibility} to conclude that
$\calId-\calP_{\T}\calP_{\Omega}$ is invertible, and that
inequality (\ref{eq:I_calPT_calPO_inversion}) holds. Then we can set
%
\begin{equation}
\Phi= (\calId-\calP_{\T}\calP_{\Omega})^{-1} \bigl(
\calP_{\T
}E- \mu \bar U{\bar U}^T \bigr). \label{eq:certificate_sol}
\end{equation}
We show that $\Phi$ has all the desired properties.

First, we apply the operator $\calId-\calP_{\T}\calP_{\Omega}$ on both
sides of equation (\ref{eq:certificate_sol}), and obtain
%
\begin{equation}
\Phi= \calP_{\T}\calP_{\Omega}\Phi+\calP_{\T}E-\mu
\bar U{\bar U}^T, \label{eq:certificate_eq_1}
\end{equation}
from which it is clear that $\Phi\in\T$.

Relationship (\ref{eq:Phi_in_Gc_condition}) is equivalent to
%
\begin{equation}
-(\Phi-\calP_{\Omega}\Phi)+E\in G_c, \label{eq:certificate_condition}
\end{equation}
which is further equivalent to the following two conditions by the
characterization (\ref{eq:G_c_def_exp}) of $G_c$. The first condition
is obtained by applying the operator $\calP_{\T}$ and the second one is
obtained by applying the operator $\calP_{\Tp}$ on both sides of
(\ref
{eq:certificate_condition}):
\begin{subequations}
%
\begin{eqnarray}
\label{eq:certificate_eq} -(\calId-\calP_{\T}\calP_{\Omega})\Phi+
\calP_{\T}E & = &\mu\bar U{\bar U}^T,
\\
\label{eq:certificate_ineq} \bigl\|\calP_{\Tp}(\Phi-\calP_{\Omega}\Phi- E)
\bigr\|_{2} & \le&\mu/c.
\end{eqnarray}
\end{subequations}

Equation (\ref{eq:certificate_eq}) is equivalent to equation (\ref
{eq:certificate_eq_1}), and hence is satisfied. Next, we check that
inequality (\ref{eq:certificate_ineq}) holds. By equation (\ref
{eq:certificate_sol}), inequalities (\ref{eq:I_calPT_calPO_inversion})
and (\ref{eq:calP_T_inf_to_inf_a}), we have
%
\begin{eqnarray}\label{eq:Phi_infty_inequality}
\|\Phi\|_{\infty} &\le&\frac{1}{ 1 - 3\gamma} \bigl\| \calP_{\T}E - \mu
\bar U{\bar U}^T \bigr\|_{\infty} \le\frac{1}{ 1 - 3\gamma} \bigl( \|
\calP _{\T}E \|_{\infty} + \bigl\| \mu\bar U{\bar U}^T
\bigr\|_{\infty} \bigr)
\nonumber
\\[-8pt]
\\[-8pt]
\nonumber
&\le&\frac{1}{ 1 - 3\gamma} \bigl( 2\sqrt{\gamma} \| E \|_2 + \gamma \mu \bigr).
\end{eqnarray}
Using inequality (\ref{eq:Phi_infty_inequality}) and $\|\calP_{\Tp
}\calP
_{\Omega}\Phi\|_2 \le\|\calP_{\Omega}\Phi\|_2 = \|\calP_{\Omega
}\Phi\|
_{\infty}\le\|\Phi\|_{\infty}$, we have
%
\begin{eqnarray}\label{eq:nu_orthogonal_condition}
\bigl\|\calP_{\Tp}(\Phi-\calP_{\Omega}\Phi- E)\bigr\|_{2}
&\le&\|\calP _{\Tp}\Phi \|_2+\|\calP_{\Tp}
\calP_{\Omega}\Phi\|_2 + \|\calP_{\Tp}E
\|_2
\nonumber
\\[-8pt]
\\[-8pt]
\nonumber
&\le& 0+\|\Phi\|_{\infty} + \|E\|_{2} \le \biggl(
\frac{2\sqrt
{\gamma}
}{ 1 - 3\gamma} +1 \biggr) \|E\|_{2} + \frac{\gamma}{1-3\gamma} \mu.
\end{eqnarray}
Then it is easy to see that inequality (\ref
{eq:nu_orthogonal_condition}), assumptions (\ref{eq:gamma_condition})
and (\ref{eq:mu_condition}) together imply inequality (\ref
{eq:certificate_ineq}). Hence, we have verified (\ref{eq:Phi_in_Gc_condition}).

Finally, starting from equation (\ref{eq:certificate_eq_1}), we have
\begin{eqnarray*}
\|\Phi\|_2 &\le&\|\calP_{\T}\calP_{\Omega}\Phi
\|_2 + \|\calP_{\T
}E\|_2 +\bigl \|\mu\bar U{\bar
U}^T\bigr\|_2 \le2\|\Phi\|_{\infty} + 2\|E
\|_2 + \mu\bigl \| \bar U{\bar U}^T\bigr\|_2
\\
&\le&\frac{2}{ 1 - 3\gamma} \bigl( 2\sqrt{\gamma} \| E \|_2 + \gamma \mu \bigr) + 2
\|E\|_2 + \mu=2 \biggl( \frac{ 2\sqrt{\gamma}
}{1-3\gamma
} + 1 \biggr) \|E
\|_2 + \biggl(\frac{2\gamma}{1-3\gamma}+1 \biggr)\mu
\\
&\le& 2 \biggl( \frac{1}{c} - \frac{\gamma}{1-3\gamma} \biggr) \mu+ \biggl(
\frac{2\gamma}{1-3\gamma}+1 \biggr)\mu= \biggl( \frac{2}{c} +1 \biggr)\mu.
\end{eqnarray*}
Here, the second inequality follows from the fact that $\|\calP_{\T
}\calP_{\Omega}\Phi\|_2\le2\|\calP_{\Omega}\Phi\|_2\le2\|\Phi\|
_{\infty
}$, the third inequality follows from inequality (\ref
{eq:Phi_infty_inequality}), and the fourth inequality follows by
assumption~(\ref{eq:mu_condition}). Hence, inequality (\ref
{eq:certificate_operator_norm}) is established.
\end{pf*}

\subsection{Recovery bound for the refined estimator \texorpdfstring{$\widetilde
{\Sigma}$}{widetilde{Sigma}}}
\label{sec:rec_bnd_refined_estimator}

In this section, we state in Corollary~\ref{cor:recovery_bound} the
main recovery bound that will lead to the oracle inequality for the
refined estimator $\widetilde{\Sigma}$. We recall $U_r^*$, $\gamma_r$
and $\Theta_r^*$ as introduced in equations (\ref{eq:U_star_r}),
(\ref
{eq:gamma_r}) and (\ref{eq:Theta_star_r}).

\begin{corollary}
\label{cor:recovery_bound}
Let $r$ be such that $0\le r\le r^*$ and
%
\begin{equation}
\gamma_r<\frac{1}{c+3}. \label{eq:gamma_r_condition}
\end{equation}
Let $A$ be the event on which the regularization parameter $\mu$ satisfies
%
\begin{equation}
\label{eq:mu_gamma_r_condition} \mu\ge \biggl(\frac{1}{c}-\frac{\gamma_r}{1-3\gamma_r}
\biggr)^{-1} \biggl(\frac{2\sqrt{\gamma_r}}{1-3\gamma_r} + 1 \biggr) \|E\|_2.
\end{equation}
(Note that inequalities (\ref{eq:gamma_r_condition}) and (\ref
{eq:mu_gamma_r_condition}) are just inequalities (\ref
{eq:gamma_condition}) and (\ref{eq:mu_condition}) with the substitution
of $\gamma$ by $\gamma_r$.) Then, on the event $A$ we have
%
\begin{equation}
\| \widetilde{\Sigma} - \Sigma\|_F^2 + (2-2/c) \mu\|
\calP _{T(\Theta
^*_r)^{\perp}}\wt\Theta\|_* \le\sum_{ j\dvt r<j\le r^* }
\lambda _j^2\bigl(\Theta ^*\bigr) + 2 (1+2/c)^2
r \mu^2. \label{eq:recovery_bound_2}
\end{equation}
\end{corollary}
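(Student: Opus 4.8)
The plan is to deduce Corollary~\ref{cor:recovery_bound} from Theorem~\ref{thm:recovery_bound_raw} by taking the so-far-free matrix $\bar\Theta$ there to be the best rank-$r$ approximation $\Theta^*_r = U^*_r\Lambda^*_r U^{*T}_r$ to $\Theta^*$, and by supplying the primal-dual certificate $Q$ required by that theorem from the construction in Theorem~\ref{thm:certificate}.

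First I will record that with $\bar\Theta=\Theta^*_r$ the eigenvector matrix $\bar U$ of Theorem~\ref{thm:certificate} is exactly $U^*_r$ (and $\bar\Theta$ is positive semidefinite, as required there), so the quantity $\gamma=\|\bar U\bar U^T\|_{\infty}$ in that theorem equals $\gamma_r$ from (\ref{eq:gamma_r}), and $\T = T(\Theta^*_r)$. Hence the hypotheses (\ref{eq:gamma_r_condition}) and (\ref{eq:mu_gamma_r_condition}) of the corollary are precisely the hypotheses (\ref{eq:gamma_condition}) and (\ref{eq:mu_condition}) of Theorem~\ref{thm:certificate} with $\gamma=\gamma_r$, and the event $A$ of the corollary coincides with the event $A$ of that theorem. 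On $A$, Theorem~\ref{thm:certificate} therefore produces a matrix $\Phi\in\T$ with $-\Phi_o+E\in G_c$ and $\|\Phi\|_2\le(1+2/c)\mu$. I will then set $Q=\bar\Theta+\Phi$, so that $-Q_o+\bar\Theta_o+E=-\Phi_o+E\in G_c$; this is exactly the certificate condition (\ref{eq:Q_def}), so Theorem~\ref{thm:recovery_bound_raw} applies and yields
\[
\frac12\|\widetilde{\Sigma}-\Sigma\|_F^2+(1-1/c)\mu\,\|\calP_{T(\Theta^*_r)^{\perp}}\wt\Theta\|_*\le\frac12\|\bar\Theta_o-\Theta_o^*\|_F^2+\frac12\|\bar\Theta_o-Q_o\|_F^2.
\]

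Next I will bound the two terms on the right. Since $\bar\Theta_o-\Theta_o^*=(\Theta^*_r-\Theta^*)_o$, deleting the diagonal can only decrease the Frobenius norm, so by the Eckart--Young theorem $\|\bar\Theta_o-\Theta_o^*\|_F^2\le\|\Theta^*_r-\Theta^*\|_F^2=\sum_{j>r}\lambda_j^2(\Theta^*)$. For the second term, $\bar\Theta_o-Q_o=-\Phi_o$, and because $\Phi\in\T=T(\Theta^*_r)$ it has rank at most $2r$; hence $\|\Phi_o\|_F\le\|\Phi\|_F\le\sqrt{2r}\,\|\Phi\|_2\le\sqrt{2r}\,(1+2/c)\mu$, i.e.\ $\|\bar\Theta_o-Q_o\|_F^2\le 2(1+2/c)^2r\mu^2$. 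Substituting both bounds into the displayed inequality and multiplying through by $2$ gives exactly Inequality~(\ref{eq:recovery_bound_2}).

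I expect this to be essentially bookkeeping once Theorems~\ref{thm:recovery_bound_raw} and \ref{thm:certificate} are in hand; the points that need care are (i) checking that $Q=\bar\Theta+\Phi$ genuinely verifies hypothesis (\ref{eq:Q_def}) of Theorem~\ref{thm:recovery_bound_raw}, which hinges on both theorems using the same $G_c$ defined relative to $\bar\Theta=\Theta^*_r$, and (ii) controlling $\|\Phi_o\|_F$ through the rank-at-most-$2r$ property of the tangent space together with the operator-norm estimate (\ref{eq:certificate_operator_norm}). The degenerate case $r=0$ (where $\bar\Theta=0$, $\gamma_0=0$, and one may simply take $\Phi=0$) should be noted separately, where the bound reduces to $\|\widetilde{\Sigma}-\Sigma\|_F^2\le\sum_{j>0}\lambda_j^2(\Theta^*)$.
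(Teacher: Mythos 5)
Your proposal is correct and follows essentially the same route as the paper: choose $\bar\Theta=\Theta^*_r$, invoke Theorem~\ref{thm:certificate} to produce $\Phi$, set $Q=\bar\Theta+\Phi$ to verify (\ref{eq:Q_def}), apply Theorem~\ref{thm:recovery_bound_raw}, and bound the two right-hand terms via Eckart--Young and the rank-$2r$/operator-norm estimate on $\Phi$. The only addition is your explicit remark on the $r=0$ case, which is harmless but not needed.
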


\textit{Remark}.
We can now see that the choice $c=1$ in $G_c$ is sufficient for proving
a bound on $\|\widetilde{\Sigma}-\Sigma\|_F^2$. With this choice of
$c$, inequality (\ref{eq:gamma_r_condition}) states that $U_r^*$, the
truncated matrix of the orthonormal eigenvectors of $\Theta^*$
corresponding to the $r$ largest eigenvalues, should satisfy the mild
condition $\|U_r^* U_r^{*T}\|_{\infty}<1/4$. On the other hand, the
choice $c>1$ leads to a bound on $\|\calP_{\Omega}(\wt\Theta-\Theta
^*)\|
_1$ as we will see in Appendix \ref{sec:diagonal_deviation}.

\begin{pf*}{Proof of Corollary \ref{cor:recovery_bound}}
We start with the general recovery bound, Theorem~\ref
{thmm:recovery_bound_raw}. In the context of Theorem~\ref
{thmm:recovery_bound_raw}, $\bar\Theta$ and $Q$ should satisfy
relationship (\ref{eq:Q_def}) but are otherwise completely arbitrary.

We now set $\bar\Theta=\Theta^*_r$, so $\bar\Theta$ is positive
semidefinite. We also concentrate on the event $A$. Then, by
assumptions (\ref{eq:gamma_r_condition}) and (\ref
{eq:mu_gamma_r_condition}), inequalities (\ref{eq:gamma_condition}) and
(\ref{eq:mu_condition}) hold with the substitution of $\gamma$ by
$\gamma_r$. Hence, Theorem~\ref{thmm:certificate} applies. We let
$\Phi$
be constructed according to Theorem~\ref{thmm:certificate} for the
chosen $\bar\Theta=\Theta_r^*$, so that $\Phi\in\T=T(\Theta_r^*)$,
$-\Phi_o+E\in G_c$, and $\|\Phi\|_2\le(1+2/c)\mu$. We set $Q=\bar
\Theta
+\Phi$ so $Q - \bar\Theta= \Phi$. Then relationship (\ref{eq:Q_def})
is satisfied, and Theorem~\ref{thmm:recovery_bound_raw} further states
that inequality (\ref{eq:thmm:recovery_bound_raw}) holds. We proceed to
bound the two terms on the right-hand side of inequality (\ref
{eq:thmm:recovery_bound_raw}) separately.

First, we consider the term $\| \bar\Theta_o - \Theta_o^* \|_F^2$.
Here and below, for brevity, we sometimes abbreviate the summation
range $j\dvt r<j\le r^*$ by $j>r$. We have
\[
\bigl\| \bar\Theta_o - \Theta_o^* \bigr\|_F^2
\le\bigl\| \bar\Theta- \Theta^* \bigr\| _F^2 = \bigl\|
\Theta^*_r - \Theta^* \bigr\|_F^2 = \sum
_{ j>r } \lambda _j^2\bigl(\Theta ^*
\bigr).
\]

Next, we consider the term $\| \bar\Theta_o - Q_o\|_F^2$. Using the
fact that $\Phi\in T(\Theta_r^*)$ and so $\operatorname{rank}(\Phi)\le2r$, and
$\|\Phi\|_2\le(1+2/c)\mu$, we have
\[
\|\bar\Theta_o - Q_o\|_F^2 =
\|\Phi_o\|_F^2 \le\|\Phi
\|_F^2 \le 2r\| \Phi\|_2^2
\le2(1+2/c)^2 r\mu^2.
\]

Combining both displays, we conclude that inequality (\ref
{eq:recovery_bound_2}) holds.
\end{pf*}

The bound on $\| \widetilde{\Sigma} - \Sigma\|_F$ obtained in
Corollary~\ref{cor:recovery_bound} can be further refined by optimizing
the balance between the approximation error and the estimation error.
We can also fix our choice of the regularization parameter $\mu$
according to inequality (\ref{eq:mu_gamma_r_condition}). These
considerations finally lead to our proof of Theorem~\ref
{thmm:recovery_bound_optimize}.

\begin{pf*}{Proof of Theorem~\ref{thmm:recovery_bound_optimize}}
We fix $c=2$, and $\gamma'=1/9$. Then inequality (\ref
{eq:gamma_r_condition}) holds with the substitution of $\gamma_r$ by
$\gamma'$. Let $A$ be the event
%
\begin{equation}
A = \biggl\{ \biggl(\frac{1}{c}-\frac{\gamma'}{1-3\gamma'} \biggr)^{-1}
\biggl(\frac{2\sqrt{\gamma'}}{1-3\gamma'} + 1 \biggr) \|E\|_2 \le \mu \le\bar\mu
\biggr\}. \label{eq:event_A_gamma_p_1}
\end{equation}
That is, $A$ is the event on which both $\mu\le\bar\mu$ and
inequality (\ref{eq:mu_gamma_r_condition}) with the substitution of
$\gamma_r$ by $\gamma'$ hold. Note that the multiplicative factor in
front of $\|E\|_2$ on the right-hand side of (\ref
{eq:event_A_gamma_p_1}) exactly equals $C=6$ with our choices of $c$
and $\gamma'$. Then, by Theorem~\ref{thmm:main_operator_norm_bound} and
our choices (\ref{MU_2}) and (\ref{BARMU_2}) of $\mu$ and $\bar\mu
$, we
conclude that $\PP(A)\ge1-\alpha-\alpha^2/4>1-2\alpha$, and for the
rest of the proof we concentrate on the event $A$.

We let $R$ be chosen according to (\ref{eq:gamma_R_condition}), so in
particular $\gamma_R\le1/9=\gamma'$. Because $\gamma_r$ is
nondecreasing in $r$, and inequalities (\ref{eq:gamma_r_condition}) and
(\ref{eq:mu_gamma_r_condition}) hold with the substitution of $\gamma
_r$ by $\gamma'$, it is straightforward to conclude that
inequalities (\ref{eq:gamma_r_condition}) and (\ref
{eq:mu_gamma_r_condition}) hold in terms of $\gamma_r$ for all $0\le
r\le R$. Hence, by Corollary~\ref{cor:recovery_bound},
inequality (\ref
{eq:recovery_bound_2}) holds for all $0\le r\le R$. Then, after
discarding the term $(2-2/c) \mu\| \calP_{T(\Theta^*_r)^{\perp}}\wt
\Theta\|_*$ on the left-hand side of inequality (\ref
{eq:recovery_bound_2}), we obtain, for all $0\le r\le R$, that
%
\begin{equation}
\| \widetilde{\Sigma} - \Sigma\|_F^2 \le\sum
_{j>r}\lambda _j^2\bigl(\Theta ^*
\bigr) + 2(1+2/c)^2 r\mu^2 \le\sum
_{j>r}\lambda_j^2\bigl(\Theta^*\bigr)
+ 8 r\bar\mu^2. \label{eq:recovery_bound_3}
\end{equation}
Here, the second inequality in (\ref{eq:recovery_bound_3}) follows
because $c=2$ and $\mu\le\bar\mu$. Finally, the theorem follows by
taking the minimum of inequality (\ref{eq:recovery_bound_3}) over
$0\le
r\le R$.
\end{pf*}


\begin{appendix}\label{app}
\section{Discussion of some basic concepts}
\label{sec:def_elliptical_distribution}

In this section, we present formal definitions of some basic concepts
in this paper and then discuss the characterization of the
semi-parametric elliptical copula model. We first present the
definition of an elliptical distribution; see, for instance, \cite{Cambani81}.

\begin{definition}
\label{def:elliptical_distribution}
A random vector $Y=(Y_1,\ldots,Y_d)^T\in\RR^d$ has an elliptical
distribution if for some $\mu\in\RR^d$ and some positive semidefinite
matrix $\overline{\Sigma}\in\RR^{d\times d}$, the characteristic
function $\varphi_{Y-\mu}(t)$ of $Y-\mu$ is a function of the quadratic
form $t^T\overline{\Sigma}t$, that is, $\varphi_{Y-\mu}(t)=\phi
(t^T\overline{\Sigma}t)$ for some function $\phi$. We write $Y\sim
\mathcal{E}_d(\mu,\overline\Sigma,\phi)$, and call $\phi$ the
characteristic generator.
\end{definition}

Next, we present the definition of a copula \cite{Sklar96}; see, for
instance, \cite{Embrechts03}, Theorem~2.2.

\begin{definition}
The copula $C\dvtx [0,1]^d\rightarrow[0,1]$ of a continuous random vector
$Y=(Y_1,\ldots, Y_d)^T \in\RR^d$ is the joint distribution function of
the transformed random vector $U=(F_1(Y_1),\ldots, F_d(Y_d) )^T \in\RR
^d$ on the unit cube $[0,1]^d$, using the marginal distribution
functions $F_j(y)=\PP\{ Y_j \le y\}$ for $1\le j\le d$.
\end{definition}

We recall the basic property that copulas are invariant under strictly
increasing transformations of the individual vector components of the
underlying distribution; see, for instance, \cite{Embrechts03},
Theorem~2.6. It follows from this invariance property
that, if the random vector $X\in\mathbb{R}^d$ follows a distribution
from the semi-parametric elliptical copula model, and if $X$ has the
same copula with an elliptically distributed random vector $Y\in
\mathbb
{R}^d$ such that $Y\sim\mathcal{E}_d(\mu,\overline\Sigma,\phi)$, then
the copula of $X$ is uniquely characterized by the same characteristic
generator $\phi$ and a copula correlation matrix $\Sigma$, defined as
$[\Sigma]_{k\ell}= [\overline{\Sigma}]_{k\ell} / ([\overline
{\Sigma
}]_{kk}[\overline{\Sigma}]_{\ell\ell} )^{1/2}$ for all $1\le k,\ell
\le d$.

\section{Auxiliary proofs for Section \texorpdfstring{\protect\ref
{sec:proof_refined_estimator}}{6}}
\label{sec:aux_proof_refined_estimator}

This section contains the proofs of some auxiliary lemmas in
Section~\ref{sec:proof_refined_estimator}.

\begin{pf*}{Proof of Lemma~\ref{lemma:AMB_infty}}
We let $e_i\in\RR^{d}$ denote the vector with one at the $i$th position
and zeros elsewhere, and $\|\cdot\|$ denote the Euclidean norm for
vectors. Then we have
\begin{eqnarray*}
\| A C B \|_{\infty} &=& \max_{i,j} \bigl\llvert
e_i^T A C B e_j \bigr\rrvert \le \max
_{i,j} \bigl\| e_i^T A \bigr\| \| C B
e_j \| \le\max_{i,j} \bigl\| e_i^T
A \bigr\| \| C\|_2 \| B e_j \|
\\
&=& \max_{i,j} \sqrt{ e_i^T A
A^T e_i } \| C\|_2 \sqrt{
e_j^T B^T B e_j } \le\sqrt{ \bigl\| A
A^T\bigr\|_{\infty} } \sqrt{ \bigl\| B^T B\bigr\|_{\infty} }
\| C\| _2.
\end{eqnarray*}
Here, the first equality follows from an observation in the proof of
\cite{Chandrasekaran11}, Proposition~4, and the first inequality
follows by the Cauchy--Schwarz inequality. The lemma follows.
\end{pf*}

\begin{pf*}{Proof of Lemma~\ref{lemma:contraction}}
Let $D\in\RR^{d\times d}$ be an arbitrary diagonal matrix, and $M\in
\RR
^{d\times d}$ an arbitrary matrix. We first prove inequality (\ref
{eq:calP_T_inf_to_inf}). Using equation (\ref{eq:P_Tbar}), we have
%
\begin{equation}
\|\calP_{\T} D\|_{\infty} \le\bigl\llVert \bigl(\bar U{\bar
U}^T \bigr) D \bigr\rrVert _{\infty} + \bigl\llVert D \bigl(
\bar U{\bar U}^T \bigr) \bigr\rrVert _{\infty} + \bigl\llVert
\bigl(\bar U{\bar U}^T \bigr) D \bigl(\bar U{\bar U}^T
\bigr) \bigr\rrVert _{\infty}. \label{eq:contraction_1}
\end{equation}
We bound the terms on the right-hand side
of inequality (\ref{eq:contraction_1}) separately. Note that, although
$\|\cdot\|_{\infty}$, the element-wise $\ell_{\infty}$ norm, is not
submultiplicative, it is easy to see that the inequality $\|A B\|
_{\infty}\le\|A\|_{\infty}\|B\|_{\infty}$ holds when at least one of
$A,B$ is a diagonal matrix. Hence, we have
%
\begin{equation}
\max \bigl\{ \bigl\llVert \bigl(\bar U{\bar U}^T \bigr) D \bigr
\rrVert _{\infty}, \bigl\llVert D \bigl(\bar U{\bar U}^T \bigr)
\bigr\rrVert _{\infty} \bigr\} \le \|\bar U {\bar U}^T
\|_{\infty} \|D\|_{\infty} = \gamma\|D\|_{\infty}.
\label{eq:contraction_2}
\end{equation}
Next, setting $A=B= \bar U{\bar U}^T$ and $C=D$ in Lemma~\ref
{lemma:AMB_infty} yields
%
\begin{eqnarray}\label{eq:contraction_3}
\bigl\| \bigl(\bar U{\bar U}^T \bigr) D \bigl(\bar U{\bar
U}^T \bigr) \bigr\| _{\infty} &\le&\sqrt{ \bigl\| \bar U {\bar
U}^T \bar U {\bar U}^T \bigr\| _{\infty
} \bigl\| \bar U {
\bar U}^T \bar U {\bar U}^T \bigr\|_{\infty} } \|D
\|_2
\nonumber
\\[-8pt]
\\[-8pt]
\nonumber
&=& \sqrt{\bigl \| \bar U {\bar U}^T\bigr \|_{\infty} \bigl\| \bar U {\bar
U}^T \bigr\| _{\infty} } \|D\|_2 = \gamma\|D
\|_{\infty}.
\end{eqnarray}
Here, the final equality follows because $D$ is diagonal and so $\|D\|
_2=\|D\|_{\infty}$. Finally, plugging inequalities (\ref
{eq:contraction_2}) and (\ref{eq:contraction_3}) into inequality (\ref
{eq:contraction_1}) yields inequality (\ref{eq:calP_T_inf_to_inf}).

To prove inequality (\ref{eq:calP_T_inf_to_inf_a}), note that, again by
equation (\ref{eq:P_Tbar}), we have
%
\begin{equation}
\|\calP_{\T} M\|_{\infty} \le\bigl\llVert \bigl(\bar U{\bar
U}^T \bigr) M \bigr\rrVert _{\infty} + \bigl\llVert \bigl(
I_d - \bar U{\bar U}^T \bigr) M \bigl(\bar U{\bar
U}^T \bigr) \bigr\rrVert _{\infty}. \label{eq:contraction_4}
\end{equation}
Setting $A=UU^T$, $B=I_d$ and $C=M$ in Lemma~\ref{lemma:AMB_infty} yields
%
\begin{equation}
\bigl\llVert \bigl(\bar U{\bar U}^T \bigr) M \bigr\rrVert
_{\infty} \le \sqrt {\gamma}\|M\|_2, \label{eq:contraction_5}
\end{equation}
while setting $A=  (I_d - \bar U{\bar U}^T )$, $B= U U^T$ and
$C=M$ in Lemma~\ref{lemma:AMB_infty} yields
%
\begin{equation}
\bigl\llVert \bigl( I_d - \bar U{\bar U}^T \bigr) M
\bigl( \bar U{\bar U}^T \bigr) \bigr\rrVert _{\infty} \le\sqrt{
\gamma}\|M\|_2. \label{eq:contraction_6}
\end{equation}
Inequality (\ref{eq:calP_T_inf_to_inf_a}) then follows from
inequalities (\ref{eq:contraction_4}), (\ref{eq:contraction_5}) and
(\ref{eq:contraction_6}).

Finally, we prove inequality (\ref{eq:calP_T_1_to_1}). Note that $\|
\cdot\|_{\infty}$ and $\|\cdot\|_{1}$ are dual norms. Then
\begin{eqnarray*}
\|\calP_{\Omega} \calP_{\T} M\|_{1} &=& \sup
_{N\dvt \|N\|_{\infty}\le1} \langle \calP_{\Omega}\calP_{\T} M, N
\rangle = \sup_{N\dvt \|N\|
_{\infty
}\le1} \langle \calP_{\T} M,
\calP_{\Omega} N \rangle\\
& = &\sup_{N\dvt \|N\|
_{\infty}\le1} \langle M,
\calP_{\T} \calP_{\Omega} N \rangle
\nonumber
\\
&\le&\sup_{N\dvt \|N\|_{\infty}\le1} \|M\|_{1} \|\calP_{\T}
\calP _{\Omega} N\|_{\infty} \le3\gamma\sup_{N\dvt \|N\|_{\infty}\le1}
\|M\|_{1} \| \calP _{\Omega} N\|_{\infty}
\\
&\le&3\gamma\sup_{N\dvt \|N\|_{\infty}\le1} \|M\|_{1} \| N
\|_{\infty} \le 3\gamma\|M\|_{1},
\end{eqnarray*}
using first H\"{o}lder's inequality and then inequality (\ref
{eq:calP_T_inf_to_inf}) on the diagonal matrix $\calP_{\Omega} N$.
\end{pf*}

\begin{pf*}{Proof of Lemma~\ref{lemma:invertibility}}
We assume that $\gamma<1/3$. Let $M\in\RR^{d\times d}$ be an arbitrary
matrix. Applying inequality (\ref{eq:calP_T_inf_to_inf}) in Lemma~\ref
{lemma:contraction} on the diagonal matrix $\calP_{\Omega} M$, we obtain
\[
\|\calP_{\T} \calP_{\Omega} M\|_{\infty} \le3\gamma\|
\calP _{\Omega} M\| _{\infty} \le3\gamma\|M\|_{\infty}.
\]
Then, by the triangle inequality,
\[
\bigl\|(\calId-\calP_{\T}\calP_{\Omega})M\bigr\|_{\infty} \ge\|M
\|_{\infty
} - \| \calP_{\T}\calP_{\Omega} M
\|_{\infty} \ge(1-3\gamma) \|M\| _{\infty}.
\]
Because $\gamma<1/3$, $\|(\calId-\calP_{\T}\calP_{\Omega})M\|
_{\infty
}=0$ if and only if $\|M\|_{\infty}=0$, or equivalently $M=0$. Thus,
the null space of the operator $\calId-\calP_{\T}\calP_{\Omega}$
is the
zero matrix. Hence, $\calId-\calP_{\T}\calP_{\Omega}$ is a bijection,
and thus invertible.

Next, we prove inequality (\ref{eq:I_calPT_calPO_inversion}). Let
$(\calId-\calP_{\T}\calP_{\Omega})^{-1}M = M'$, or equivalently
$M=(\calId-\calP_{\T}\calP_{\Omega})M'$. Then, analogues to the
derivation above, we have
\[
\|M\|_{\infty} = \bigl\|(\calId-\calP_{\T}\calP_{\Omega})M'
\bigr\|_{\infty
} \ge (1-3\gamma)\bigl\|M'\bigr\|_{\infty} = (1-3
\gamma)\bigl\|(\calId-\calP_{\T}\calP _{\Omega})^{-1}M
\bigr\|_{\infty},
\]
which is inequality (\ref{eq:I_calPT_calPO_inversion}).
\end{pf*}


\section{Bounding the diagonal deviation of the low-rank matrix estimator}
\label{sec:diagonal_deviation}

We commented in the remark following Corollary~\ref{cor:recovery_bound}
that the choice $c=1$ in $G_c$ is sufficient for proving a bound on $\|
\widetilde{\Sigma}-\Sigma\|_F^2$. On the other hand, exactly as
commented in \cite{Zhang12}, and as is apparent from Theorem~\ref
{thmm:recovery_bound_raw},\vspace*{1pt} choosing $c>1$ leads to a bound for $\calP
_{\Tp}\wt\Theta$, that is, the portion of $\wt\Theta$ orthogonal
to the
tangent space $\T$. As in \cite{Hsu11}, such a bound can be further
exploited to control $\calP_{\Omega}(\wt\Theta-\Theta^*)$, which
in our
case is the deviation of $\wt\Theta$ from $\Theta^*$ on the diagonal.
We first present a lemma toward the bound for $\calP_{\Omega}(\wt
\Theta
-\Theta^*)$. The proof of the lemma is a straightforward modification
of the proof of \cite{Hsu11}, Theorem~7; for completeness, we include
it here. We employ the same notation as in Section~\ref
{sec:certificate_construction}, and we denote $E=\wh\Sigma-\Sigma$ again.

\begin{lemma}
\label{lemma:Theta_hat_diagonal}
Let $r=\operatorname{rank}(\bar\Theta)$. We have
%
\begin{equation}
(1-3\gamma) \bigl\|\calP_{\Omega}\bigl(\wt\Theta-\Theta^*\bigr)
\bigr\|_1 \le\bigl\|\calP _{\Tp
}\bigl(\wt\Theta-\Theta^*\bigr)\bigr\|_* +
4r \bigl(\|E\|_2+\mu \bigr). \label{eq:Theta_hat_diagonal_lemma}
\end{equation}
\end{lemma}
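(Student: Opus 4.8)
The plan is to follow the structure of \cite[Theorem~7]{Hsu11}: decompose the diagonal deviation $\calP_{\Omega}(\wt\Theta-\Theta^*)$ along the tangent space $\T=T(\bar\Theta)$ and its complement $\Tp$, and feed the off-diagonal part through the optimality condition of the convex program~(\ref{eq:convex_program}).

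I would first record the two ingredients this rests on. From the optimality of $\wt\Theta$, exactly as in the proof of Lemma~\ref{lemma_inequality}, there is a subgradient $\Psi=\wh\Sigma_o-\wt\Theta_o\in\mu\partial\|\wt\Theta\|_*$; two facts about it will be used, namely $\calP_{\Omega}\Psi=0$ (since $\Psi$ is off-diagonal) and $\|\Psi\|_2\le\mu$ (since every element of $\partial\|\cdot\|_*$ has operator norm at most one, see \cite{Watson92}). Next, since the off-diagonal entries of $\Sigma$ and $\Theta^*$ coincide and $E$ has zero diagonal, $\wh\Sigma_o=\Theta^*_o+E$, whence the off-diagonal part of the error satisfies $(\wt\Theta-\Theta^*)_o=\wt\Theta_o-\Theta^*_o=E-\Psi$. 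I would also record the elementary bound $\|\calP_{\Omega}M\|_1\le\|M\|_*$, valid for every $M\in\RR^{d\times d}$: with $D=\text{diag}^{\star}(\sgn([M]_{11}),\dots,\sgn([M]_{dd}))$ one has $\|\calP_{\Omega}M\|_1=\langle D,M\rangle\le\|D\|_2\|M\|_*\le\|M\|_*$ by trace duality.

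The heart of the argument is then the identity
\begin{align*}
\calP_{\Omega}(\wt\Theta-\Theta^*)
&=\calP_{\Omega}\calP_{\T}(\wt\Theta-\Theta^*)+\calP_{\Omega}\calP_{\Tp}(\wt\Theta-\Theta^*)\\
&=\calP_{\Omega}\calP_{\T}(E-\Psi)+\calP_{\Omega}\calP_{\T}\calP_{\Omega}(\wt\Theta-\Theta^*)+\calP_{\Omega}\calP_{\Tp}(\wt\Theta-\Theta^*),
\end{align*}
where in the second line I split $\wt\Theta-\Theta^*=(\wt\Theta-\Theta^*)_o+\calP_{\Omega}(\wt\Theta-\Theta^*)=(E-\Psi)+\calP_{\Omega}(\wt\Theta-\Theta^*)$ inside $\calP_{\Omega}\calP_{\T}(\cdot)$ and use linearity. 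Applying $\|\cdot\|_1$ and the triangle inequality, I would bound the three terms as follows: the middle term by $3\gamma\|\calP_{\Omega}(\wt\Theta-\Theta^*)\|_1$ via Inequality~(\ref{eq:calP_T_1_to_1}) (with $M=\calP_{\Omega}(\wt\Theta-\Theta^*)$); the last term by $\|\calP_{\Tp}(\wt\Theta-\Theta^*)\|_*$ via the elementary bound above; and the first term, using that every matrix in $\T$ has rank at most $2r$, that $\|M\|_*\le\operatorname{rank}(M)\|M\|_2$, the contraction $\|\calP_{\T}N\|_2\le 2\|N\|_2$ recorded in the preliminaries, and $\|\Psi\|_2\le\mu$, by
\[
\|\calP_{\Omega}\calP_{\T}(E-\Psi)\|_1\le\|\calP_{\T}(E-\Psi)\|_*\le 2r\,\|\calP_{\T}(E-\Psi)\|_2\le 4r\,\|E-\Psi\|_2\le 4r(\|E\|_2+\mu).
\]
Moving $3\gamma\|\calP_{\Omega}(\wt\Theta-\Theta^*)\|_1$ to the left-hand side gives the claimed inequality.

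I do not expect a real obstacle. The only points needing a little care are the bookkeeping in the tangent-space decomposition — specifically, routing the off-diagonal part of the error through the optimality identity $(\wt\Theta-\Theta^*)_o=E-\Psi$ so that a rank-$2r$, operator-norm estimate becomes available for it — and verifying the two scalar facts $\|\calP_{\Omega}M\|_1\le\|M\|_*$ and $\|M\|_*\le\operatorname{rank}(M)\|M\|_2$, each of which is a one-line computation.
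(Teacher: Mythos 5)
Your proposal is correct and follows essentially the same route as the paper: an optimality subgradient $\Psi=\wh\Sigma_o-\wt\Theta_o$, a decomposition of $\calP_{\Omega}(\wt\Theta-\Theta^*)$ through $\calP_{\T}$ and $\calP_{\Tp}$ with the off-diagonal error routed via $(\wt\Theta-\Theta^*)_o=E-\Psi$, the contraction~(\ref{eq:calP_T_1_to_1}) to absorb the $3\gamma$ term, and the rank-$2r$ operator-norm bound for the remainder. The only cosmetic differences are that you bound $\calP_{\T}(E-\Psi)$ jointly rather than term by term and justify $\|\calP_{\Omega}M\|_1\le\|M\|_*$ by trace duality rather than by \cite[Theorem~5.5.19]{Horn91}; both are valid.
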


\begin{pf}
Let $\wt\Delta_{\Theta} = \wt\Theta-\Theta^*$. The optimality of
$\wt
\Theta$ for the convex program (\ref{eq:convex_program}) implies that
we can fix $\Psi\in\mu\partial\|\wt\Theta\|_*$ such that
equation (\ref{eq:Theta_hat_optimality}) holds. Using $\nabla L(\wt
\Theta) = \wt\Theta_o-\wh\Sigma_o$, equation (\ref
{eq:Theta_hat_optimality}) is equivalent to
%
\begin{equation}
\wt\Delta_{\Theta} = \calP_{\Omega}\wt\Delta_{\Theta} + E -
\Psi. \label{eq:optimality_2}
\end{equation}
Applying $\calP_{\Omega}\calP_{\T}$ on both sides of equation (\ref
{eq:optimality_2}) gives
%
\begin{equation}
\calP_{\Omega}\calP_{\T} \wt\Delta_{\Theta} =
\calP_{\Omega
}\calP_{\T} \calP_{\Omega}\wt
\Delta_{\Theta} + \calP_{\Omega}\calP_{\T} E - \calP
_{\Omega}\calP_{\T} \Psi. \label{eq:10}
\end{equation}
Then, using equation (\ref{eq:10}), we have
%
\begin{eqnarray}\label{eq:DTheta_diagonal_expansion}
\calP_{\Omega}\wt\Delta_{\Theta} &=& \calP_{\Omega}
\calP_{\Tp
}\wt\Delta _{\Theta} + \calP_{\Omega}
\calP_{\T}\wt\Delta_{\Theta}
\nonumber
\\[-8pt]
\\[-8pt]
\nonumber
&=& \calP_{\Omega}\calP_{\Tp}\wt\Delta_{\Theta} +
\calP_{\Omega
}\calP_{\T
} \calP_{\Omega}\wt
\Delta_{\Theta} + \calP_{\Omega}\calP_{\T} E - \calP
_{\Omega}\calP_{\T} \Psi.
\end{eqnarray}
We apply $\|\cdot\|_1$ on both sides of equation (\ref
{eq:DTheta_diagonal_expansion}). Note that, for any matrix $M\in\RR
^{d\times d}$, $\|\calP_{\Omega}M\|_1 = \|\calP_{\Omega}M\|_*$. In
addition, inequality (\ref{eq:calP_T_1_to_1}) implies that $\|\calP
_{\Omega}\calP_{\T} \calP_{\Omega}\wt\Delta_{\Theta}\|_1\le
3\gamma\|
\calP_{\Omega}\wt\Delta_{\Theta}\|_1$. Hence, we have
%
\begin{eqnarray}\label{eq:DTheta_diagonal_expansion_2}
\|\calP_{\Omega}\wt\Delta_{\Theta}\|_1 &\le&\|
\calP_{\Omega
}\calP_{\Tp
}\wt\Delta_{\Theta}
\|_1 + \|\calP_{\Omega}\calP_{\T} \calP
_{\Omega}\wt \Delta_{\Theta}\|_1 + \|
\calP_{\Omega}\calP_{\T} E\|_1 + \|\calP
_{\Omega
}\calP_{\T} \Psi\|_1
\nonumber
\\[-8pt]
\\[-8pt]
\nonumber
&\le&\|\calP_{\Omega}\calP_{\Tp}\wt\Delta_{\Theta}\|_* +
3\gamma \|\calP _{\Omega}\wt\Delta_{\Theta}\|_1 + \|
\calP_{\Omega}\calP_{\T} E\| _* + \| \calP_{\Omega}
\calP_{\T} \Psi\|_*.
\end{eqnarray}
Note that, for any matrix $M\in\RR^{d\times d}$, we have $\calP
_{\Omega
}M = I_d\circ M$. By \cite{Horn91}, Theorem~5.5.19, $\|I_d\circ M\|
_*\le\|M\|_*$. In addition, $\operatorname{rank}(\calP_{\T}M)\le2r$,
and so $\|\calP_{\T} M\|_*\le2r \|\calP_{\T} M\|_2\le4r\|M\|_2$.
Hence, from inequality (\ref{eq:DTheta_diagonal_expansion_2}), we
further deduce
\[
(1-3\gamma) \|\calP_{\Omega} \wt\Delta_{\Theta}\|_1
\le\|\calP _{\Tp} \wt\Delta_{\Theta}\|_* + \|
\calP_{\T} E\|_* + \|\calP_{\T} \Psi \|_* \le \|
\calP_{\Tp} \wt\Delta_{\Theta}\|_* + 4r\|E\|_2 + 4r
\|\Psi\|_2.
\]
The corollary then follows by noting that $\|\Psi\|_2\le\mu$.
\end{pf}

We now state a concrete bound for $\calP_{\Omega}(\wt\Theta-\Theta^*)$.

\begin{theorem}
\label{thmm:Theta_hat_diagonal}
Let $\mu$ and $\bar\mu$ be as in (\ref{MU_2}) and (\ref{BARMU_2}),
respectively, and let
%
\begin{equation}
\mu' = C \bigl\{ C_1 \max \bigl[ \sqrt{ \| T
\|_2} f(n,d,\alpha), f^2(n,d,\alpha) \bigr] +
C_2 f^2(n,d,\alpha) \bigr\}, \label{MU_prime_2}
\end{equation}
all with $0<\alpha<1/2$, $C_1=\pi$, $C_2=3\pi^2/16 < 1.86$, and $C=6$.
We recall $R$ as defined in (\ref{eq:gamma_R_condition}). Then, with
probability exceeding $1-2\alpha$, we have
%
\begin{eqnarray} \label{eq:Theta_hat_diagonal_thmm}
&&\bigl\|\calP_{\Omega}\bigl(\wt\Theta-\Theta^*\bigr)\bigr\|_1
\nonumber
\\[-8pt]
\\[-8pt]
\nonumber
&&\quad\le\min
_{0\le r\le R} \biggl\{ \frac{3}{2\mu'}\sum
_{j\dvt r<j\le r^*}\lambda_j^2\bigl(\Theta^*\bigr)
+ \frac
{3}{2}\sum_{j\dvt r<j\le r^*}\lambda_j
\bigl(\Theta^*\bigr) + 19 r\bar\mu \biggr\}.
\end{eqnarray}
\end{theorem}

\begin{pf}
We fix $c=2$, and $\gamma'=1/9$. Then inequality (\ref
{eq:gamma_r_condition}) holds with the substitution of $\gamma_r$ by~$\gamma'$. Let $A$ be the event
%
\begin{equation}
A = \biggl\{ \biggl(\frac{1}{c}-\frac{\gamma'}{1-3\gamma'} \biggr)^{-1}
\biggl(\frac{2\sqrt{\gamma'}}{1-3\gamma'} + 1 \biggr) \|E\|_2 \le \mu'
\le\mu\le\bar\mu \biggr\}. \label{eq:event_A_gamma_p_2}
\end{equation}
Hence, on the event $A$, both $\mu'\le\mu\le\bar\mu$, and
inequality (\ref{eq:mu_gamma_r_condition}) with the substitution of
$\gamma_r$ by $\gamma'$, hold. Note that the multiplicative factor in
front of $\|E\|_2$ on the right-hand side of (\ref
{eq:event_A_gamma_p_2}) exactly equals $C=6$ with our choices of $c$
and $\gamma'$. Then, by Theorem~\ref{thmm:main_operator_norm_bound} and
our choices (\ref{MU_prime_2}), (\ref{MU_2}) and (\ref{BARMU_2}) of
$\mu
'$, $\mu$ and $\bar\mu$, we conclude that $\PP(A)\ge1-\alpha
-\alpha
^2/4>1-2\alpha$, and for the rest of the proof we focus on the event $A$.

Note that Lemma~\ref{lemma:Theta_hat_diagonal} provides a bound on $\|
\calP_{\Omega}(\wt\Theta-\Theta^*)\|_1$ through the chosen $\bar
\Theta$
and the associated $\Tp$. We fix an arbitrary $0\le r\le R$, and choose
$\bar\Theta=\Theta^*_r$, which implies that $\gamma=\gamma_r$. Then
\begin{eqnarray*}
\calP_{\Tp} \bigl(\wt\Theta-\Theta^*\bigr)& =& \calP_{T(\Theta_r^*)^{\perp
}}\wt
\Theta- \calP_{T(\Theta_r^*)^{\perp}}\Theta^*\\
& = &\calP_{T(\Theta
_r^*)^{\perp}}\wt\Theta- \bigl(
\Theta^*-\Theta^*_r\bigr)
\end{eqnarray*}
and so
%
\begin{equation}
\bigl\|\calP_{\Tp} \bigl(\wt\Theta-\Theta^*\bigr)\bigr\|_* \le\|
\calP_{T(\Theta
_r^*)^{\perp
}}\wt\Theta\|_*+ \sum_{j>r}
\lambda_j\bigl(\Theta^*\bigr). \label{eq:Theta_hat_nuclear_star}
\end{equation}
Plugging inequality (\ref{eq:Theta_hat_nuclear_star}) into
inequality (\ref{eq:Theta_hat_diagonal_lemma}) with the substitution of
$\gamma$ by $\gamma_r$ yields
%
\begin{eqnarray}\label{eq:Theta_hat_diagonal_1}
&&\bigl\|\calP_{\Omega}\bigl(\wt\Theta-\Theta^*\bigr)\bigr\|_1
\nonumber
\\[-8pt]
\\[-8pt]
\nonumber
&&\quad\le
\biggl(\frac
{1}{1-3\gamma
_r} \biggr) \biggl[ \|\calP_{T(\Theta_r^*)^{\perp}}\wt\Theta\|_*+
\sum_{j>r}\lambda_j\bigl(\Theta^*\bigr) +
4r \bigl(\|E\|_2+\mu \bigr) \biggr].
\end{eqnarray}

As argued in the proof of Theorem~\ref{thmm:recovery_bound_optimize},
because inequalities (\ref{eq:gamma_r_condition}) and (\ref
{eq:mu_gamma_r_condition}) hold with the substitution of $\gamma_r$ by
$\gamma'$, we conclude that inequalities (\ref{eq:gamma_r_condition})
and (\ref{eq:mu_gamma_r_condition}) hold in terms of $\gamma_r$. Hence,
by Corollary~\ref{cor:recovery_bound}, inequality (\ref
{eq:recovery_bound_2}) applies, and we have
%
\begin{equation}
\| \calP_{T(\Theta_r^*)^{\perp}}\wt\Theta\|_* \le\frac{1}{\mu
} \biggl[ \sum
_{j>r}\lambda_j^2\bigl(
\Theta^*\bigr) + 8 r \mu^2 \biggr]. \label{eq:Theta_hat_Tp_nuclear}
\end{equation}
Plugging inequality (\ref{eq:Theta_hat_Tp_nuclear}) into
inequality (\ref{eq:Theta_hat_diagonal_1}), we have
%
\begin{eqnarray}\label{eq:Theta_hat_diagonal_1_final}
\bigl\|\calP_{\Omega}\bigl(\wt\Theta-\Theta^*\bigr)\bigr\|_1 &\le&
\biggl(\frac
{1}{1-3\gamma
_r} \biggr) \biggl\{ \frac{1}{\mu} \biggl[ \sum
_{j>r}\lambda _j^2\bigl(
\Theta ^*\bigr) + 8 r \mu^2 \biggr] + \sum
_{j>r}\lambda_j\bigl(\Theta^*\bigr) + 4r \bigl(\|E\|
_2+\mu \bigr) \biggr\}
\nonumber
\\
&\le&\frac{3}{2} \biggl\{ \frac{1}{\mu} \sum
_{j>r}\lambda _j^2\bigl(\Theta ^*
\bigr) + \sum_{j>r}\lambda_j\bigl(
\Theta^*\bigr) + \frac{38}{3} r \mu \biggr\}
\\
&\le&\frac{3}{2} \biggl\{ \frac{1}{\mu'} \sum
_{j>r}\lambda _j^2\bigl(\Theta ^*
\bigr) + \sum_{j>r}\lambda_j\bigl(
\Theta^*\bigr) + \frac{38}{3} r \bar\mu \biggr\}. \nonumber
\end{eqnarray}
Here, the second inequality follows because $\gamma_r\le1/9$ and $\|
E\|
_2\le\mu/6$, and the last inequality follows because $\mu' \le\mu
\le
\bar\mu$. Then inequality (\ref{eq:Theta_hat_diagonal_thmm}) is
obtained by minimizing inequality (\ref{eq:Theta_hat_diagonal_1_final})
over $0\le r\le R$.
\end{pf}
\end{appendix}

\section*{Acknowledgements}

We are grateful to Han Liu for helpful discussions and for providing
independent credit to our work. We thank the area Editor and the
referees for their very constructive comments. This research is
supported in part by NSF Grants DMS-10-07444 and DMS-13-10119.

%
%

%




\printhistory
\end{document}